\documentclass{article}
\usepackage{microtype}
\usepackage{graphicx}
\usepackage{subcaption}
\usepackage{booktabs} 

\usepackage{hyperref}

\usepackage[preprint]{icml2026}

\usepackage{amsmath,amsthm}
\usepackage{amssymb}
\usepackage{mathtools}
\usepackage{aliascnt} 
\usepackage[capitalize,noabbrev]{cleveref}

\theoremstyle{plain}
\newtheorem{theorem}{Theorem}

\newaliascnt{lemma}{theorem}

\aliascntresetthe{lemma}
\crefname{lemma}{Lemma}{Lemmas}

\newaliascnt{assumption}{theorem}

\aliascntresetthe{assumption}
\crefname{assumption}{Assumption}{Assumptions}

\newaliascnt{proposition}{theorem}
\newtheorem{proposition}[proposition]{Proposition}
\aliascntresetthe{proposition}
\crefname{proposition}{Proposition}{Propositions}

\newaliascnt{corollary}{theorem}
\newtheorem{corollary}[corollary]{Corollary}
\aliascntresetthe{corollary}
\crefname{corollary}{Corollary}{Corollaries}

\newaliascnt{remark}{theorem}
\newtheorem{remark}[remark]{Remark}
\aliascntresetthe{remark}
\crefname{remark}{Remark}{Remarks}

\usepackage[textsize=tiny]{todonotes}

\usepackage{xspace}

\makeatletter
\DeclareRobustCommand\onedot{\futurelet\@let@token\@onedot}
\def\@onedot{\ifx\@let@token.\else.\null\fi\xspace}

\def\eg{\emph{e.g}\onedot} 
\def\ie{\emph{i.e}\onedot} 
\def\cf{\emph{cf}\onedot} 
 \def\vs{\emph{vs}\onedot}
\def\wrt{w.r.t\onedot}

\makeatother

\newlength{\bsize}
\newcommand{\stdsize}{\scriptsize}

\RequirePackage[format=plain,labelformat=simple,labelsep=period,font=small,compatibility=false]{caption}
\RequirePackage[font=footnotesize,skip=3pt,subrefformat=parens]{subcaption}

\usepackage{amsthm}
\usepackage{pifont}
\usepackage{mathtools}
\usepackage{multirow}
\usepackage{selectp}
\usepackage{tikz}
\usepackage{wrapfig}
\usepackage{graphicx}
\usepackage{url}
\usepackage{caption}
\usepackage{booktabs}
\usepackage{adjustbox}
\usepackage[]{mdframed}

\newcommand{\cmark}{\ding{51}}
\newcommand{\xmark}{\ding{55}}

\def\R{\mathbb{R}}

\newcommand{\notR}{\mathrel{\not\mathrel{R}}}
\newcommand{\kernelized}{kernelized\xspace}
\newcommand{\Kernelized}{Kernelized\xspace}

\def\Ent{\operatorname{Ent}}
\def\HH{\operatorname{H}}
\def\GG{\operatorname{G}}
\def\FF{\operatorname{F}}
\def\tr{\operatorname{tr}}
\def\II{\Pi}
\newcommand{\aug}[1]{\widetilde{#1}}
\def\vx{\mathbf{x}}
\def\vy{\mathbf{y}}
\def\ve{\mathbf{e}}
\def\vh{\mathbf{h}}
\def\vw{\mathbf{\theta}}
\def\vz{\mathbf{z}}
\def\vl{\mathbf{l}}
\def\vp{\mathbf{p}}
\def\vv{\mathbf{v}}
\def\vu{\mathbf{u}}
\def\E{\mathbb{E}}

\icmltitlerunning{%
Improving Adversarial Robustness of Attribution via Implicit Regularization}

\begin{document}

\twocolumn[
  \icmltitle{%
  Improving Adversarial Robustness of Attribution via Implicit Regularization
  }



  \icmlsetsymbol{equal}{*}

  \begin{icmlauthorlist}
    \icmlauthor{Amir Mehrpanah}{kth}
    \icmlauthor{Matteo Gamba}{brown}
    \icmlauthor{Hossein Azizpour}{kth,scilab}
  \end{icmlauthorlist}

  \icmlaffiliation{kth}{Department of Computer Science, KTH Royal Institute of Technology, Stockholm, Sweden}
  \icmlaffiliation{scilab}{Science for Life Laboratory, Stockholm, Sweden}
  \icmlaffiliation{brown}{Department of Computer Science, Brown University, USA}

  \icmlcorrespondingauthor{Amir Mehrpanah}{amirme@kth.se}

  \icmlkeywords{Machine Learning, ICML}

  \vskip 0.3in
]



\printAffiliationsAndNotice{}  
\begin{abstract}
The adversarial robustness of attributions is a fundamental requirement for reliable explainability in deep learning, yet existing approaches typically rely on computationally expensive explicit regularization. In this work, we show that attribution robustness can arise implicitly from the learning dynamics of standard stochastic gradient descent. We theoretically motivate this effect through connections between parameter-space and input-space curvature, and validate it across architectures, datasets, and attribution methods, with negligible computational overhead. In contrast, we prove that such robustness gains often does not transfer to attention-based attribution under softmax normalization, due to inherent entropy constraints, and we validate this limitation experimentally. Finally, we show that replacing softmax attention with kernel-based attention restores the robustness gains in transformer models. Our results highlight learning dynamics as a principled and practical mechanism for robust explainability, and reveal fundamental limitations of attention-based attribution under normalization.
\end{abstract}
\section{Introduction}
Attribution methods are among the most widely used tools in explainable AI (XAI), providing a means to relate a model’s prediction back to the input features~\citep{ancona_towards_2018, arrieta_explainable_2019, bai_concept_2022}. Their simplicity and broad applicability underpin their success across domains such as computer vision~\citep{adebayo_sanity_2020}, natural language processing~\citep{deyoung_eraser_2020}, graph learning~\citep{lu_goat_2024}, and tabular modeling~\citep{haug_baselines_2021}.

We study attributions derived from input gradients~\citep{simonyan_deep_2014}, including common post-hoc smoothing variants, as well as attributions based on attention weights~\citep{abnar_quantifying_2020}, which have become prominent with the rise of attention-based models.

Despite their widespread use, gradient- and attention-based attributions often behave unreliably: maps can be inconsistent~\citep{tomsett_sanity_2019}, misleading~\citep{liu_rethinking_2022}, or sensitive to perturbations irrelevant to the decision~\citep{kindermans_reliability_2017, alvarez-melis_robustness_2018}. They rely on fragile heuristics~\citep{jain_attention_2019, bansal_sam_2020, adebayo_sanity_2020} and they are vulnerable to imperceptible adversarial manipulations~\citep{ghorbani_interpretation_2018, pruthi_learning_2020}. 

These issues raise a fundamental question: \emph{how can we improve the robustness of attribution?}

In this work, we focus on two main classes of attributions--gradient and attention based--and study their robustness to adversarial perturbations through implicit forms of regularizing input curvature of learnt functions. 

\begin{figure*}[ht]
\centering
\includegraphics[width=\textwidth]{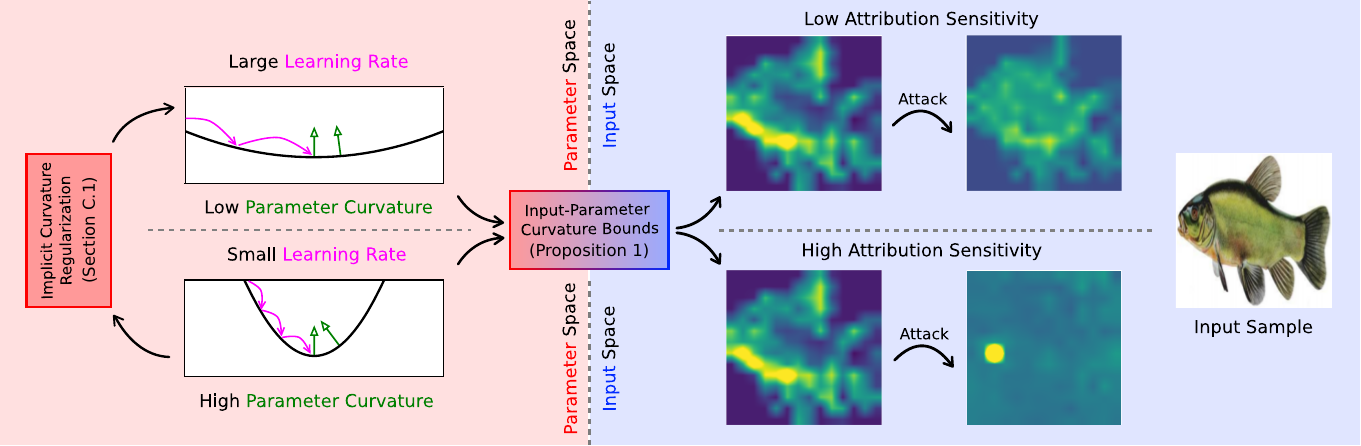}
\caption{
\textbf{Implicit Regularization; An Efficient Alternative to Attribution Robustness.}
This figure summarizes the key mechanisms underlying attribution robustness. In CNNs and MLPs, training near the edge of stability biases SGD toward flatter regions of the parameter landscape, while smaller learning rates favor sharper minima. By linking established results on SGD implicit regularization with parameter–input curvature transfer (\cref{theorem:ijr-is-good-for-input-curvature}), we show that flatter parameter regions yield more robust gradient-based attributions. For ViTs with softmax attention, robustness is instead governed by attention entropy (\cref{lemma:attention-entropy-notR}). Finally, we demonstrate that implicit regularization can also improve attention-based attribution robustness when softmax is replaced by \kernelized attention in ViT-B/16.
\label{fig:teaser}
}
\end{figure*}

\paragraph{Robustness of Gradient-based Attribution.} Since adversarial sensitivity of gradient-based attribution is tied to input curvature~\citep{ghorbani_interpretation_2018, moosavi-dezfooli_robustness_2018}, prior work has explored smoothing activation functions~\citep{dombrowski_towards_2020}, explicit curvature regularization~\citep{wang_smoothed_2020, chen_robust_2023}, and adversarial training~\citep{kamath_rethinking_2023}. 
These strategies, however, face practical limitations:
the benefits of activation smoothing are unclear for architectures such as ViT, which already exploit smooth activations such as GELU~\citep{dosovitskiy_image_2021, hendrycks2016gaussian} by default.
Also, existing theoretical analysis~\citep{daniely_toward_2017,nwankpa_activation_2018}, as well as established initialization schemes~\citep{he_delving_2015,mirzadeh_relu_2023}, are largely centered around ReLU.
Moreover, explicit curvature regularization and adversarial training relies on approximating second-order derivatives, leading to prohibitive computational and memory overhead at scale, which limits the scalability to large models.

\nocite{noauthor_google-researchvision_transformer_2025}

\paragraph{Robustness of Attention-based Attribution.}
Although attribution fragility also affects attention-based methods~\citep{liu_rethinking_2022, jain_attention_2019, serrano_is_2019}, robustness techniques have largely focused on gradient-based attributions. This raises the question whether these improvements transfer to attention-based settings.

\paragraph{Implicit Regularization in Gradient-Based Attribution.}
The limitations in gradient-based approaches motivate an alternative route to robust attributions—one that avoids explicit curvature penalties and remains architecture-agnostic.
Leveraging recent insights on the implicit curvature regularization induced by SGD, and on connections between curvature in parameter and input spaces~\citep{ma_linear_2021, dherin2022neural, gamba_varied_2023}, we argue that SGD provides a mechanism for indirectly controlling input curvature~\citep{moosavi-dezfooli_robustness_2018, zhu_catapults_2024, li_loss_2024} (see~\cref{fig:teaser,sec:implicit-control-of-curvature}).
This yields a simple, efficient, and architecture-agnostic approach to improving gradient-based attribution robustness, which we refer to by \emph{Implicit Curvature Regularization} (ICR).

\paragraph{Implicit Regularization in Attention-Based Attribution.}
Because attention-based attributions are more common in attention-based architectures, we examine their robustness separately. For softmax attention, we argue theoretically and empirically that techniques improving gradient-based attributions do not reliably transfer to attention-based methods and can even degrade their robustness. We trace this issue to the softmax operation and demonstrate that \kernelized attention modules restore transferability, yielding robustness gains comparable to those seen for gradient-based methods.

\setlength{\bsize}{0.23\textwidth}

\begin{figure*}[ht]
\centering
\begin{subfigure}[b]{\bsize}
    \centering
    \includegraphics[width=\textwidth]{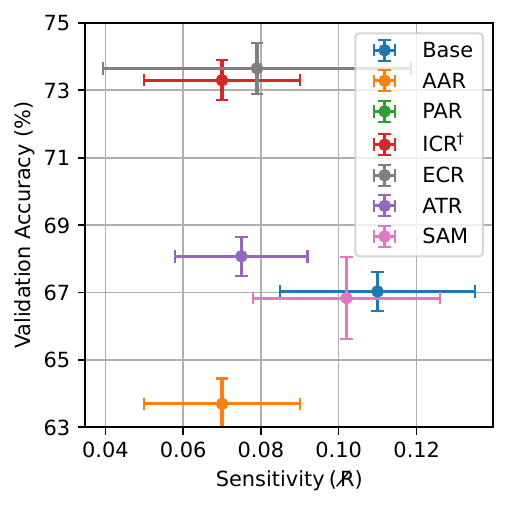}
    \caption{}
\end{subfigure}
\begin{subfigure}[b]{\bsize}
    \centering
    \includegraphics[width=\textwidth]{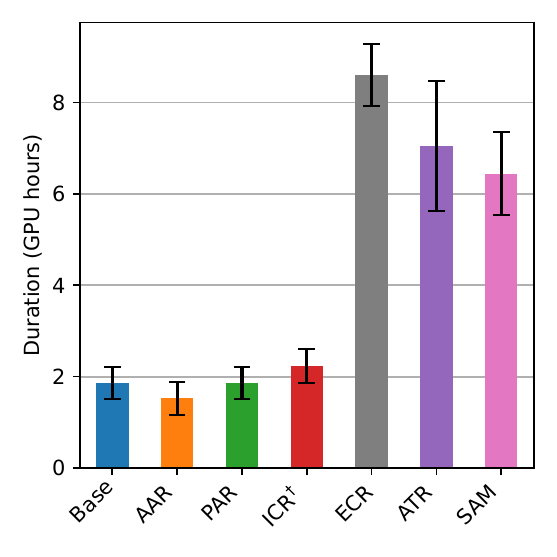}
    \caption{}
\end{subfigure}
\begin{subfigure}[b]{\bsize}
    \centering
    \includegraphics[width=\textwidth]{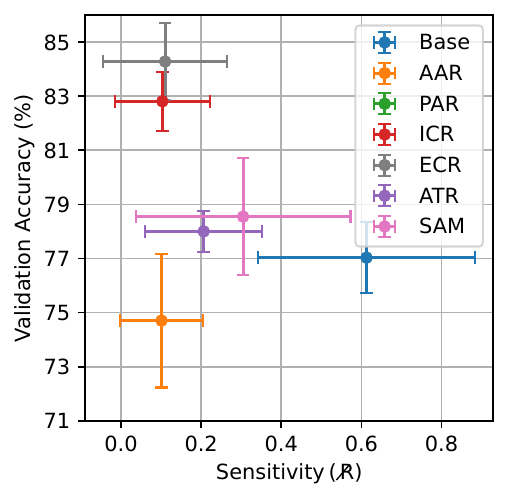}
    \caption{}
\end{subfigure}
\begin{subfigure}[b]{\bsize}
    \centering
    \includegraphics[width=\textwidth]{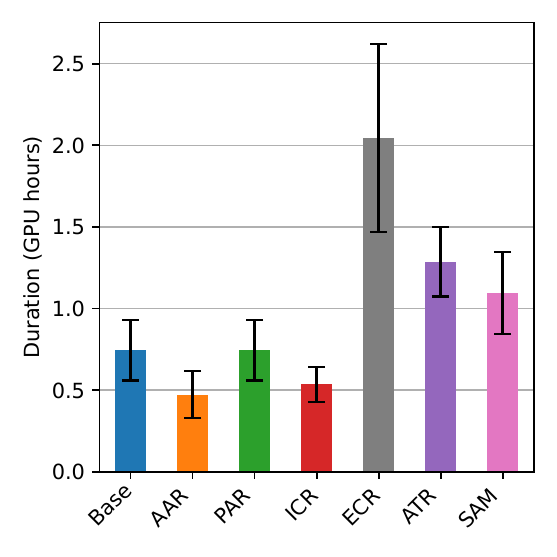}
    \caption{}
\end{subfigure}
\caption{\textbf{Tradeoffs in Robustness Methods for Gradient-based Attribution.}
Adversarial sensitivity of gradient-based attribution (measured by $\notR$, \cref{eq:notR}) shown on the x-axes of (a) and (c), with validation accuracy on the y-axes. Panels (c,d) report results for ViT-B/16 and (a,b) for ResNet-50 on Imagenette. To illustrate computational tradeoffs, training times on an NVIDIA A100 are included in (b) and (d). ICR$^\dagger$ denotes the variant of implicit curvature regularization with differential learning rates for ViT (see \cref{sec:differential-learning-rates-for-vits}). Across both architectures, ICR achieves competitive accuracy with minimal overhead, offering a favorable balance between robustness, performance and computational cost (see \cref{fig:spider}). ECR attains the highest accuracy at roughly $4\times$ the cost of ICR; and finally, ATR and SAM incur similar overhead due to requiring a second backward pass.}
\label{fig:tradeoffs-of-methods}
\end{figure*}

\paragraph{Outline.}
\cref{sec:notation} introduces notation and formulates attribution robustness via the input Hessian.
\cref{sec:theory-of-attribution-robustness} establishes the link between parameter- and input-space curvature.
Then, drawing on this analysis, it motivates our approach to robust gradient-based attribution.
\cref{sec:atten-based-attrib-not-cool} extends the analysis to attention-based attribution, characterizes its distinct robustness properties, and proposes a \kernelized attention for transferability of gradient-based robustness techniques.
\cref{sec:experiments} empirically assesses the effectiveness of ICR and evaluates the transferability of techniques to attention-based attribution comparing softmax \vs \kernelized attention.

\paragraph{Contributions.}
\begin{itemize}
\item We provide a unifying theoretical perspective that connects optimization-induced implicit regularization to adversarial robustness of gradient-based explanations.

\item We theoretically analyze the robustness of attention-based attribution, showing that its behavior differs fundamentally from gradient-based methods.

\item We identify the source of this discrepancy and show that unnormalized \kernelized attention restores the transferability of implicit regularization.
\end{itemize}
\section{Related Works}
\paragraph{Input--Parameter Sharpness.}
Prior work has examined relationships between curvature in parameter space and in input space, providing bounds that link the two and empirical evidence that flatter loss landscapes yield smoother input gradients~\citep{ma_linear_2021,dherin2022neural,gamba_varied_2023,gamba_lipschitz_2023}. Complementary analyses bound local Lipschitz constants for CNNs and ViTs, emphasizing the role of architectural components—such as convolution or attention—in controlling sensitivity~\citep{kim_lipschitz_2021,castin_how_2024,yudin_pay_2025,wang_direct_2023,pintore_computable_2024,sulehman_scalable_2024}.
\nocite{avant_analytical_2021,dasoulas_lipschitz_2021}

Parameter- and input-space curvature are often treated in isolation: implicit regularization and edge-of-stability analyses characterize optimization-induced control of \emph{parameter gradients}, while attribution robustness concerns the stability of \emph{input gradients}. Despite empirical correlations, it remains theoretically unclear how optimization-induced regularization of parameter gradients transfers to input gradients, which are the object of attribution. Establishing this transfer is the focus of~\cref{theorem:ijr-is-good-for-input-curvature}.

We observe that attribution robustness, for both attention- and gradient-based methods, is highly sensitive to optimization choices an aspect largely under-reported in prior work.

\paragraph{Explicit \vs Implicit Curvature Regularization.}
Explicit curvature control typically optimizes a regularized objective:
\begin{equation*}
    \min_\Theta \E_{\vx \sim \mathcal{D}}\left[\mathcal{L}(f_{\Theta}(\vx),\vy) + \lambda\Omega\right],
\end{equation*}
where $\Omega$ involves finite-difference curvature estimates, as in adversarial training (ATR)~\citep{goodfellow_explaining_2015}, or second-order terms, as in Explicit Curvature Regularization (ECR)~\citep{jastrzebski_catastrophic_2020}.
Sharpness-Aware Minimization (SAM)~\citep{foret_sharpness-aware_2021} instead biases optimization toward flatter minima via paired ascent–descent steps.
While effective, these methods are computationally costly—about $4\times$ slower—making them difficult to scale.

More recently, \citet{zhu_catapults_2024,li_analyzing_2022} analyzed the “edge of stability” phenomenon in SGD training, where transient loss spikes (“catapults”) are closely tied to \textit{implicit regularization} of the parameter curvature~\citep{lee_implicit_2023}. 

These insights, together with the relationship between input and parameter curvature motivate implicit curvature regularization (ICR) as a scalable way to indirectly improve attribution robustness at a lower computational cost.

\paragraph{Attribution Robustness and Adversarial Manipulation.}
The fragility of gradient-based attribution methods has been well documented. Attribution maps can change drastically under small perturbations in the input~\citep{ghorbani_interpretation_2018} or model parameters~\cite{heo_fooling_2019}, respond to spurious factors~\citep{kindermans_reliability_2017,adebayo_sanity_2020,liu_rethinking_2022}, or be deliberately manipulated~\citep{slack_fooling_2020,poursabzi-sangdeh_manipulating_2021,serrano_is_2019,pruthi_learning_2020}. To explain these vulnerabilities, \cite{singla_understanding_2019,ghorbani_interpretation_2018,lin_robustness_2023} linked attribution robustness to input-space curvature: sharper local geometry amplifies perturbation sensitivity. 
As a mitigation, several approaches have been proposed ranging from smoothing out ReLU, such as Softplus~\cite{dombrowski_explanations_2019,dombrowski_towards_2020}, to adding explicit input curvature regularization terms~\cite{chen_robust_2019,wang_smoothed_2020,chen_robust_2023,lim_building_2021,ivankay_far_2022,gong_structured_2024,sarkar_enhanced_2021}, and adversarial training~\cite{kamath_rethinking_2023,mangla_saliency_2020}.

These strategies face limitations in practice: activation replacements can degrade performance and explicit regularization terms do not easily scale. In contrast, we argue that gradient-based attribution robustness can be improved by implicit regularization during training, without altering the architecture or incurring prohibitive computational cost.

\section{Preliminaries and Notation}
\label{sec:notation}
Let $f_\Theta:\mathcal{D} \rightarrow \R^d$ denote a classifier on $d$ classes parameterized by $\Theta=\{\vw_1,\dots,\vw_L\}$, with logits $\vz = f_{\Theta}(\vx)\in\R^d$, defined on the input vector $\vx \in \mathcal{D}\subset\R^n$ accompanied by one-hot label $\vy$ that implicitly depends on $\vx$. 
We use $\vp = \operatorname{Softmax}\left(\vz\right)$ and the log-sum-exp operator $\operatorname{LSE}(\vz) = \log\left(\sum_j e^{z_j}\right)$, where $z_j$ is the $j$th entry of $\vz$. 
The former mappings are related by 
$\nabla_\vz \operatorname{LSE} (\vz) = \vp$ and $\nabla_{\vz} \vp=\operatorname{diag}(\vp)-\vp\vp^\top$,
We denote the gradient of the logits \wrt the input by $\nabla_\vx \vz:\R^n \rightarrow \R^n$.
The gradients can be processed post-hoc—\eg, via low-pass filtering in different attribution methods. 
As post-hoc low-pass filtering~\cite{mehrpanah_spectral_2025} is orthogonal to optimization, we examine such methods empirically in~\cref{sec:experiments}.

As our goal is to connect attribution robustness with the literature on training dynamics, we compute attributions with respect to the negative log-probability outputs, rather than the more common logit gradients $\nabla_\vx z_i$ used in the XAI literature. Specifically, we consider $-l_i$ for some class $i$, where $\vl = \vz - \operatorname{LSE}(\vz)$. This choice ensures consistency between attribution robustness and the theoretical framework on SGD and curvature regularization.
Furthermore, we consider the input Hessian\footnote{Related work studies attribution robustness via the second fundamental form; see~\cref{sec:the-second-fundamental-form-vs-hessian} for details.}, denoted by $\HH_{\vx} (\Theta):\R^n \rightarrow \R^{n \times n}$.

Next, we discuss how SGD dynamics can influence input curvature--and thus attribution robustness. As our analysis is inherently local, we omit the qualifier ``local'' for brevity.

\section{Gradient-based Attribution Robustness}
\label{sec:theory-of-attribution-robustness}
Attribution methods seek to explain model predictions by identifying the input features most responsible for the output. For such explanations to be reliable, they must be robust: small perturbations of the input should not induce large variations in the attribution.
Hence, the sensitivity of attribution methods is inherently linked to the curvature of the model in input space~\cite{singla_understanding_2019,ghorbani_interpretation_2018}.

\subsection{From Parameter Sharpness to Input Sharpness}
\label{sec:input-vs-parameter-hessian}
The input Hessian provides a direct measure of attribution robustness, yet computing or regularizing it explicitly is computationally costly (often $4\times$ slower than standard unregularized training, \cf~\cref{fig:tradeoffs-of-methods}). 
A natural question thus arises: \emph{Can we control input sharpness at a lower cost?}

Prior work shows that training near the edge of stability with SGD induces an \emph{implicit regularization} of parameter-space curvature. However, attribution robustness depends on input gradients, which live in a different space. It is therefore not immediate that controlling parameter curvature translates into stable input gradients.

Our contribution is to formalize this connection as an \emph{implicit inductive bias}: under standard optimization dynamics, training with a sufficiently large learning rate implicitly biases solutions toward lower input curvature, and hence more robust gradient-based attributions. 

To make this connection precise, we adopt standard analytical assumptions commonly used in the study of SGD implicit regularization (\eg, smooth losses, near-convergence stability regime), which provide a tractable—although idealized—framework for isolating the mechanism (see, \eg, \cite{wu_implicit_2023, lee_implicit_2023, dherin2022neural, gamba_lipschitz_2023}).

In the following proposition, we show that optimization-induced regularization of parameter curvature \emph{provably} transfers to reduced input curvature, thereby inducing an implicit bias toward attribution robustness (see~\cref{sec:proofs-and-theoretical-considerations-gradient-based} for derivations and discussion of assumptions).


\begin{proposition}[Linking SGD implicit regularization and input curvature]
Let $f_\Theta$ be a neural network trained by SGD with a constant learning rate $\eta$ on dataset $\mathcal{D}$. Assume,
\textbf{(i)} training operates in a near-convergence, linearly stable regime,  
\textbf{(ii)} the loss is twice continuously differentiable in parameters and inputs, and  
\textbf{(iii)} the aggregate layerwise scaled signal-to-noise ratio (as defined in~\cref{eq:snr}) is non-decreasing \wrt learning rate $\eta$:
\begin{equation}
    \label{eq:snr}
    c(\vx):=\sum_{l=1}^L \frac{\left\|\vh_{l-1}(\mathbf{x})\right\|_2^2}{\left\|\vw_l\right\|_2^2\left\|\nabla_{\vx} \vh_{l-1}(\mathbf{x})\right\|_2^2},
\end{equation}

Let $\Theta_1^*, \Theta_2^*$ denote the stationary SGD solutions obtained with learning rates $\eta_1<\eta_2$.
Then, in the asymptotic regime and in expectation over SGD noise,
\begin{equation}
\mathbb{E}\!\left[\lambda_{\max}\!\left(H_\vx(\Theta_2^*)\right)\right]
\;\lesssim\;
\mathbb{E}\!\left[\lambda_{\max}\!\left(H_\vx(\Theta_1^*)\right)\right].
\end{equation}
Where $\lambda_{\max}$ is the largest eigen value of input Hessian $H_\vx$.
\label{theorem:ijr-is-good-for-input-curvature}
\end{proposition}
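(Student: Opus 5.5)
The plan is to chain two links: (a) linear stability of SGD bounds a parameter-space curvature quantity by a decreasing function of $\eta$; (b) a layerwise inequality bounds input-space curvature by parameter-space curvature, with conversion factor $1/c(\vx)$ from \cref{eq:snr}.

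\textbf{Step 1: parameter-space bound.} Under assumption (i), I will invoke the standard linear-stability result for SGD near a stationary point \citep{wu_implicit_2023, lee_implicit_2023}. A minimum $\Theta^*$ that is stable for SGD with step $\eta$ must have bounded sharpness and gradient-noise magnitude. In the form I need, this is
\begin{equation*}
\mathbb{E}_{\vx}\big[\|\nabla_\Theta \nabla_\vz \ell\|_F^2\big]\;\lesssim\;\frac{C}{\eta},
\end{equation*}
which follows from $\lambda_{\max}(H_\Theta)\le 2/\eta$ together with the Gauss--Newton decomposition near convergence, where the Fisher term dominates. Hence the stationary solutions satisfy $\mathbb{E}[S(\Theta_2^*)]\lesssim \mathbb{E}[S(\Theta_1^*)]$, where $S$ denotes the parameter-curvature proxy. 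Here I use that the constraint is active in the edge-of-stability regime, so the bound is tight up to constants.

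\textbf{Step 2: transfer to input space.} Assumption (ii) licenses exchanging derivatives. Write $\vh_l=\sigma(\vw_l \vh_{l-1})$. By the chain rule, the parameter gradient of layer $l$ factors as $\nabla_{\vw_l}\ell = \delta_l\,\vh_{l-1}^\top$, and the input gradient passing through layer $l$ factors as $\nabla_\vx \ell = (\nabla_\vx \vh_{l-1})^\top \vw_l^\top \delta_l$. This gives
\begin{align*}
\|\nabla_{\vw_l}\ell\|^2 &= \|\delta_l\|^2\,\|\vh_{l-1}\|^2,\\
\|\nabla_\vx \ell\|^2 &\le \|\nabla_\vx \vh_{l-1}\|^2\,\|\vw_l\|^2\,\|\delta_l\|^2.
\end{align*}
Dividing and summing over $l$, following \citet{dherin2022neural, gamba_lipschitz_2023}, yields
\begin{equation*}
c(\vx)\,\|\nabla_\vx \ell\|^2 \le L\,\|\nabla_\Theta \ell\|^2.
\end{equation*}
I will repeat this argument one derivative higher, differentiating once more in $\vz$ and bounding the Jacobians of $\delta_l$. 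The aim is a bound of the form
\begin{equation*}
\lambda_{\max}(H_\vx)\;\lesssim\;\frac{L}{c(\vx)}\,\|\nabla_\Theta\nabla_\vz \ell\|^2\,\|\nabla_\vz\vp\|,
\end{equation*}
using the identity $\nabla_\vz\vp=\operatorname{diag}(\vp)-\vp\vp^\top$. The term involving second derivatives of the activations is dropped to leading order near convergence, following the Gauss--Newton argument.

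\textbf{Step 3: combine.} I take expectations over the SGD noise and the data. Assumption (iii) makes $1/c$ non-increasing in $\eta$. Step 1 makes the parameter factor non-increasing in $\eta$. The product of two nonnegative, non-increasing factors is itself non-increasing, which gives the claimed $\lesssim$.

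\textbf{Main obstacle.} The hardest part is Step 2 at second order. The input Hessian contains cross terms from the curvature of $\vh_{l-1}$ in $\vx$, and these have no parameter counterpart. I would first try to absorb them using the near-convergence Gauss--Newton approximation. Failing that, I would restrict to piecewise-linear activations, where those cross terms vanish almost everywhere.

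A second difficulty is that the expectation of the product in Step 3 need not factor. I would handle this by bounding $1/c$ uniformly, via its supremum over the data, before taking expectations.
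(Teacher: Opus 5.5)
Your skeleton matches the paper's: an SGD stability bound on parameter curvature, the transfer inequality \cref{eq:nabla-weight-bounds-nabla-input} with factor $1/c(\vx)$, Gauss--Newton, and assumption (iii). However, Step 1 does not deliver the quantity Step 2 consumes. The right-hand side of the transfer inequality is a Frobenius norm of the parameter Jacobian, i.e.\ $\tr(\GG_\vw)$. The bound $\lambda_{\max}(\HH_\vw)\le 2/\eta$ controls only the top eigenvalue, and converting it to a trace bound costs a factor $\operatorname{rank}(\GG_\vw)$ (up to samples $\times$ classes). That the sharpness constraint is active at the edge of stability says nothing about the trace tracking $1/\eta$, so ``tight up to constants'' carries no information here. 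The missing ingredient is the SGD-specific result the paper uses, $\tr(\GG_\vw(\Theta^*))\le 2/\eta$ (\cref{eq:ijr-condition}). It stems from the structure of SGD noise and is not implied by the GD sharpness condition.

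You also do not need a second-order layerwise argument. Replace \emph{both} Hessians by their Gauss--Newton parts; the functional parts $\FF_\vw,\FF_\vx$ are discarded by the same $p_j\approx 1$ argument, so your cross terms are handled symmetrically. The comparison then becomes purely one between Jacobians, $\E[\lambda_{\max}(\GG_\vx)]\le \frac{1}{c(\vx)}\E[\tr(\GG_\vw)]$, provided the impurity matrix weights both sides identically (treat it as locally constant, or use the $L_2$ loss as the paper does). Your target bound breaks that symmetry. For cross-entropy, $\nabla_\Theta\nabla_\vz\ell=\nabla_\vz\vp\,\nabla_\Theta\vz$, and this cannot control $\|\nabla_\Theta\vz\|$ because $\nabla_\vz\vp$ is singular (and nearly zero when $p_j\approx 1$). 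For the $L_2$ loss, the factor $\|\nabla_\vz\vp\|$ should not appear at all. Either way, Step 3 is left with an $\eta$-dependent factor it never controls. Finally, the factor $L$ is unnecessary: summing $\|\nabla_{\vw_l}\ell\|^2$ over layers gives $\|\nabla_\Theta\ell\|^2$ exactly.

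Step 3 then shows only that an \emph{upper bound} on $\E[\lambda_{\max}(\HH_\vx)]$ is non-increasing in $\eta$, which does not order the quantities themselves. You argue tightness of Step 1 alone. The transfer inequality, however, is a chain of operator-norm bounds, loosened further by $L$ and by the $\sup_\vx 1/c$ device, and its slack can itself depend on $\eta$. The paper keeps both slacks explicit as residuals:
$\E[\tr(\GG_\vw(\Theta_i^*))]=2/\eta_i-\varepsilon_{l,i}$ and $\E[\tr(\GG_\vw(\Theta_i^*))]-c_i\,\E[\lambda_{\max}(\GG_\vx(\Theta_i^*))]=\varepsilon_{g,i}$.
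Hence $\E[\lambda_{\max}(\GG_\vx(\Theta_i^*))]=\frac{1}{c_i}\bigl(2/\eta_i-\varepsilon_{l,i}-\varepsilon_{g,i}\bigr)$ is an identity (\cref{eq:gvx-eis}), and the comparison reduces to how $c_i$ and the residuals move with $\eta$. You need the analogous statement. With $c_1\le c_2$ from (iii), a sufficient condition is $(\varepsilon_{l,1}+\varepsilon_{g,1})-(\varepsilon_{l,2}+\varepsilon_{g,2})\le 2/\eta_1-2/\eta_2$. In words, the total slack must not drop by more than $2/\eta$ does between the two learning rates; this is exactly what the ``$\lesssim$'' and the asymptotic regime are being asked to absorb.
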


\paragraph{Intuition.}
\cref{theorem:ijr-is-good-for-input-curvature} establishes a transfer between parameter-space and input-space geometry: implicit regularization of \emph{parameter gradients} induced by SGD dynamics bounds the expected adversarial curvature of \emph{input gradients} (see \cref{fig:lr-curvature-val-acc-imagenette-resnet50}). Unlike prior approaches to attribution robustness, this mechanism does not require explicit regularization of the input Hessian or architectural modifications. While the result builds on existing bounds linking parameter and input curvature, its contribution lies in showing how training dynamics alone—near the edge of stability—can improve the adversarial robustness of attributions. We validate this connection across architectures in~\cref{sec:experiments}.

In the next section, we examine adversarial robustness in attention-based attributions and assess how techniques designed for gradient-based robustness transfer to this setting.

\section{An Inherent Limitation of Attention-Based Attribution Robustness}
\label{sec:atten-based-attrib-not-cool}

While~\cref{theorem:ijr-is-good-for-input-curvature} provides an efficient means to enhance the robustness of gradient-based attributions, attention-based attributions behave differently. In contrast to gradient-based methods, their robustness is not governed by input-space curvature. We show that for softmax attention, attribution robustness is constrained by layer entropy, which arises from the normalization inherent to softmax and cannot be overcome by optimization-induced regularization alone. This leads to a non-transferability result formalized in~\cref{lemma:attention-entropy-notR} (see~\cref{sec:proofs-and-theoretical-considerations-attention-based} for derivations).


\def\unif{\operatorname{Unif}}
\def\init{\mathrm{init}}
\def\attack{\mathrm{attack}}
\def\aux{\mathrm{aux}}
\begin{proposition}[Minimum attainable entropy bounds attention-based attribution robustness]
\label{lemma:attention-entropy-notR}
Let $\Ent_{\min}$ and $\Ent_{\unif}$ denote the minimum and maximum attainable entropy of an attention layer, respectively. Denote by $A_{\init}$ the attention scores for a given input, and denote by $A_{\attack}$ the attention scores after an adversarial perturbation. Denote the sensitivity of the attention layer by
\begin{equation}
 \notR(A_{\init}) := \max_{A_\attack} \|A_{\attack} - A_{\init}\|_2.
\end{equation}
Such that $\Ent(A_\attack)>\Ent_{\min}$. Then $\notR(A_{\init})$ is monotonically decreasing \wrt $\Ent_{\min}$.
\end{proposition}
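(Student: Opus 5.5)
The plan is to reduce \cref{lemma:attention-entropy-notR} to a monotonicity statement about a constrained maximization over the probability simplex. Each row of a softmax attention matrix is a probability vector $a \in \Delta^{m-1}$, and $\Ent(a)\le \Ent_{\unif}=\log m$. The adversary's feasible set for a row is therefore contained in the superlevel set $S(\tau):=\{a\in\Delta^{m-1} : \Ent(a) > \tau\}$ with $\tau=\Ent_{\min}$. Under this reading,
\[
\notR(A_{\init}) \;=\; \max_{A_\attack \in S(\Ent_{\min})} \|A_{\attack}-A_{\init}\|_2 .
\]
Monotone decrease in $\Ent_{\min}$ then follows immediately from the nesting $\tau_1<\tau_2 \Rightarrow S(\tau_2)\subseteq S(\tau_1)$: a maximum over a smaller set cannot be larger. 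Because $\Ent$ is continuous and the simplex is compact, the supremum is attained on the closure of $S(\tau)$, so the maximum is well defined. For several rows or heads the same argument applies row by row, or jointly with an entropy constraint on the whole layer.

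The first step is to make this nesting argument precise. I will also make explicit the modelling assumption hidden in the phrasing ``minimum attainable entropy of an attention layer'': a perturbation can only move $A_\attack$ within the set of attention patterns the layer can actually produce. That set is assumed to be exactly the entropy superlevel set determined by the layer's score range, for example bounded logits $|q^\top k|/\sqrt{d_k}\le B$, which forces $\Ent \ge \Ent_{\min}(B)$.

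To upgrade ``non-increasing'' to strictly ``decreasing'', the second step is to show that for $\tau<\log m$ the maximizer lies on the boundary $\{\Ent=\tau\}$. The objective $\|a-a_{\init}\|_2^2$ is convex, so it is maximized at extreme points of the convex set $\{\Ent\ge\tau\}$, which is convex because $\Ent$ is concave. Those extreme points lie on the level set $\Ent=\tau$. Shrinking the set strictly, by raising $\tau$, removes the previous maximizer. A strict decrease then follows unless another maximizer survives; ruling that out uses the strict concavity of $\Ent$ together with a Lagrange-multiplier characterization $a_\attack - a_{\init} = \mu \nabla \Ent(a_\attack) + \nu \mathbf{1}$. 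In the limit $\Ent_{\min}\to\Ent_{\unif}$ the feasible set collapses to the uniform distribution and $\notR\to\|u-a_\init\|_2$, which is the smallest possible value.

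The main obstacle is not the monotonicity itself but justifying the link between the abstract feasible set and actual adversarial perturbations of the input. Real attacks are constrained in input norm, not entropy, so I would state this as an idealization: the attacker can reach any pattern compatible with the layer's entropy floor. Strictness also needs care when $A_\init$ itself lies near the boundary. If that proves delicate, I would settle for non-increasing monotonicity and present strictness as a generic statement.
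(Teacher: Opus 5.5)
Your argument is essentially the paper's. The paper defines $S(\Ent_{\min})=\{A:\Ent_{\min}\le\Ent(A)\le\Ent_{\unif}\}$, bounds $\notR(A_{\init})$ by the diameter $M(S(\Ent_{\min}))$ of this set, and concludes from $S(\Ent_{\aux})\subseteq S(\Ent_{\min})$ for $\Ent_{\aux}\ge\Ent_{\min}$. You apply the same inclusion directly to the constrained maximum at fixed $A_{\init}$, which matches the statement's definition more literally; the paper's route only shows that an upper bound is monotone, though it needs only that attainable maps lie in $S$.

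Your strictness upgrade goes beyond the paper, and it is easier than you expect. For $0\le\tau<\Ent_{\unif}$ and the Euclidean or Frobenius norm, $\|a-a_{\init}\|_2^2$ is strictly convex. So every maximizer over the compact convex set $\{\Ent\ge\tau\}$ is an extreme point, hence satisfies $\Ent=\tau$, and raising $\tau$ strictly lowers the maximum. No Lagrange multipliers are needed, which helps because $\nabla\Ent$ blows up on the simplex boundary.
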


As $\Ent_{\min}$ approaches the uniform entropy $\Ent_{\unif}$, the maximal deviation $\notR$ vanishes (see~\cref{fig:vit-attraw-entropy-vs-notR}). Accordingly, softmax attention layers with smaller $\Ent_{\min}$ are more susceptible to perturbations and thus less robust.

\setcounter{theorem}{0}
\begin{remark}[Entropy lower bound and learning rate]
\cref{lemma:attention-entropy-notR} can be linked to \citet[Thm.~3.1]{zhai_stabilizing_2023}, showing that the minimum attainable attention entropy depends on the spectral norm of the attention input. Specifically, letting $\sigma_x=\|XX^\top\|_2$, $\sigma=\|W_KW_Q^\top\|_2$, sequence length $T$, they show that for large $\sigma$ and $T$,
\begin{equation}
    \Ent_{\min} = \Omega(T\sigma e^{-\sigma\sigma_x})
\end{equation}
Since $\sigma$ increases with learning rate (see~\cref{fig:vit-attraw-entropy-vs-notR}), training on the edge of stability exponentially decreases $\Ent_{\min}$ and, by~\cref{lemma:attention-entropy-notR}, increases the sensitivity $\notR(A_{\init})$.
\end{remark}

\setcounter{theorem}{0}
\begin{corollary}[Entropy-limited robustness of normalized attention]
Under normalized attention mechanisms, the robustness of attention-based attributions is constrained by the minimum attainable entropy of the attention layer. In particular, reductions in parameter curvature—implicit or explicit—do not, in general, improve $\notR(A_\init)$ beyond this entropy-imposed bound.

Consequently, techniques that improve gradient-based attribution robustness often \textbf{do not directly transfer} to softmax-based attention mechanisms (see~\cref{tab:robustness-attatt}).
\label{corollary:ent-rob-normalization}
\end{corollary}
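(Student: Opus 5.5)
The statement to prove is \cref{corollary:ent-rob-normalization}. I would derive it from \cref{lemma:attention-entropy-notR}, which already says the worst-case deviation $\notR(A_\init)$ over attention rows with $\Ent(A_\attack)>\Ent_{\min}$ is monotonically decreasing in $\Ent_{\min}$. The corollary then needs two things. \textbf{(a)} $\notR(A_\init)$ is governed by the entropy-constrained feasible set on the probability simplex and by nothing else. \textbf{(b)} Parameter-curvature reductions need not raise $\Ent_{\min}$.

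\textbf{Step (a).} Each softmax row lies in the simplex $\Delta^{T-1}$. The attacker's reachable set is contained in the superlevel set $S(\Ent_{\min})=\{a\in\Delta^{T-1}:\Ent(a)>\Ent_{\min}\}$. This set is convex, because entropy is concave. Hence
\[
\notR(A_\init)\le \sup_{a\in S(\Ent_{\min})}\|a-A_\init\|_2 ,
\]
and the right-hand side depends on the model only through $\Ent_{\min}$ and $A_\init$. Any training procedure that leaves $\Ent_{\min}$ unchanged therefore leaves this bound unchanged, whatever it does to the input or parameter Hessian. I would also note that the bound is attained, up to the attacker's reach, when the attack can drive scores to the boundary of $S$. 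This is the regime where a large $\sigma=\|W_KW_Q^\top\|_2$ makes the logits highly expressive.

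\textbf{Step (b).} The remaining task is to show that curvature control does not in general increase $\Ent_{\min}$. I would argue through the Remark: by \citet{zhai_stabilizing_2023}, $\Ent_{\min}$ is controlled by $\sigma$ and $\sigma_x$. I would then give a simple witness. The input-space curvature of the softmax output, i.e. the quantity regularized in \cref{theorem:ijr-is-good-for-input-curvature}, can be reduced while $\sigma$ is held fixed or even increased, for example by rescaling value or MLP weights that do not enter $W_KW_Q^\top$. Parameter curvature involves all layers, whereas $\Ent_{\min}$ depends only on the query--key block. So an implicit or explicit curvature penalty puts no constraint forcing $\Ent_{\min}$ upward, and by Step (a) $\notR$ is not improved beyond the entropy bound. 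Combined with the Remark, large learning rates increase $\sigma$ and thus exponentially decrease $\Ent_{\min}$, so robustness can in fact get worse. This gives the ``do not directly transfer'' conclusion.

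\textbf{Main obstacle.} The ``in general'' claim in Step (b) is the hardest part, since it is a non-implication rather than an inequality. I would first try to state it existentially: construct a two-layer attention model, or a one-parameter rescaling family, in which $\lambda_{\max}$ of the parameter Hessian decreases while $\|W_KW_Q^\top\|_2$ is nonincreasing. This exhibits the decoupling explicitly. The rest of the argument is immediate from the monotonicity in \cref{lemma:attention-entropy-notR}.
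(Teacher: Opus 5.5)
Your core argument is the paper's. The corollary follows from the set-inclusion monotonicity in \cref{lemma:attention-entropy-notR}, combined with the Remark that larger learning rates inflate $\sigma=\|W_KW_Q^\top\|_2$ and so shrink $\Ent_{\min}$. Your $A_{\mathrm{init}}$-centred supremum over $S(\Ent_{\min})$ is a slightly sharper form of the paper's diameter $M(S(\Ent_{\min}))$. The convexity of $S$ plays no role; only the inclusion does.

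The paper stops there. It rests the ``in general'' and ``often'' clauses on three things: the observed growth of $\sigma$ with $\eta$, the informal normalization-cancellation argument in its Practical Implications box, and \cref{tab:robustness-attatt}. Your Step (b), an explicit decoupling witness, is a genuine addition. It turns ``curvature control need not raise the entropy floor'' into an actual non-implication. The paper gives only the stronger but empirical direction, that edge-of-stability training lowers that floor.

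Three details of the witness need repair.

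First, in your last paragraph you require $\|W_KW_Q^\top\|_2$ to be \emph{nonincreasing}; that is the wrong direction. In the regime of the Remark, $T\sigma e^{-\sigma\sigma_x}$ decreases in $\sigma$. Letting $\sigma$ drop can therefore raise $\Ent_{\min}$ and tighten the bound, which is exactly what the witness must exclude. You need $\sigma$ fixed or nondecreasing, as you correctly wrote in Step (b).

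Second, $\Ent_{\min}$ also depends on $\sigma_x=\|XX^\top\|_2$. For every attention layer after the first, $X$ depends on upstream value and MLP weights. So the claim that $\Ent_{\min}$ depends only on the query--key block is false. A generic rescaling of value or MLP weights can move the entropy floors of later layers.

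Third, the clean witness is a function-preserving rescaling $W_V\mapsto\alpha W_V$, $W_O\mapsto W_O/\alpha$, with the value bias scaled by $\alpha$. The Gauss--Newton trace, the quantity bounded by $2/\eta$ in the paper's analysis, then changes as $\alpha^{-2}\|J_{W_V}\|_F^2+\alpha^{2}\|J_{W_O}\|_F^2+\text{const}$. It can be lowered by balancing the two terms. Meanwhile every attention map, and hence $\notR(A_{\mathrm{init}})$, is exactly unchanged. More generally, any change confined to parameters downstream of a given attention layer leaves that layer's $\notR$ untouched, whatever it does to curvature.
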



\paragraph{Escaping the Entropy Constraint.} Importantly, \cref{lemma:attention-entropy-notR} does not rely on softmax per se, but on the normalization inherent to softmax, which induces an entropy floor. Removing softmax normalization places the model outside such entropy-constrained regime. This motivates unnormalized \kernelized attention--via element-wise feature maps $\phi$ on $Q$ and $K$~\citep{katharopoulos_transformers_2020,luo_stable_2021}--as a minimal intervention to probe this mechanism.

In our study, kernelized attention primarily serves as a controlled verification baseline: it introduces a single architectural change, allowing us to test the role of normalization without introducing additional confounders. As we show empirically, lifting the entropy bound restores the transferability of ICR, enabling improvements in attention-based attribution robustness, in agreement with~\cref{theorem:ijr-is-good-for-input-curvature}.


We emphasize that kernelized attention is not intended as a final solution for achieving robust attention-based attributions. Rather, it demonstrates that the entropy constraint arises from normalization and suggests that achieving robustness in softmax-based architectures may require more specialized architectural or normalization-aware approaches.

In the next section, we empirically compare five alternatives against ICR for improving the robustness of gradient-based methods and evaluate their impact on attention-based attribution methods for \kernelized and softmax-based attention.

\section{%
Experiments
}
\label{sec:experiments}
In the following, we introduce alternative methods for robust gradient-based attribution, design our evaluation framework, and compare their effectiveness in improving gradient-based attribution robustness. We further examine how these techniques transfer to robustness of attention-based attribution in both softmax and \kernelized attention modules.

\begin{table*}[]
\caption{
\textbf{Comparative Ranking of Sensitivity Methods for Gradient-Based Attribution.}
This table summarizes the trade-offs among methods for improving the robustness of gradient-based attribution (see \cref{sec:alternatives-to-attrobution-robustness}; also visualized in~\cref{fig:spider}). We report validation accuracy, the sensitivity metric $\notR$ (defined in \cref{eq:notR}) for ResNet50 and ViT-B/16, computational cost, and hyperparameters (``hparams.''). Methods are grouped into ranks representing mutually non-significant differences using t-tests at a 95\% confidence level, and an overall rank which is obtained by summing these ranks, with lower values indicating better performance. To be able to measure the effect of activation modification we consistently use ReLU-based ViTs in all experiments except AAR and PAR methods. Nonetheless, ICR achieves the best overall ranking confirming that competitive robustness values are achievable regardless of the choice of activation function for gradient-based attribution. In the last row, ICR/ICR$^\dagger$ refers to CNN and ViT respectively. Under the computational cost we use an abstract notation for order, where ``$F$'' and ``$B_{\vx}$'', respectively denote a forward pass and a backward pass with respect to the input.
}
\label{tab:robustness-orth}
\centering
\begin{tabular}{cccccccccc}
\toprule
\multirow{2}{*}{\footnotesize\raisebox{-0.5ex}{Method}} & \multicolumn{2}{c}{$\uparrow$Acc$\%^\text{Rank}$} & \multicolumn{2}{c}{$\downarrow\notR^\text{Rank}$}
& \multirow{2}{*}{\scriptsize\raisebox{-1ex}{Comp. Cost $^\text{Rank}$}} & \multirow{2}{*}{\scriptsize\raisebox{-1ex}{hparams.$^\text{Rank}$}} & {\scriptsize\raisebox{-1ex}{Overall}}\\ 
\cmidrule(lr){2-3}
\cmidrule(lr){4-5}
& {\footnotesize ResNet50} & {\footnotesize ViT-B/16} & \footnotesize{ResNet50} & {\footnotesize ViT-B/16}&&&{\scriptsize\raisebox{1ex}{Rank}}\\
\midrule
{\footnotesize No Regularization (Base)} & $77${\scriptsize$\pm 3\%$}$^{5}$ & $67${\scriptsize$\pm 1\%$}$^{4}$ & $0.61${\scriptsize$\pm 0.5$}$^{5}$ & $0.11${\scriptsize$\pm 0.1$}$^{4}$ 
& {\scriptsize $F+B_{\vw}$}\hfill$^1$ & {\scriptsize -} \hfill$^1$
& 5 \\
{\footnotesize Ante-hoc Act. Reg. (AAR)}  & $75${\scriptsize$\pm 5\%$}$^{6}$ & $63${\scriptsize$\pm 2\%$}$^{6}$ & $0.10${\scriptsize$\pm 0.2$}$^{2}$ & $0.07${\scriptsize$ \pm 0.0$}$^{2}$
& {\scriptsize $F+B_{\vw}$}\hfill$^1$ & {\scriptsize lr + $\beta$}\hfill$^2$
& 4 \\
{\footnotesize Post-hoc Act. Reg. (PAR)}  & $10${\scriptsize$\pm 0\%$}$^{7}$ & $59${\scriptsize$\pm 1\%$}$^{7}$ & $0.00${\scriptsize$\pm 0.0$}$^{1}$ & $0.06${\scriptsize$\pm 0.0$}$^{1}$
&{\scriptsize $F+B_{\vw}$}\hfill$^1$ & {\scriptsize lr + $\beta$}\hfill$^2$
& 4 \\
{\footnotesize Explicit Curv. Reg. (ECR)} & $84${\scriptsize$\pm 3\%$}$^{1}$ & $74${\scriptsize$ \pm 2\%$}$^{1}$ & $0.11${\scriptsize$ \pm 0.3$}$^{2}$ & $0.08${\scriptsize$\pm0.1$}$^{3}$
& {\scriptsize $F+B_{\vx}+B_{\vw\vx}$}\hfill$^3$ & {\scriptsize lr + $\lambda$}\hfill$^2$
& 2 \\
{\footnotesize Adv. Trainig Reg. (ATR)}  & $78${\scriptsize$\pm 2\%$}$^{4}$ & $68${\scriptsize $\pm 1\%$}$^{3}$ & $0.21${\scriptsize $\pm 0.3$}$^{3}$ & $0.08${\scriptsize $\pm 0.0$}$^{3}$ 
&{\scriptsize $2F+B_{\vx}+B_{\vw}$}\hfill$^2$ & {\scriptsize lr + $\epsilon$}\hfill$^2$
& 3 \\
{\footnotesize Sharp. Aware Min. (SAM)}  & $79${\scriptsize$\pm 4\%$}$^{3}$ & $66${\scriptsize $\pm 2\%$}$^{5}$ & $0.30${\scriptsize$ \pm 0.5$}$^{4}$ & $0.10${\scriptsize$ \pm 0.1$}$^{4}$
&{\scriptsize $2F+2B_{\vw}$}\hfill$^2$ & {\scriptsize lr + $\rho$}\hfill$^2$
& 5 \\
\cmidrule(lr){1-8}
{\footnotesize Implicit Curv. Reg. (ICR/ICR$^\dagger$)}  & $83${\scriptsize$\pm 2\%$}$^{2}$ & $73${\scriptsize$ \pm 1\%$}$^{2}$ & $0.10${\scriptsize$ \pm 0.2$}$^{2}$ & $0.07 ${\scriptsize$\pm 0.0$}$^{2}$
&{\scriptsize $F+B_{\vw}$}\hfill$^1$ & {\scriptsize lr}\hfill$^1$
& 1 \\
\bottomrule
\end{tabular}
\end{table*}

\subsection{Designing an Evaluation Framework}
\label{sec:evaluation}
Evaluating attribution's adversarial robustness is computationally expensive, as it often requires computing second-order input gradients. Moreover, the results can depend on various design choices and hyperparameters.
To ensure a principled and fair comparison, we employ an identical evaluation setup across all experiments.
In addition, a consistent normalization strategy is essential for making fair judgments among alternative methods.

\paragraph{Attribution Adversarial Loss.} 
We measure adversarial robustness by worst-case, sample-wise changes in attribution.
Therefore, for each sample, we optimize an established adversarial objective~\cite{chen_robust_2019,chen_robust_2023,dombrowski_towards_2020,ghorbani_interpretation_2018} to find the largest eigenvalue corresponding to the directional changes in attribution:
\begin{align}
 \notR(e,f)&=\max_{\|\delta\|\leq\varepsilon} \Bigl\|\frac{e(\vx)}{\|e(\vx)\|} - \frac{e(\vx')}{\|e(\vx')\|}\Bigr\| - \gamma\|\vl-\vl'\|\\
 &= \max_{\|\delta\|\leq\varepsilon} \Bigl|2\sin(\frac{\alpha}{2})\Bigr| - \gamma\|\vl-\vl'\|.
 \label{eq:notR}
\end{align}
Where all norms are L2, $e$ is an attribution method, such as VanillaGrad $e(\vx) = \nabla_\vx l_i$, for $i=\arg\max_j l_j$. Moreover, $\vx' = \vx+\delta$, $\vl' = \vz' - \operatorname{LSE}\left(\vz'\right)$, $\vz' = f\left(\vx'\right)$, and $\alpha$ is the angle between flattened attributions, showing that $\notR ~\lesssim 2$.

Since attribution maps are typically normalized before visualization, we focus on changes in \textit{direction rather than magnitude}, as the former more closely corresponds to semantic shifts.
Also, as $\notR$ quantifies the sensitivity, lower values indicate higher robustness (see~\cref{sec:implementation-details} for implementation details and hyperparameters).

\paragraph{Additional Metrics.} 
Relying on a single robustness metric, as in Tab.~\ref{tab:robustness-grad-alternarives}, or on correlated measures~\citep{kamath_rethinking_2023}, can be misleading. For instance, a model might exhibit maximal attribution robustness only after its accuracy collapses to random chance (see PAR in~\cref{fig:tradeoffs-of-methods}).
Hence, in addition to attribution robustness, we include three uncorrelated metrics that complements our evaluation: validation accuracy, training time, and design constraints.

We always include a baseline trained without any regularization (\ie using none of the alternatives discussed in~\cref{sec:alternatives-to-attrobution-robustness} and refer to it by ``Base'' in figures and tables) to serve as a reference for $\notR$.

\paragraph{No Post-Hoc Activation Replacement at Evaluation.}
As discussed in~\cref{sec:notation}, we compute attributions using negative log-probabilities instead of the network logits, to align the explainability literature with the findings in optimization and learning dynamics. 
This means that we have nonzero second-order gradients even in ReLU networks, eliminating the need for post-hoc activation changes during evaluation, which is done in~\cite{ivankay_far_2022,dombrowski_explanations_2019}. This simplifies the protocol, improves reproducibility, and more accurately reflects real deployment settings.

\paragraph{Gradient-Based Attribution Methods.}
While our theoretical analysis in~\cref{theorem:ijr-is-good-for-input-curvature,lemma:attention-entropy-notR} is focused on \textit{VanillaGradient}~\cite{simonyan_deep_2014} for analytical simplicity, we empirically evaluate five additional gradient-based attribution methods from the same family, namely \textit{Input$\times$Gradient}~\cite{shrikumar_not_2017}, \textit{GuidedBackprop}~\cite{springenberg_striving_2015}, \textit{DeepLift}~\cite{shrikumar_not_2017}, \textit{GradCAM}~\cite{selvaraju_grad-cam_2020}, and \textit{IntegratedGradients}~\cite{sundararajan_axiomatic_2017} (see~\cref{tab:robustness-grad-alternarives}).

\paragraph{Attention-Based Attribution Methods.}
We also evaluate five attention-based attribution methods designed specifically for ViTs.
These include \textit{RawAttention}~\cite{jain_attention_2019,serrano_is_2019}, which uses the attention map of the final layer; \textit{AttentionMean}~\cite{dosovitskiy_image_2021}, which averages attention maps across layers; \textit{AttentionRollout} and \textit{AttentionFlow}~\cite{abnar_quantifying_2020}, which propagate attention maps across layers; and finally, \textit{AttGrad}~\cite{chefer_transformer_2021}, which combines attention maps with their corresponding input gradients (see~\cref{tab:robustness-attatt}).

The robustness of gradient- and attention-based methods are usually correlated with their corresponding vanilla form, after applying post-hoc smoothing (see~\cref{fig:correlated-not-R-imagenette}).

\paragraph{Datasets and Architectures.}
We use the Imagenette
dataset~\cite{howard_imagenette_2019} (224$\times$224) and report results with \textit{ResNet50}~\cite{he_deep_2015} and \textit{ViT-B/16}~\cite{wu_visual_2020} in the main text and for the ablation studies.
Additional experiments are conducted on CIFAR10~\cite{krizhevsky_learning_2009} and STL10~\cite{coates_analysis_2011} using smaller models such as ResNet34/18 and ViT-Tiny, as detailed in~\cref{sec:implementation-details}.

\begin{figure*}[h]
\centering
\begin{subfigure}[b]{\bsize}
    \centering
    \includegraphics[width=\textwidth,trim=0 0 0 20px,
    clip]{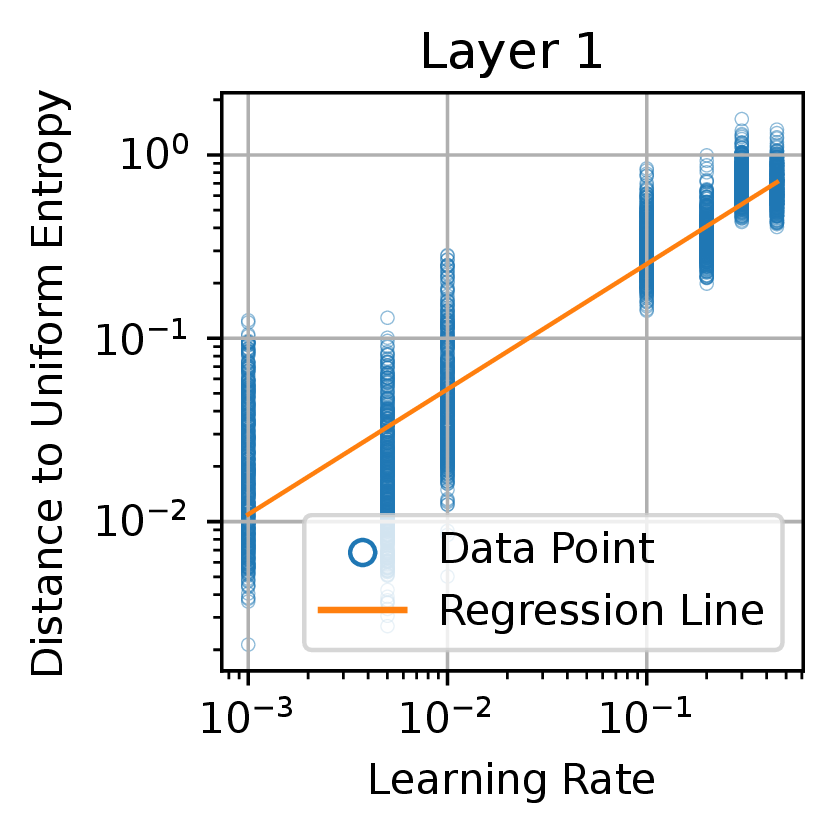}
    \caption{}
\end{subfigure}
\begin{subfigure}[b]{\bsize}
    \centering
    \includegraphics[width=\textwidth,trim=0 0 0 20px,
    clip]{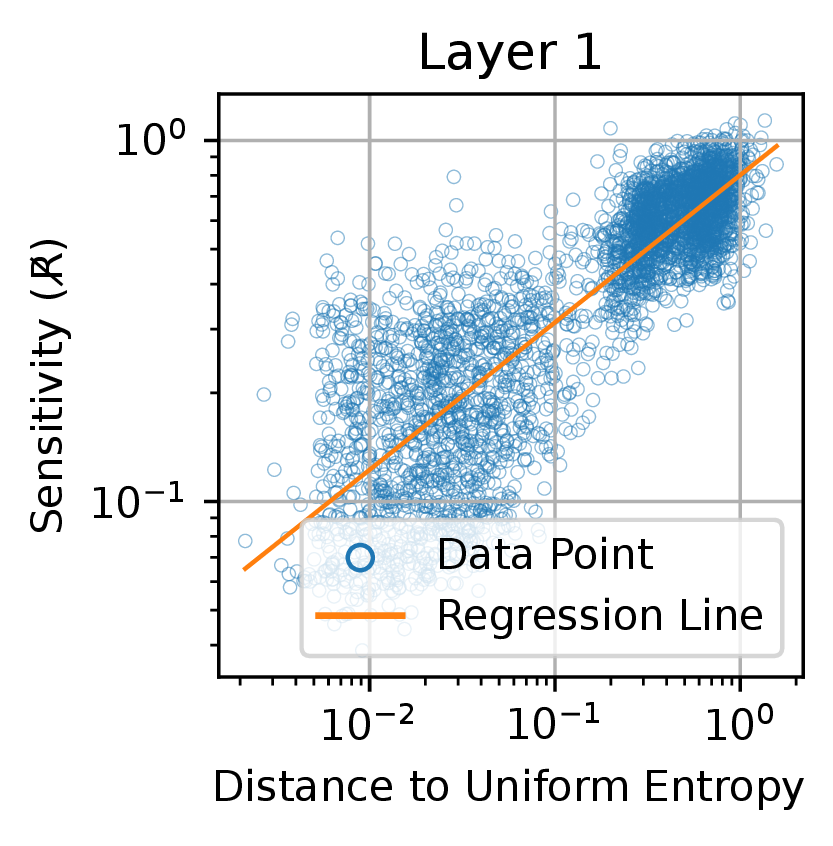}
    \caption{}
\end{subfigure}
\begin{subfigure}[b]{\bsize}
    \centering
    \includegraphics[width=\textwidth,trim=0 0 0 20px,
    clip]{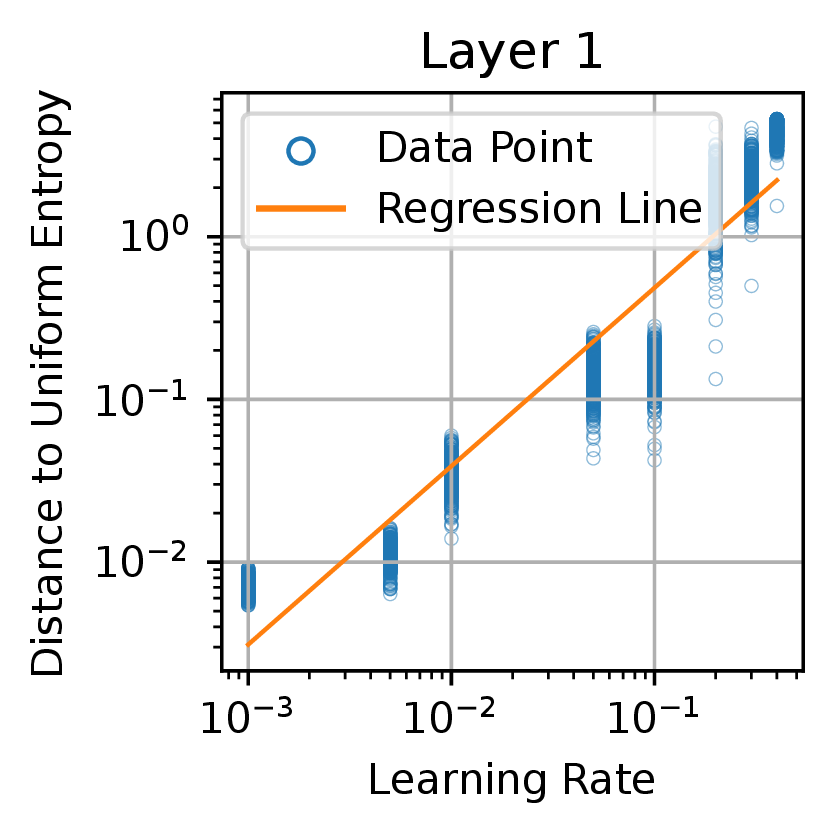}
    \caption{}
\end{subfigure}
\begin{subfigure}[b]{\bsize}
    \centering
    \includegraphics[width=\textwidth,trim=0 0 0 20px,
    clip]{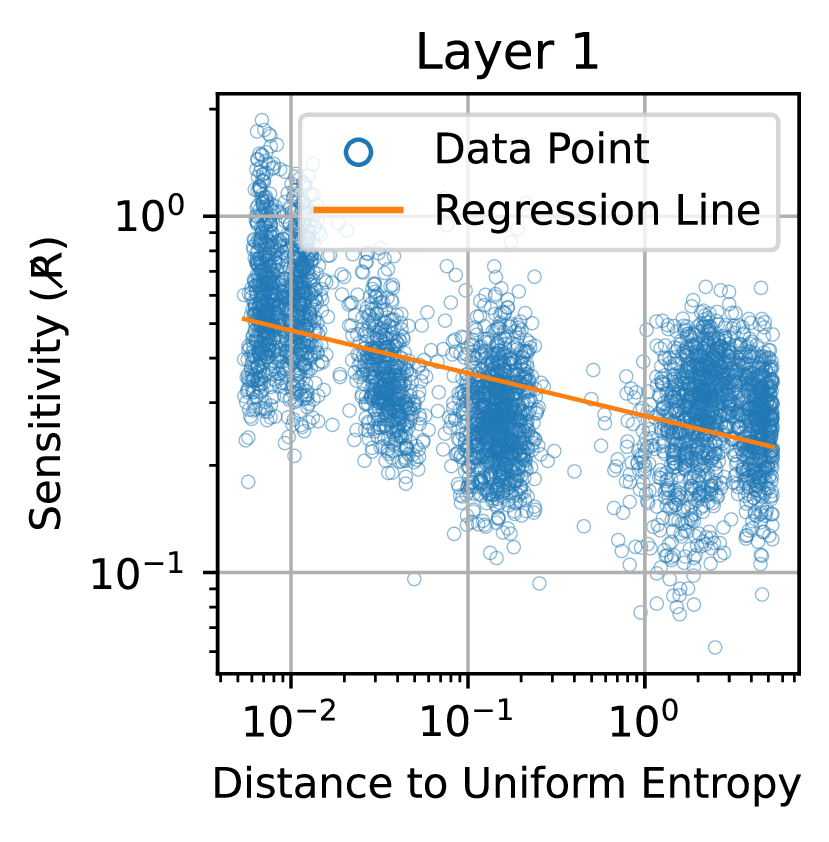}
    \caption{}
\end{subfigure}
    \caption{\textbf{Sensitivity of Attention-Based Attribution in Softmax vs. Unnormalized \Kernelized Attention.}
    This figure is based on ViT-B/16 trained on Imagenette, visualized over 500 samples per learning rate. We show the first attention layer; additional softmax-attention layers are in~\cref{fig:att-ent-vs-lr-appendix}. In panels (c,d) we visualize unnormalized \kernelized attention with GELU activation; activation ablations appear in~\cref{fig:lin-att-elup1-ent-vs-lr-appendix,fig:lin-att-relu-ent-vs-lr-appendix,fig:lin-att-gelu-ent-vs-lr-appendix}. Panels (a,c) plot the learning rate \wrt the deviation of attention entropy from the uniform distribution, while (b,d) report such deviation \wrt the sensitivity ($\notR$ defined in~\cref{eq:notR}). As predicted by~\cref{lemma:attention-entropy-notR}, the distance to the uniform in softmax attention is directly correlated with $\notR$, whereas for unnormalized \kernelized attention with GELU activation, ICR$^\dagger$ improves robustness. Moreover, large learning rates lead to lower entropy, more localized attributions, at the cost of reduced training stability.}
\label{fig:vit-attraw-entropy-vs-notR}
\end{figure*}

\subsection{Alternatives to Robust Gradient-based Attribution}
\label{sec:alternatives-to-attrobution-robustness}
While prior work, under the lens of explainability, focuses on controlling input curvature~\cite{ghorbani_interpretation_2018} to improve attribution robustness, \cref{theorem:ijr-is-good-for-input-curvature} further links gradient-based attribution robustness to parameter curvature. This connection broadens our scope to include methods from optimization and robustness. We categorize these approaches based on their target (input vs. parameter Hessian) and regularization mechanism (explicit vs. implicit), see~\cref{tab:summary-methods} for a summary. We analyze these alternatives via their regularization term $\Omega$ within the general objective:
\begin{equation*}
     \min_\Theta \E_{\vx \sim \mathcal{D}}\left[\mathcal{L}(f_{\Theta},\vx,\vy) + \Omega\right],
    \label{eq:regularized-training-loss}
\end{equation*}
where $\mathcal{L}(\vx,\vy)$ denotes the loss for the network $f_{\Theta}(\vx)$.

In the following, we briefly specify each alternative through its regularization term $\Omega$ or its objective.

\paragraph{Explicit Regularization.}
Methods that explicitly penalize curvature-related terms typically incur high computational cost—about $4\times$ slower than standard training in our experiments—since they usually require second-order derivatives with respect to the input.
From this category, we consider the following baselines:
\begin{enumerate}
    \item[\textbf{1.}] \textbf{Explicit Curvature Regularization (ECR):} 
    adds a penalty on the input Jacobian to enhance attribution robustness, albeit at substantial computational cost~\citep{hoffman_robust_2019,jastrzebski_catastrophic_2020}. The regularization term is defined as
        $\Omega_{\lambda} = \lambda\|\nabla_\vx \vy^\top\vl\|_F^2$,
    where we use L2 norm on the vectorized input gradient.
    \item[\textbf{2.}] \textbf{Adversarial Training Regularization (ATR):}
    constrains input curvature by enforcing robustness against input perturbations~\citep{goodfellow_explaining_2015,chalasani_concise_2020,kafali_dynamic_2025}.
    The perturbed input is given by $\vx' = \max_{|\delta|\leq\varepsilon}\mathcal{L}(\vx+\delta,\vy)$, and the corresponding regularization term is
        $\Omega_{\varepsilon} = \mathcal{L}\left(\vx',\vy\right)$.

    \item[\textbf{3.}] \textbf{Sharpness-Aware Minimization (SAM):} 
    biases optimization toward flatter parameter minima~\citep{foret_sharpness-aware_2021}, a property linked to adversarial robustness~\citep{wei_sharpness-aware_2023}.
    The objective is given by:
    \begin{equation*}
        \mathcal{L}_{\rho} = \min_\Theta \E_{\vx \sim \mathcal{D}}\left[\max_{\|\delta\|_2\leq\rho}\mathcal{L}(f_{\Theta+\delta},\vx,\vy)\right].
    \end{equation*}
    This can be seen as a parameter-space analogue of adversarial training, where $\delta$ is in parameter space.
\end{enumerate}

\paragraph{Implicit Regularization.}
This method, that we also advocate for, influences the input curvature indirectly, providing a more efficient way to robust gradient-based attribution:
\begin{enumerate}
    \item[\textbf{4.}] \textbf{Implicit Curvature Regularization (ICR):} 
    SGD training dynamics regularize the parameter curvature at the edge of stability, which is connected to input curvature in~\cref{theorem:ijr-is-good-for-input-curvature}.
    In this case, we set $\Omega = 0$ and train the network near the edge of stability.
    
\end{enumerate}

\paragraph{Activation-Based Regularization.}
As initially suggested by~\cite{dombrowski_explanations_2019,dombrowski_towards_2020}, modifying the activation functions can also affect curvature. We refer to these methods as activation-based regularization.
However, the application of this approach is unclear for ViTs as they use smooth activation functions by default:
\begin{enumerate}
    \item[\textbf{5,6.}] \textbf{Post-hoc/Ante-hoc Activation Regularization (PAR/AAR):} 
    Reduces input curvature by replacing ReLU with a smoother activation function. Originally, Softplus($\beta$) is proposed for ReLU-based CNNs/MLPs.
    This method employs the regular training loss (Base), but the regularization implicitly depends on $\beta$.
    We evaluate this approach after and before training, referring to these variants by post-hoc/ante-hoc activation regularization (PAR/AAR).
\end{enumerate}

\subsection{Empirical Insights into Attribution Robustness}
\label{sec:discussion}
\cref{sec:alternatives-to-attrobution-robustness} layouts common alternatives to improve gradient-based attribution robustness (summarized in~\cref{tab:robustness-grad-alternarives}), which we use as baselines to assess the performance of ICR. The experimental results for ViT-B/16 and ResNet50 on Imagenette dataset are provided in~\cref{tab:robustness-orth,tab:robustness-grad-alternarives,tab:robustness-attatt}. Moreover, \cref{tab:summary-findings} summarizes our findings together with their corresponding material.

\paragraph{Observations in Gradient-based Attribution.}
As shown in~\cref{tab:robustness-orth,fig:spider}, ICR achieves a favorable balance across different evaluation metrics. In contrast, other methods tend to specialize: while some surpass ICR in a specific metric, they exhibit significantly weaker performance with respect to others.
For instance, ECR achieves higher accuracy on both architectures but at the cost of substantially greater computational cost (about $4\times$ slower than Base). Conversely, PAR yields the most robust results across both architectures but exhibits the worst predictive performance.

\vspace{1.5em}
\begin{mdframed}
\paragraph{Key Takeaways for Gradient-Based Attribution.}
\begin{itemize}
    \item The robustness of gradient-based attribution methods is \textit{heavily affected by optimization dynamics}, in particular by the learning rate.

    \item Attribution robustness can be improved \textit{without} explicit regularization terms or architectural modifications, thereby avoiding their associated computational cost and limited applicability.

    \item Post-hoc changes (such as PAR) are often \textit{unnecessary} and detrimental to performance at evaluation time, and they can be \textit{avoided} by using the negative log-likelihood instead of the logits. 
\end{itemize}
\vspace{0.5em}
\end{mdframed}

\paragraph{Observations in Attention-based Attribution.}
As shown in \cref{tab:robustness-attatt}, methods that perform best for gradient-based attribution—most notably ECR and ICR—exhibit poor robustness under attention-based attribution, reflected by higher $\notR$ values. This behavior is consistent with the analysis in \cref{sec:atten-based-attrib-not-cool}, which argues that attention-based attribution robustness is governed by the attainable entropy of the attention layer—strongly influenced by the learning rate (see \cref{fig:vit-attraw-entropy-vs-notR})—rather than by curvature.
As a result, curvature-based regularization techniques often do not transfer to attention-based attribution.

Importantly, \cref{lemma:attention-entropy-notR} shows that the loss of robustness at low entropy arises from normalization rather than from the softmax operation itself. Empirically, replacing softmax with unnormalized \kernelized attention using a GELU activation allows ICR to improve attention-based robustness (see \cref{fig:vit-attraw-entropy-vs-notR,fig:lin-att-gelu-ent-vs-lr-appendix}). As unnormalized \kernelized attention modules are relatively underexplored~\cite{koohpayegani_sima_2024,han_flatten_2023}, we perform an extensive ablation study over the choice of activation function—including CosineSim, GELU, ReLU, ELU+1, and Softplus in \cref{fig:lin-att-elup1-ent-vs-lr-appendix,fig:lin-att-gelu-ent-vs-lr-appendix,fig:lin-att-relu-ent-vs-lr-appendix,fig:lin-att-norm2-ent-vs-lr-appendix}.

\vspace{0.1em}
\begin{mdframed}
\paragraph{Key Takeaways for Attention-Based Attribution.}
\begin{itemize}
    \item The robustness of attention-based attribution is \textit{fundamentally different} from gradient-based methods; consequently, robustness techniques developed for gradient-based attribution often \emph{do not} transfer to attention-based approaches.

    \item This discrepancy is both due to the very definition of attention-based attribution and the normalization built in softmax. Thus, motivating the use of \textit{unnormalized \kernelized attention}.

    \item \Kernelized attention is a \textit{necessary condition}. Robustness, therefore, still depends on other factors, such as the choice of activation function, with GELU having a favorable behavior.

\end{itemize}
\vspace{0.5em}
\end{mdframed}

\section{Limitations and Future Work}
While our theoretical analysis establishes that training near the edge of stability improves adversarial attribution robustness, there is also empirical evidence for gains under random perturbations (see \cref{tab:average-robustness-grad-alternarives}). Extending the current analyses to alternative notions of robustness, including average-case sensitivity, is an important direction for future work.

\cref{lemma:attention-entropy-notR} indicates that robustness methods for gradient-based attribution often do not transfer to attention-based attribution, identifying attention normalization as a key obstacle. 
To verify our theoretical claims in a controlled setting, we consider \kernelized attention as a minimal intervention to the standard transformer architecture; more expressive modifications (\eg, reparameterization-based approaches) may further improve robustness and are left for future work.
Although replacing softmax with \kernelized attention enables implicit regularization to improve attribution robustness, the effect is highly sensitive to the choice of activation function (\cf \cref{fig:lin-att-gelu-ent-vs-lr-appendix,fig:lin-att-softplus-ent-vs-lr-appendix}). 

Moreover, modifying the attention mechanism prevents reusing the pretrained weights, and developing strategies that preserve compatibility with pretrained models (\eg, in large-scale or SSL settings) remains open. Establishing a theoretical link between the induced kernel and attribution robustness is another promising direction.

In \cref{lemma:attention-entropy-notR} we assume a fixed input-token length. Extending this analysis to variable-length inputs is non-trivial, as the Lipschitz constant of self-attention is known to be unbounded in that regime, and would require a refined analysis of entropy and sequence-length effects \cite{kim_lipschitz_2021}.

Finally, increasing the learning rate leads to more localized attention-based attributions in both \kernelized and softmax-attention mechanisms (see \cref{fig:vit-attraw-entropy-vs-notR,fig:samples-1,fig:samples-2,fig:samples-3,fig:samples-4}). While such localization may be desirable from an interpretability perspective, it can induce attention collapse during training, limiting practical applicability. Designing mechanisms that improve the robustness of attention-based attributions while preserving low-entropy attention maps without collapse is an important direction for future research.

From an empirical perspective, our study focuses on controlled training-from-scratch settings. Scaling these findings to larger datasets (\eg, ImageNet) and pretrained transformer settings remains an important avenue for future work. Additionally, our experiments suggest that in low-data regimes, controlling robustness of gradient-based attribution is easier than for attention-based attribution \cref{tab:robustness-grad-stl10,tab:robustness-attatt-stl10}, a phenomenon that warrants a systematic investigation.

\nocite{yang_benchmarking_2019}
\nocite{kazmierczak_benchmarking_2025}
\nocite{li_dual-perspective_2024}
\nocite{gupta_new_2022}
\nocite{nguyen_effectiveness_2022}
\nocite{li_m4_2023}
\nocite{hesse_benchmarking_2024}
\nocite{rong_consistent_2022}
\nocite{mudarisov_limitations_2025}
\section{Conclusion}
This work revisits the problem of attribution robustness by integrating recent advances in learning dynamics with a revised theoretical framework considering parameter curvature and sharpness. Within this perspective, we empirically showed that implicit curvature regularization (ICR)—arising naturally from SGD dynamics—offers a simple, efficient, and generalizable alternative to post-hoc architectural modifications and costly curvature-based regularization--our findings and observations are summarized in~\cref{tab:summary-findings}.

Our analysis showed that ICR improves gradient-based attribution robustness across both CNNs and ViTs without altering model architectures or incurring additional computational cost. Also, we found that robustness techniques for gradient-based attributions often do not transfer to attention-based methods due to normalization; replacing softmax with unnormalized \kernelized attention with GELU activation enables ICR to improve robustness empirically.

Overall, our results show that implicit regularization provides a principled and practical route to improving attribution robustness, helping connect explainability research with broader developments in deep learning optimization.

\section*{Impact Statement}
This paper presents work whose goal is to advance the field of Machine
Learning, and in particular explainability of such algorithms. There are many potential societal consequences of our work, none of which we feel must be specifically highlighted here.




\section*{Acknowledgments}
This project is partially supported by Region Stockholm through MedTechLabs, and  \href{https://wasp-sweden.org/}{Wallenberg AI, Autonomous Systems and Software Program (WASP)} funded by the Knut and Alice Wallenberg Foundation. Scientific computation was enabled by the supercomputing resource Berzelius, provided by the \href{https://www.nsc.liu.se/}{National Supercomputer Center at Linköping} University and the Knut and Alice Wallenberg foundation.

\bibliography{references-red}
\bibliographystyle{icml2026}

\newpage
\appendix
\onecolumn
\section{Implementation Details}
\label{sec:implementation-details}

We include attribution visualizations for a subset of the configurations studied in this paper (see~\cref{fig:samples-1,fig:samples-2,fig:samples-3,fig:samples-4}). A more extensive collection of visualizations is provided in a zip archive submitted as part of the supplementary material.

For each method of improving attribution robustness, we conducted a grid search over the relevant hyperparameters (such as $\rho$ for SAM plus learning rate) to assess their effect on the results. All models were trained using standard data augmentation, including random crops, horizontal flips, and Mixup~\cite{zhang_mixup_2018}. We share our source code in this \href{https://github.com/Amir-Mehrpanah/Improving-Adversarial-Robustness-via-Implicit-Regularization-ICML26}{GitHub} repository for better reproducibility.

While \cref{theorem:ijr-is-good-for-input-curvature} assumes a constant learning rate for analytical tractability, all experiments use an exponential learning rate decay with factor $1-10^{-3}$.

To ensure a consistent and fair hyperparameter selection, we prioritize models with higher validation accuracy along the accuracy–robustness trade-off, selecting one high–learning-rate configuration (corresponding to ICR) and one base–learning-rate configuration (serving as a non-ICR baseline). The selected hyperparameters are reported in \cref{tab:hparams}.

\paragraph{Base learning rate (BaseLR).}
We define BaseLR as the reference learning rate used to ensure fair comparisons across methods. For each experiment group, BaseLR is kept fixed while performing a 1D grid search over the auxiliary hyperparameters of each alternative approach (e.g., $\rho$ for SAM). In practice, BaseLR is chosen below the edge-of-stability so that all baseline methods remain trainable, as several alternatives diverge at higher learning rates. As such, BaseLR serves \textbf{(i)} as a common operating point for fairness across methods, \textbf{(ii)} as a reference for optimizing auxiliary hyperparameters, and \textbf{(iii)} as a configuration that balances validation accuracy and attribution robustness. We refer to this configuration as Base (denoted by ``NoR'' before camera-ready version) to emphasize that it corresponds to a baseline learning-rate setting rather than the absence of regularization.

Finally, we note that robustness gains are expected to diminish in the presence of a large generalization gap, as several results in SGD theory apply primarily to the training dynamics, and their transfer to test-time robustness depends on the alignment between training and test performance.

Matching test accuracy across learning rates is ill-defined, as learning rate itself affects generalization. 
Moreover, matching test loss is infeasible for post-hoc methods such as PAR and would bias comparisons by penalizing higher-accuracy approaches (\eg, ECR, ICR, and ATR) when matched to methods like AAR. 
Instead, to isolate optimization-induced curvature effects, we match models by training loss and compare curvature and attribution robustness at approximately equal empirical risk.
To this end, training is halted when the training loss reaches a certain threshold, with the hyperparameters selected per model–dataset configuration. A warmup phase disables early stopping to ensure stable optimization, and Vision Transformers are trained with a larger budget for the number of epochs due to slower convergence.

\paragraph{Learning-rate spectrum and edge-of-stability.}
Empirically, we observe a continuum of behaviors as the learning rate increases. In particular, the implicit bias toward lower-curvature (and lower attribution sensitivity) solutions becomes progressively stronger with increasing learning rate, and is most pronounced near the edge-of-stability—beyond which training diverges. The ICR configurations correspond to operating close to this regime, while ``Base'' configurations lie in a more conservative, stable region. This spectrum explains the smooth progression in attribution robustness observed across learning rates in our experiments (see \cref{fig:lr-curvature-val-acc-imagenette-resnet50}).

\begin{figure}[ht]
\centering
\includegraphics[width=0.5\columnwidth]{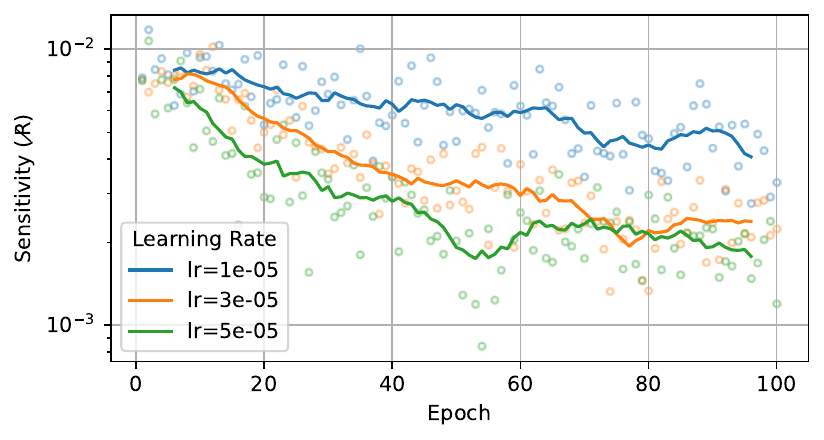}
\caption{\textbf{Implicit Curvature Regularization Emerges Early During Training.}
We visualize attribution sensitivity (a proxy for input curvature) across training for 500 validation samples across different learning rates. Due to computational constraints, this experiment is conducted on a 3-layer MLP trained on MNIST. Points denote raw measurements, while curves are smoothed using a sliding window of size 10.
Consistent with our theoretical framework, higher learning rates induce stronger implicit curvature regularization. However, this effect appears well before convergence, supporting the practical relevance of implicit curvature regularization (ICR) beyond the near-convergence regime assumed in \cref{theorem:ijr-is-good-for-input-curvature} (Assumption (i)). We observe a continuum of behaviors: curvature is progressively reduced as the learning rate increases. This empirically validates that the inductive bias toward low-sensitivity solutions manifests throughout training, rather than only at convergence.
}
\label{fig:icr-vs-epoch}
\end{figure}

\begin{figure}[ht]
\centering
\includegraphics[width=0.5\columnwidth]{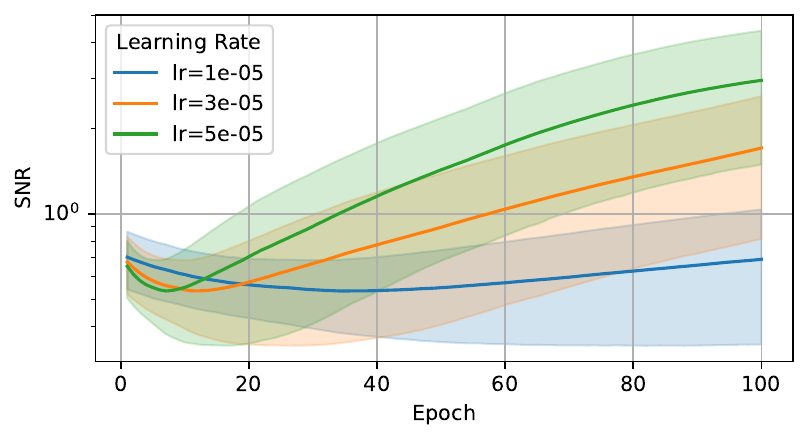}
\caption{\textbf{Signal-to-Noise Ratio $c(x)$ Increases with Learning Rate Near Convergence.}
We visualize the signal-to-noise ratio (SNR) defined in \cref{eq:snr} across different learning rates, evaluated on 500 validation samples. Due to computational constraints, this experiment is conducted on a 3-layer MLP trained on MNIST. The solid line denotes the mean SNR, while the shaded region captures variability across input samples.
We observe that the SNR increases with the learning rate, and the trend is most pronounced in the later stages of training, consistent with the near-convergence regime considered in our analysis. These results provide empirical support for Assumption (iii) in \cref{theorem:ijr-is-good-for-input-curvature}, indicating that higher learning rates induce regimes where the aggregate layer-wise signal dominates noise. This validates the practical relevance of the assumption and its role in explaining the emergence of low-sensitivity solutions.
}
\label{fig:snr-vs-epoch}
\end{figure}

\paragraph{Early stopping and robustness trade-off.}
Since stronger implicit regularization near the edge-of-stability may introduce higher optimization noise or slightly degrade validation performance, early stopping acts as a practical mechanism to select a desirable point along the robustness–accuracy spectrum. In particular, stopping earlier in training can mitigate potential drops in validation accuracy while still benefiting from partially developed curvature regularization. Therefore, robustness should be viewed as a controllable property along the training trajectory rather than a single fixed outcome. The behavior of input curvature along training trajectory has been visualized in \cref{fig:icr-vs-epoch}.

We also investigated the effect of L2 regularization during training. Although regularization does influence curvature, our results indicate that operating near the edge of stability is by far the dominant factor in governing the curvature. 

Moreover, while this analysis of ICR provides a principled and lightweight method to improve the robustness of gradient-based attribution, we complement it by investigating how methods designed for enhancing the gradient-based attributions affect the robustness of attention-based attributions.

Additional experiments using smaller-scale datasets such as STL10, CIFAR10 are also included.
For these datasets, we designed proportionally smaller networks to account for their reduced spatial resolution compared to the 224$\times$224 input size used for ViT-B/16.
The results for smaller datasets generally follow the same trend observed with Imagenette.

\begin{table*}[hb]
\caption{
\textbf{Summary of Attribution Robustness Across Architectures and Attribution Methods.}
This table synthesizes the main theoretical and empirical findings of this work and highlights their practical implications for improving attribution robustness. While the empirical results for ViTs and \kernelized attention identify key factors influencing attribution robustness, a deeper theoretical characterization of each configuration is left for future work.
}
\label{tab:summary-findings}
\centering
\begin{tabular}{ccccccccc}
\toprule
\multirow{2}{*}{\raisebox{-0.5ex}{Architecture}}& \multicolumn{2}{c}{Attribution} & \multicolumn{2}{c}{Influenced by} & \multirow{2}{*}{\raisebox{-0.5ex}{\footnotesize Regularized by}} & \multirow{2}{*}{\raisebox{-0.5ex}{Discussed in}}\\
\cmidrule(lr){2-3}
\cmidrule(lr){4-5}
 & {\footnotesize Grad-Based} & {\footnotesize Att-Based} & {\footnotesize Input Curv.} & {\footnotesize Layer Ent.} &  \\
\midrule
\multirow{2}{*}{CNNs/MLPs} & \multirow{2}{*}{\cmark} & & \multirow{2}{*}{\cmark} & & \multirow{2}{*}{ICR} & {\scriptsize \cref{theorem:ijr-is-good-for-input-curvature}} \\
& & & & & & {\scriptsize \cref{tab:robustness-grad-alternarives,tab:robustness-grad-stl10,tab:robustness-grad-cifar10,fig:deeper-look-par-aar-resnet50,fig:lr-curvature-val-acc-imagenette-resnet50}}\\
\cmidrule(lr){1-7}
\multirow{4}{*}{{\scriptsize Softmax Attention}} & \multirow{2}{*}{\cmark} & & \multirow{2}{*}{\cmark} & \multirow{2}{*}{\cmark} & \multirow{2}{*}{\raisebox{-0.1ex}{ICR$^\dagger$}} & {\scriptsize \cref{sec:differential-learning-rates-for-vits}}\\
& & & & & & {\scriptsize \cref{tab:robustness-grad-alternarives,tab:robustness-grad-stl10,tab:robustness-grad-cifar10}}\\
\cmidrule(lr){2-7}
& & \multirow{2}{*}{\cmark} &  & \multirow{2}{*}{\cmark} & \multirow{2}{*}{\xmark} & {\scriptsize \cref{lemma:attention-entropy-notR}}\\
& & & & & & {\scriptsize\cref{fig:att-ent-vs-lr-appendix,tab:robustness-attatt,tab:robustness-attatt-stl10,tab:robustness-attatt-cifar10}}\\
\cmidrule(lr){1-7}
\multirow{3}{*}{{\scriptsize\Kernelized Attention}}  & \cmark & & \cmark & & ICR$^\dagger$ & {\scriptsize\cref{tab:robustness-grad-alternarives,tab:robustness-grad-stl10,tab:robustness-grad-cifar10}}\\
\cmidrule(lr){2-7}
& & \multirow{2}{*}{\cmark} & \multirow{2}{*}{\cmark} & & \multirow{2}{*}{\raisebox{-0.1ex}{ICR$^\dagger$}} & {\scriptsize \cref{corollary:ent-rob-normalization}} \\
& & & & & & {\scriptsize \cref{fig:lin-att-gelu-ent-vs-lr-appendix,tab:robustness-attatt,tab:robustness-attatt-stl10,tab:robustness-attatt-cifar10,fig:lr-val-acc-cifar10}}\\
\bottomrule
\end{tabular}
\end{table*}

We observed that the robustness characteristics of different attribution methods are highly correlated.
This correlation is illustrated in~\cref{fig:correlated-not-R-imagenette} for the Imagenette dataset. 
We use a fixed attack configuration with 10 steps, a step size of 0.01 for gradient-based methods 
and 50 steps of size 0.05 for attention-based approaches as they often lead to smaller numbers. 
Furthermore, a perturbation radius of 1 is fixed with the input samples preprocessed to have isotropic standard normal distribution. 

Following~\cite{dombrowski_towards_2020}, we used $\gamma = 1e-4$ for the label consistency in the adversarial loss.
This value should be small since the dimensionality of the input space is usually orders of magnitude higher than the number of classes.
During the adversarial attacks, we maintain a fixed input norm through iterations, ensuring consistency of input norms across methods.
This configuration is held constant across all experiments that are compared in the tables.

Although the absolute metric values may vary with attack configuration, the overall trends remain consistent, making consistent settings essential for comparability.

Because attribution an $e(\vx)$ may lie in a space of low dimensionality—\eg, GradCAM operates in a lower-dimensional space—we resize all attributions to the input resolution.
We then normalize them to unit norm for the angle-based comparison, ensuring that $\notR$ reflects directional rather than magnitude variations.
Also, as $\notR$ quantifies the sensitivity, lower values indicate higher robustness.

\paragraph{No Activation Replacement at Evaluation.}
As outlined in \cref{sec:notation}, we compute attributions using negative log-probabilities. This leads to nonzero second-order gradients in ReLU networks, eliminating the need for post-hoc activation changes during evaluation (as used in~\cite{ivankay_far_2022,dombrowski_explanations_2019}). This simplifies the protocol, improves reproducibility, and more accurately reflects real deployment settings.

\subsection{Differential Learning Rates for Vision Transformers}
\label{sec:differential-learning-rates-for-vits}
\paragraph{Conservative Learning Rates in Training ViTs.} It is well established that ViT architectures require more conservative learning-rate schedules compared to convolutional networks.
In particular, best practices for ViTs typically adopt base learning rates an order of magnitude smaller than those used for CNNs, reflecting their increased sensitivity to hyperparameters~\cite{wang_scaled_2022,touvron_training_2021,touvron_three_2022}.
We followed this rationale when selecting a base learning rate for the unregularized ViT baseline (denoted as Base in our experiments).

\paragraph{Training Instability of ViTs.} 
The training instability of ViTs, suggests that naively applying implicit curvature regularization (ICR) is impractical for this architectures. Unlike CNNs, applying a single large learning rate $\eta$ destabilizes ViT training, indicating that $\tr(\HH_\vw)$ has grown too large to satisfy the stability condition in \cref{eq:ijr-condition}.

\paragraph{A Diagnosis of The Instability in Training ViTs.} 
Analyzing the parameter-space Gauss–Newton matrix $\GG_\vw \approx \HH_\vw$ by partitioning the parameters $\vw = [\vw_{bb}, \vw_{cls}]$ into the backbone $\vw_{bb}$ and the classifier head $\vw_{cls}$, induces a $2 \times 2$ block structure:
\begin{equation}
\GG_\vw = \begin{pmatrix} \GG_{bb} & \GG_{bc} \\ \GG_{cb} & \GG_{cls} \end{pmatrix}.
\end{equation}
The dominant instability arises from the classifier block $\GG_{cls} = \nabla_{\vw_\text{cls}} \vz \nabla_\vz \vp \nabla_{\vw_\text{cls}} \vz^\top$.
For a linear classifier, $\nabla_{\vw_\text{cls}} \vz$ corresponds to the feature embedding $\mathbf{h}$ (\eg, the [CLS] token) produced by the backbone, giving $\lambda_{\max}(\GG_{cls}) \propto \|\mathbf{h}\|^2$.
Since ViTs tend to yield high-norm embeddings~\cite{touvron_three_2022}, $\GG_{cls}$ acquires large eigenvalues, inflating $\|\HH_\vw\|_\sigma$ and causing divergence when $\eta$ is applied uniformly to all parameters.

\paragraph{Understanding The Consequences of~\cref{lemma:attention-entropy-notR}.} 
A direct consequence of~\cref{lemma:attention-entropy-notR} is that the sensitivity of a gradient-based attribution propagating through attention layers scales multiplicatively with the layerwise attention entropies. Since larger learning rates tend to reduce attention entropy, this implies that attribution sensitivity increases with learning rate for both gradient-based and attention-based methods when applied to ViT architectures.

A key difference, however, is that for gradient-based attributions the overall sensitivity can be effectively controlled by regularizing only the final layer. In contrast, attention-based attributions are typically computed per layer or aggregated additively across layers, making them inherently sensitive to entropy collapse at any intermediate layer. Consequently, the multiplicative entropy effect cannot be neutralized by controlling a single layer, this guides us on how implicit curvature regularization can be adapted for gradient-based of ViT but not for attention-based attribution.

\paragraph{A Practical Approach to Stability in Training ViTs.} 
Motivated by this observation, we introduce a \emph{differential learning rate} scheme that keeps a learning rate $\eta_{bb}$ near the edge of stability for the backbone and a smaller base learning rate $\eta_{cls}=\eta_{\text{base}}$ to the classifier head.
This is a necessary adaptation of ICR to make it applicable to ViTs. A higher $\eta_{bb}$ enhances the robustness and highly localized attributions (see~\cref{sec:vit-layerwise-entropies-vs-lr}), while a reduced $\eta_{cls}$ stabilizes the optimization of $\GG_{cls}$.
We refer to this modified approach as ICR$^\dagger$, which retains the beneficial implicit regularization of backbone representations while preventing the instabilities induced by the classifier head.

While the differential learning rates are required for stability of the ViTs, we should highlight that the results do not require a hyperparameter search for the ratio. Therefore, we have used 1:10 ratio across all experiments (see~\cref{tab:hparams,tab:hparams-unnorm}).

\begin{figure}[ht]
\centering
\includegraphics[width=0.6\columnwidth]{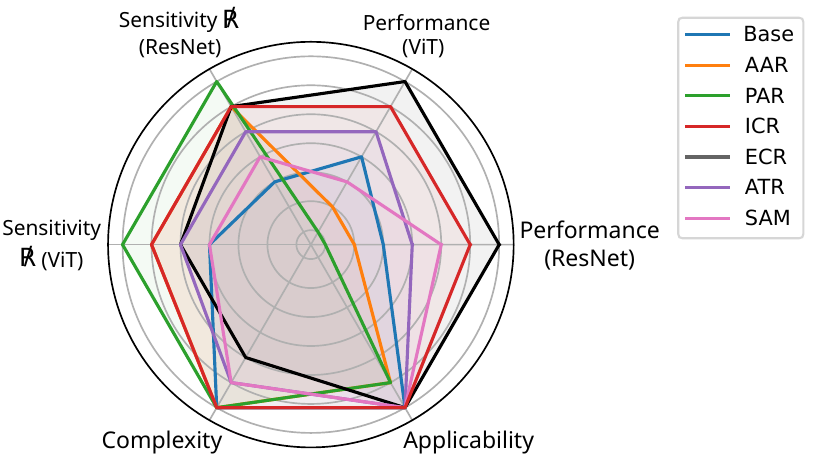}
\caption{\textbf{Comparative Analysis of Method Rankings.}
Radar plot summarizing the rankings reported in \cref{tab:robustness-orth}, where the outermost radius corresponds to the best rank (1), \ie larger area is better. ICR achieves a consistent performance across all evaluated criteria. While ECR ranks highest in predictive performance, it does so at a substantially higher computational cost, and PAR leads to a marked degradation in predictive accuracy.
For completeness, \cref{tab:average-robustness-grad-alternarives} reports average-case robustness metrics for the same methods applied to ResNet50 and ViT-B/16 on Imagenette; under this evaluation, SAM emerges as a competitive alternative to ECR, still, with a large computational cost.
}
\label{fig:spider}
\end{figure}

\begin{table*}
\caption{
\textbf{Effect of Training Strategies on Gradient-Based Attribution Adversarial Sensitivity (Imagenette).}
Attribution sensitivity, measured by $\notR$ (defined in \cref{eq:notR}), on Imagenette (224$\times$224). We report two-sample
t-tests at a 95\% confidence level against the unregularized (Base) baseline, using 500 samples per configuration. Results for additional datasets are provided in \cref{tab:robustness-grad-cifar10}. Several methods improve robustness at different computational costs and with potential accuracy trade-offs. Notably, ResNet50 is highly sensitive to activation choice: post-hoc activation replacement (PAR) yields strong robustness gains but can significantly degrade accuracy (see \cref{tab:robustness-orth}). We have included the robustness of gradient-based attribution for unnormalized \kernelized attention with ViT-B/16 and GELU activation function in the row named ``Kern.''. Implicit curvature regularization (ICR) achieves robustness comparable to explicit curvature regularization (ECR) while reducing training cost by approximately 4$\times$. These results highlight that robustness improvements often entail trade-offs not captured by $\notR$ alone, motivating complementary metrics; \cref{tab:robustness-orth} summarizes the trade-offs among accuracy, robustness, and training time.
}
\label{tab:robustness-grad-alternarives}
\centering
\begin{tabular}{lccccccc}
\toprule
              &Methods& \multicolumn{1}{c}{VGrad}            & $x\odot$Grad  & IntGrad         & GradCam           & DeepLift        & GuideBP        \\ 
\midrule
\parbox[t]{1mm}{\multirow{7}{*}{\rotatebox[origin=c]{90}{ResNet50}}} 
&{\footnotesize No Regularization (Base)} & 0.61{\stdsize$\pm$0.54} & 0.12{\stdsize$\pm$0.06} & 0.10{\stdsize$\pm$0.05} & 0.22{\stdsize$\pm$0.24} & 0.10{\stdsize$\pm$0.07} & 0.03{\stdsize$\pm$0.04} \\
&{\footnotesize Ante-hoc Act. Reg. (AAR)} & \textbf{0.10}{\stdsize$\pm$\textbf{0.21}} & \textbf{0.03}{\stdsize$\pm$\textbf{0.05}} & \textbf{0.04}{\stdsize$\pm$\textbf{0.08}} & \textbf{0.05}{\stdsize$\pm$\textbf{0.08}} & \textbf{0.04}{\stdsize$\pm$\textbf{0.08}} & 0.03{\stdsize$\pm$0.04} \\
&{\footnotesize Post-hoc Act. Reg. (PAR)} & \textbf{0.00}{\stdsize$\pm$\textbf{0.01}} & \textbf{0.00}{\stdsize$\pm$\textbf{0.02}} & \textbf{0.00}{\stdsize$\pm$\textbf{0.00}} & \textbf{0.00}{\stdsize$\pm$\textbf{0.00}} & \textbf{0.00}{\stdsize$\pm$\textbf{0.01}} & \textbf{0.00}{\stdsize$\pm$\textbf{0.01}} \\
&{\footnotesize Explicit Curv. Reg. (ECR)} & \textbf{0.11}{\stdsize$\pm$\textbf{0.31}} & \textbf{0.04}{\stdsize$\pm$\textbf{0.07}} & \textbf{0.08}{\stdsize$\pm$\textbf{0.10}} & \textbf{0.09}{\stdsize$\pm$\textbf{0.22}} & \textbf{0.04}{\stdsize$\pm$\textbf{0.07}} & \textbf{0.01}{\stdsize$\pm$\textbf{0.02}} \\
&{\footnotesize Adv. Trainig Reg. (ATR)} & \textbf{0.21}{\stdsize$\pm$\textbf{0.29}} & \textbf{0.08}{\stdsize$\pm$\textbf{0.06}} & \textbf{0.04}{\stdsize$\pm$\textbf{0.02}} & \textbf{0.15}{\stdsize$\pm$\textbf{0.19}} & \textbf{0.05}{\stdsize$\pm$\textbf{0.07}} & 0.03{\stdsize$\pm$0.04} \\
&{\footnotesize Sharp. Aware Min. (SAM)} & \textbf{0.30}{\stdsize$\pm$\textbf{0.54}} & 0.11{\stdsize$\pm$0.11} & 0.16{\stdsize$\pm$0.17} & \textbf{0.12}{\stdsize$\pm$\textbf{0.17}} & 0.10{\stdsize$\pm$0.13} & 0.04{\stdsize$\pm$0.08} \\
\cmidrule{2-8}
&{\footnotesize Implicit Curv. Reg. (ICR)} & \textbf{0.10}{\stdsize$\pm$\textbf{0.24}} & \textbf{0.04}{\stdsize$\pm$\textbf{0.07}} & \textbf{0.05}{\stdsize$\pm$\textbf{0.06}} & \textbf{0.05}{\stdsize$\pm$\textbf{0.16}} & \textbf{0.04}{\stdsize$\pm$\textbf{0.08}} & \textbf{0.01}{\stdsize$\pm$\textbf{0.03}} \\
\midrule
\parbox[t]{1mm}{\multirow{7}{*}{\rotatebox[origin=c]{90}{ViT-B/16}}} 
&{\footnotesize No Regularization (Base)} & 0.11{{\stdsize$\pm$0.05}} & 0.10{{\stdsize$\pm$0.04}} & 0.04{{\stdsize$\pm$0.01}} & 0.06{{\stdsize$\pm$0.04}} & 0.01{{\stdsize$\pm$0.01}} & 0.09{{\stdsize$\pm$0.04}} \\
&{\footnotesize Ante-hoc Act. Reg. (AAR)} & \textbf{0.07}{\stdsize$\pm$\textbf{0.04}} & \textbf{0.05}{\stdsize$\pm$\textbf{0.02}} & \textbf{0.03}{\stdsize$\pm$\textbf{0.01}} & \textbf{0.04}{\stdsize$\pm$\textbf{0.03}} & 0.01{{\stdsize$\pm$0.01}} & \textbf{0.07}{\stdsize$\pm$\textbf{0.04}} \\
&{\footnotesize Post-hoc Act. Reg. (PAR)} & \textbf{0.06}{\stdsize$\pm$\textbf{0.04}} & \textbf{0.04}{\stdsize$\pm$\textbf{0.02}} & \textbf{0.02}{\stdsize$\pm$\textbf{0.01}} & \textbf{0.05}{\stdsize$\pm$\textbf{0.04}} & \textbf{0.00}{\stdsize$\pm$\textbf{0.01}} & \textbf{0.06}{\stdsize$\pm$\textbf{0.04}} \\
&{\footnotesize Explicit Curv. Reg. (ECR)} & \textbf{0.08}{\stdsize$\pm$\textbf{0.08}} & \textbf{0.06}{\stdsize$\pm$\textbf{0.04}} & 0.05{{\stdsize$\pm$0.03}} & 0.09{{\stdsize$\pm$0.11}} & 0.03{{\stdsize$\pm$0.06}} & \textbf{0.06}{\stdsize$\pm$\textbf{0.07}} \\
&{\footnotesize Adv. Trainig Reg. (ATR)} & \textbf{0.08}{\stdsize$\pm$\textbf{0.03}} & \textbf{0.07}{\stdsize$\pm$\textbf{0.02}} & \textbf{0.02}{\stdsize$\pm$\textbf{0.01}} & \textbf{0.05}{\stdsize$\pm$\textbf{0.03}} & \textbf{0.00}{\stdsize$\pm$\textbf{0.01}} & \textbf{0.06}{\stdsize$\pm$\textbf{0.03}} \\
&{\footnotesize Sharp. Aware Min. (SAM)} & \textbf{0.10}{\stdsize$\pm$\textbf{0.05}} & \textbf{0.08}{\stdsize$\pm$\textbf{0.03}} & 0.04{{\stdsize$\pm$0.01}} & \textbf{0.05}{\stdsize$\pm$\textbf{0.06}} & 0.01{{\stdsize$\pm$0.01}} & 0.10{{\stdsize$\pm$0.04}} \\
\cmidrule{2-8}
&{\footnotesize Implicit Curv. Reg. (ICR$^\dagger$)} & \textbf{0.07}{\stdsize$\pm$\textbf{0.04}} & \textbf{0.06}{\stdsize$\pm$\textbf{0.03}} & 0.04{{\stdsize$\pm$0.02}} & \textbf{0.03}{\stdsize$\pm$\textbf{0.03}} & 0.02{{\stdsize$\pm$0.02}} & \textbf{0.07}{\stdsize$\pm$\textbf{0.04}} \\

\midrule
\parbox[t]{1mm}{\rotatebox[origin=c]{90}{\scriptsize Kern.}}
&{\footnotesize Implicit Curv. Reg. (ICR$^\dagger$)} & \textbf{0.05}{\stdsize$\pm$\textbf{0.02}} & \textbf{0.04}{\stdsize$\pm$\textbf{0.02}} & \textbf{0.02}{\stdsize$\pm$\textbf{0.01}} & \textbf{0.05}{\stdsize$\pm$\textbf{0.06}} & 0.02{\stdsize$\pm$0.02} & \textbf{0.05}{\stdsize$\pm$\textbf{0.02}} \\
\bottomrule
\end{tabular}
\end{table*}


\begin{table*}
\caption{
\textbf{Effect of Training Strategies on Gradient-Based Attribution Average Sensitivity (Imagenette).}
Attribution average sensitivity, measured by $\notR$ (defined in \cref{eq:notR}), on Imagenette (224$\times$224). We report two-sample t-tests at the 95\% confidence level against the unregularized (Base) baseline, using 500 samples per configuration. While the theoretical results in \cref{theorem:ijr-is-good-for-input-curvature} pertain specifically to adversarial robustness, we include robustness to random perturbations as an ablation on the notion of robustness. As shown in the table, the observed trends closely mirror those obtained in the adversarial setting (\cref{tab:robustness-grad-alternarives}). Notably, SAM performs substantially better under average-case robustness compared to the adversarial robustness, which is on par with ICR. Finally, as expected, PAR continues to attain the lowest sensitivity values by sacrificing validation accuracy.
}
\label{tab:average-robustness-grad-alternarives}
\centering
\begin{tabular}{lccccccc}
\toprule
              &Methods& \multicolumn{1}{c}{VGrad}            & $x\odot$Grad  & IntGrad         & GradCam           & DeepLift        & GuideBP        \\ 
\midrule
\parbox[t]{1mm}{\multirow{7}{*}{\rotatebox[origin=c]{90}{ResNet50}}} 
&{\footnotesize No Regularization (Base)}  & 0.23{\stdsize$\pm$0.03} & 0.25{\stdsize$\pm$0.05} & 0.20{\stdsize$\pm$0.11} & 0.01{\stdsize$\pm$0.00} & 0.07{\stdsize$\pm$0.08} & 0.07{\stdsize$\pm$0.08} \\

&{\footnotesize Ante-hoc Act. Reg. (AAR)} & \textbf{0.15}{\stdsize$\pm$\textbf{0.19}} & \textbf{0.18}{\stdsize$\pm$\textbf{0.20}} & \textbf{0.13}{\stdsize$\pm$\textbf{0.10}} & 0.03{\stdsize$\pm$0.12} & \textbf{0.03}{\stdsize$\pm$\textbf{0.05}} & 0.15{\stdsize$\pm$0.19} \\

&{\footnotesize Post-hoc Act. Reg. (PAR)} & \textbf{0.13}{\stdsize$\pm$\textbf{0.11}} & \textbf{0.17}{\stdsize$\pm$\textbf{0.16}} & \textbf{0.15}{\stdsize$\pm$\textbf{0.13}} & \textbf{0.00}{\stdsize$\pm$\textbf{0.00}} & 0.13{\stdsize$\pm$0.11} & 0.13{\stdsize$\pm$0.11} \\

&{\footnotesize Explicit Curv. Reg. (ECR)} & \textbf{0.15}{\stdsize$\pm$\textbf{0.16}} & \textbf{0.16}{\stdsize$\pm$\textbf{0.16}} & \textbf{0.10}{\stdsize$\pm$\textbf{0.07}} & 0.03{\stdsize$\pm$0.13} & \textbf{0.04}{\stdsize$\pm$\textbf{0.05}} & 0.09{\stdsize$\pm$0.16} \\

&{\footnotesize Adv. Trainig Reg. (ATR)} & \textbf{0.19}{\stdsize$\pm$\textbf{0.02}} & \textbf{0.20}{\stdsize$\pm$\textbf{0.05}} & \textbf{0.14}{\stdsize$\pm$\textbf{0.08}} & \textbf{0.00}{\stdsize$\pm$\textbf{0.00}} & \textbf{0.06}{\stdsize$\pm$\textbf{0.07}} & \textbf{0.05}{\stdsize$\pm$\textbf{0.06}} \\

&{\footnotesize Sharp. Aware Min. (SAM)} & \textbf{0.15}{\stdsize$\pm$\textbf{0.10}} & \textbf{0.17}{\stdsize$\pm$\textbf{0.11}} & \textbf{0.11}{\stdsize$\pm$\textbf{0.06}} & 0.02{\stdsize$\pm$0.10} & \textbf{0.06}{\stdsize$\pm$\textbf{0.06}} & 0.21{\stdsize$\pm$0.22} \\

\cmidrule{2-8}
&{\footnotesize Implicit Curv. Reg. (ICR)} & \textbf{0.15}{\stdsize$\pm$\textbf{0.15}} & \textbf{0.17}{\stdsize$\pm$\textbf{0.15}} & \textbf{0.11}{\stdsize$\pm$\textbf{0.07}} & 0.05{\stdsize$\pm$0.17} & \textbf{0.04}{\stdsize$\pm$\textbf{0.05}} & 0.11{\stdsize$\pm$0.17} \\
\midrule
\parbox[t]{1mm}{\multirow{7}{*}{\rotatebox[origin=c]{90}{ViT-B/16}}}

&{\footnotesize No Regularization (Base)} & 0.05{\stdsize$\pm$0.01} & 0.05{\stdsize$\pm$0.01} & 0.01{\stdsize$\pm$0.00} & 0.01{\stdsize$\pm$0.01} & 0.00{\stdsize$\pm$0.00} & 0.05{\stdsize$\pm$0.01} \\

&{\footnotesize Ante-hoc Act. Reg. (AAR)} & \textbf{0.01}{\stdsize$\pm$\textbf{0.00}} & \textbf{0.01}{\stdsize$\pm$\textbf{0.00}} & \textbf{0.00}{\stdsize$\pm$\textbf{0.00}} & 0.01{\stdsize$\pm$0.00} & 0.00{\stdsize$\pm$0.00} & \textbf{0.01}{\stdsize$\pm$\textbf{0.00}} \\

&{\footnotesize Post-hoc Act. Reg. (PAR)} & \textbf{0.01}{\stdsize$\pm$\textbf{0.00}} & \textbf{0.01}{\stdsize$\pm$\textbf{0.00}} & \textbf{0.00}{\stdsize$\pm$\textbf{0.00}} & \textbf{0.00}{\stdsize$\pm$\textbf{0.00}} & 0.00{\stdsize$\pm$0.00} & \textbf{0.01}{\stdsize$\pm$\textbf{0.00}} \\

&{\footnotesize Explicit Curv. Reg. (ECR)} & \textbf{0.02}{\stdsize$\pm$\textbf{0.03}} & \textbf{0.01}{\stdsize$\pm$\textbf{0.03}} & 0.01{\stdsize$\pm$0.00} & 0.03{\stdsize$\pm$0.13} & 0.00{\stdsize$\pm$0.00} & \textbf{0.01}{\stdsize$\pm$\textbf{0.03}} \\

&{\footnotesize Adv. Trainig Reg. (ATR)} & \textbf{0.04}{\stdsize$\pm$\textbf{0.01}} & \textbf{0.04}{\stdsize$\pm$\textbf{0.01}} & 0.01{\stdsize$\pm$0.00} & 0.01{\stdsize$\pm$0.00} & 0.00{\stdsize$\pm$0.00} & \textbf{0.04}{\stdsize$\pm$\textbf{0.01}} \\

&{\footnotesize Sharp. Aware Min. (SAM)}  & \textbf{0.04}{\stdsize$\pm$\textbf{0.01}} & \textbf{0.04}{\stdsize$\pm$\textbf{0.01}} & 0.01{\stdsize$\pm$0.00} & 0.01{\stdsize$\pm$0.01} & 0.00{\stdsize$\pm$0.00} & \textbf{0.04}{\stdsize$\pm$\textbf{0.01}} \\

\cmidrule{2-8}
&{\footnotesize Implicit Curv. Reg. (ICR$^\dagger$)} & \textbf{0.01}{\stdsize$\pm$\textbf{0.01}} & \textbf{0.01}{\stdsize$\pm$\textbf{0.01}} & 0.01{\stdsize$\pm$0.00} & \textbf{0.00}{\stdsize$\pm$\textbf{0.01}} & 0.00{\stdsize$\pm$0.00} & \textbf{0.01}{\stdsize$\pm$\textbf{0.00}} \\
\bottomrule
\end{tabular}
\end{table*}


\begin{table*}
\caption{
\textbf{Effect of Training Strategies on Gradient-Based Attribution Adversarial Sensitivity (STL10).}
Attribution sensitivity, measured by $\notR$ (defined in \cref{eq:notR}), on STL10 (96$\times$96). We report two-sample t-tests at a 95\% confidence level against the unregularized (Base) baseline, using 500 samples per configuration.
We have included the robustness of gradient-based attribution for unnormalized \kernelized attention with ViT-Tiny and GELU activation function in the row named ``Kern.''.
Several methods improve robustness at different computational costs and with potential accuracy trade-offs.
Unlike larger datasets such as Imagenette—where ICR achieves a favorable trade-off between robustness and computational cost—limited-data regimes such as STL10 prevent ViTs from reliably converging, reflecting their data-hungry nature. As a result, attention-based attribution robustness becomes unpredictable and harder to control in overfitting-prone architectures. Improving the data efficiency of ICR is left for future work. 
The results on ResNet34 underscore that robustness gains often involve trade-offs not captured by $\notR$ alone; \cref{tab:robustness-orth} summarizes the trade-offs among accuracy, robustness, and training time for Imagenette dataset.
}
\label{tab:robustness-grad-stl10}
\centering
\begin{tabular}{lccccccc}
\toprule
              &Methods& \multicolumn{1}{c}{VGrad}            & $x\odot$Grad  & IntGrad         & GradCam           & DeepLift        & GuideBP        \\ 
\midrule
\parbox[t]{2mm}{\multirow{7}{*}{\rotatebox[origin=c]{90}{ResNet34}}} &
{\footnotesize No Regularization (Base)} & 0.27{\stdsize$\pm$0.06} & 0.28{\stdsize$\pm$0.09} & 0.15{\stdsize$\pm$0.09} & 0.09{\stdsize$\pm$0.12} & 0.10{\stdsize$\pm$0.08} & 0.05{\stdsize$\pm$0.06} \\
&{\footnotesize Ante-hoc Act. Reg. (AAR)} & \textbf{0.15}{\stdsize$\pm$\textbf{0.08}} & \textbf{0.18}{\stdsize$\pm$\textbf{0.13}} & 0.15{\stdsize$\pm$0.11} & 0.09{\stdsize$\pm$0.10} & \textbf{0.07}{\stdsize$\pm$\textbf{0.06}} & 0.15{\stdsize$\pm$0.08} \\
&{\footnotesize Post-hoc Act. Reg. (PAR)} & \textbf{0.14}{\stdsize$\pm$\textbf{0.08}} & \textbf{0.17}{\stdsize$\pm$\textbf{0.13}} & 0.14{\stdsize$\pm$0.11} & \textbf{0.07}{\stdsize$\pm$\textbf{0.10}} & \textbf{0.08}{\stdsize$\pm$\textbf{0.06}} & 0.14{\stdsize$\pm$0.08} \\
&{\footnotesize Explicit Curv. Reg. (ECR)} & \textbf{0.20}{\stdsize$\pm$\textbf{0.24}} & \textbf{0.20}{\stdsize$\pm$\textbf{0.25}} & \textbf{0.10}{\stdsize$\pm$\textbf{0.11}} & 0.12{\stdsize$\pm$0.24} & \textbf{0.04}{\stdsize$\pm$\textbf{0.06}} & 0.13{\stdsize$\pm$0.25} \\
&{\footnotesize Adv. Trainig Reg. (ATR)} & \textbf{0.18}{\stdsize$\pm$\textbf{0.06}} & \textbf{0.19}{\stdsize$\pm$\textbf{0.10}} & \textbf{0.12}{\stdsize$\pm$\textbf{0.11}} & \textbf{0.04}{\stdsize$\pm$\textbf{0.06}} & \textbf{0.08}{\stdsize$\pm$\textbf{0.04}} & \textbf{0.04}{\stdsize$\pm$\textbf{0.04}} \\
&{\footnotesize Sharp. Aware Min. (SAM)} & \textbf{0.16}{\stdsize$\pm$\textbf{0.18}} & \textbf{0.17}{\stdsize$\pm$\textbf{0.19}} & \textbf{0.11}{\stdsize$\pm$\textbf{0.12}} & 0.08{\stdsize$\pm$0.18} & \textbf{0.04}{\stdsize$\pm$\textbf{0.06}} & 0.10{\stdsize$\pm$0.21} \\
\cmidrule{2-8}
&{\footnotesize Implicit Curv. Reg. (ICR)} & \textbf{0.16}{\stdsize$\pm$\textbf{0.19}} & \textbf{0.17}{\stdsize$\pm$\textbf{0.20}} & \textbf{0.09}{\stdsize$\pm$\textbf{0.09}} & \textbf{0.07}{\stdsize$\pm$\textbf{0.15}} & \textbf{0.04}{\stdsize$\pm$\textbf{0.06}} & 0.12{\stdsize$\pm$0.24} \\
\midrule
\parbox[t]{2mm}{\multirow{7}{*}{\rotatebox[origin=c]{90}{ViT-Tiny ($d=8$)}}} 
&{\footnotesize No Regularization (Base)} & 0.69{\stdsize$\pm$0.28} & 0.69{\stdsize$\pm$0.27} & 0.39{\stdsize$\pm$0.16} & 0.38{\stdsize$\pm$0.17} & 0.09{\stdsize$\pm$0.11} & 0.45{\stdsize$\pm$0.20} \\
&{\footnotesize Ante-hoc Act. Reg. (AAR)} & 0.69{\stdsize$\pm$0.26} & 0.68{\stdsize$\pm$0.26} & 0.42{\stdsize$\pm$0.15} & 0.60{\stdsize$\pm$0.26} & 0.23{\stdsize$\pm$0.23} & 0.69{\stdsize$\pm$0.26} \\
&{\footnotesize Post-hoc Act. Reg. (PAR)} & 0.77{\stdsize$\pm$0.28} & 0.75{\stdsize$\pm$0.27} & 0.42{\stdsize$\pm$0.19} & 0.51{\stdsize$\pm$0.21} & 0.10{\stdsize$\pm$0.15} & 0.77{\stdsize$\pm$0.28} \\
&{\footnotesize Explicit Curv. Reg. (ECR)} & \textbf{0.65}{\stdsize$\pm$\textbf{0.27}} & 0.66{\stdsize$\pm$0.28} & 0.40{\stdsize$\pm$0.14} & 0.39{\stdsize$\pm$0.17} & 0.17{\stdsize$\pm$0.18} & 0.43{\stdsize$\pm$0.19} \\
&{\footnotesize Adv. Trainig Reg. (ATR)} & \textbf{0.52}{\stdsize$\pm$\textbf{0.26}} & \textbf{0.54}{\stdsize$\pm$\textbf{0.27}} & \textbf{0.26}{\stdsize$\pm$\textbf{0.10}} & \textbf{0.30}{\stdsize$\pm$\textbf{0.15}} & 0.14{\stdsize$\pm$0.16} & \textbf{0.41}{\stdsize$\pm$\textbf{0.21}} \\
&{\footnotesize Sharp. Aware Min. (SAM)} & 0.91{\stdsize$\pm$0.52} & 0.98{\stdsize$\pm$0.52} & 0.64{\stdsize$\pm$0.31} & 0.67{\stdsize$\pm$0.28} & 0.19{\stdsize$\pm$0.20} & 0.61{\stdsize$\pm$0.21} \\
\cmidrule{2-8}
&{\footnotesize Implicit Curv. Reg. (ICR$^\dagger$)} & \textbf{0.65}{\stdsize$\pm$\textbf{0.27}} & 0.66{\stdsize$\pm$0.29} & 0.47{\stdsize$\pm$0.17} & 0.53{\stdsize$\pm$0.24} & 0.18{\stdsize$\pm$0.21} & 0.45{\stdsize$\pm$0.21} \\

\midrule
\parbox[t]{1mm}{\rotatebox[origin=c]{90}{\scriptsize Kern.}}
&{\footnotesize Implicit Curv. Reg. (ICR$^\dagger$)} & \textbf{0.53}{\stdsize$\pm$\textbf{0.20}} & \textbf{0.54}{\stdsize$\pm$\textbf{0.20}} & \textbf{0.34}{\stdsize$\pm$\textbf{0.14}} & \textbf{0.04}{\stdsize$\pm$\textbf{0.16}} & 0.30{\stdsize$\pm$0.27} & 0.53{\stdsize$\pm$0.20} \\
\bottomrule
\end{tabular}
\end{table*}


\begin{table*}
\caption{
\textbf{Effect of Training Strategies on Gradient-Based Attribution Adversarial Sensitivity (CIFAR10).}
Attribution sensitivity, measured by $\notR$ (defined in \cref{eq:notR}), on CIFAR10 (32$\times$32). We report two-sample t-tests at a 95\% confidence level against the unregularized (Base) baseline, using 500 samples per configuration.
We have included the robustness of gradient-based attribution for unnormalized \kernelized attention with ViT-Tiny and GELU activation function in the row named ``Kern.''.
Several methods improve robustness at different computational costs and with potential accuracy trade-offs. Compared to larger datasets, adversarial training (ATR) is particularly effective on CIFAR10. These results underscore that robustness gains often involve trade-offs not captured by $\notR$ alone; \cref{tab:robustness-orth} summarizes the trade-offs among accuracy, robustness, and training time for Imagenette dataset.
}
\label{tab:robustness-grad-cifar10}
\centering
\begin{tabular}{lccccccc}
\toprule
              &Methods& \multicolumn{1}{c}{VGrad}            & $x\odot$Grad  & IntGrad         & GradCam           & DeepLift        & GuideBP        \\ 
\midrule
\parbox[t]{2mm}{\multirow{7}{*}{\rotatebox[origin=c]{90}{ResNet18}}} &
{\footnotesize No Regularization (Base)} & 0.35{\stdsize$\pm$0.26} & 0.35{\stdsize$\pm$0.27} & 0.11{\stdsize$\pm$0.06} & 0.27{\stdsize$\pm$0.32} & 0.06{\stdsize$\pm$0.13} & 0.16{\stdsize$\pm$0.30} \\
&{\footnotesize Ante-hoc Act. Reg. (AAR)} & \textbf{0.08}{\stdsize$\pm$\textbf{0.06}} & \textbf{0.08}{\stdsize$\pm$\textbf{0.04}} & \textbf{0.04}{\stdsize$\pm$\textbf{0.02}} & \textbf{0.06}{\stdsize$\pm$\textbf{0.09}} & \textbf{0.04}{\stdsize$\pm$\textbf{0.06}} & \textbf{0.08}{\stdsize$\pm$\textbf{0.06}} \\
&{\footnotesize Post-hoc Act. Reg. (PAR)} & \textbf{0.02}{\stdsize$\pm$\textbf{0.01}} & \textbf{0.03}{\stdsize$\pm$\textbf{0.01}} & \textbf{0.02}{\stdsize$\pm$\textbf{0.01}} & \textbf{0.01}{\stdsize$\pm$\textbf{0.01}} & \textbf{0.03}{\stdsize$\pm$\textbf{0.01}} & \textbf{0.02}{\stdsize$\pm$\textbf{0.01}} \\
&{\footnotesize Explicit Curv. Reg. (ECR)} & 0.57{\stdsize$\pm$0.36} & 0.56{\stdsize$\pm$0.35} & 0.13{\stdsize$\pm$0.07} & 0.55{\stdsize$\pm$0.40} & 0.07{\stdsize$\pm$0.19} & 0.43{\stdsize$\pm$0.45} \\
&{\footnotesize Adv. Trainig Reg. (ATR)} & \textbf{0.18}{\stdsize$\pm$\textbf{0.10}} & \textbf{0.17}{\stdsize$\pm$\textbf{0.08}} & \textbf{0.07}{\stdsize$\pm$\textbf{0.02}} & \textbf{0.07}{\stdsize$\pm$\textbf{0.10}} & 0.07{\stdsize$\pm$0.08} & \textbf{0.02}{\stdsize$\pm$\textbf{0.01}} \\
&{\footnotesize Sharp. Aware Min. (SAM)} & 0.45{\stdsize$\pm$0.32} & 0.47{\stdsize$\pm$0.32} & 0.13{\stdsize$\pm$0.09} & 0.43{\stdsize$\pm$0.38} & 0.07{\stdsize$\pm$0.16} & 0.34{\stdsize$\pm$0.41} \\
\cmidrule{2-8}
&{\footnotesize Implicit Curv. Reg. (ICR)} & \textbf{0.24}{\stdsize$\pm$\textbf{0.10}} & \textbf{0.23}{\stdsize$\pm$\textbf{0.12}} & 0.14{\stdsize$\pm$0.08} & \textbf{0.08}{\stdsize$\pm$\textbf{0.13}} & 0.13{\stdsize$\pm$0.21} & \textbf{0.05}{\stdsize$\pm$\textbf{0.12}} \\
\midrule
\parbox[t]{2mm}{\multirow{7}{*}{\rotatebox[origin=c]{90}{ViT-Tiny ($d=6$)}}} 
&{\footnotesize No Regularization (Base)} & 0.29{\stdsize$\pm$0.26} & 0.30{\stdsize$\pm$0.26} & 0.10{\stdsize$\pm$0.08} & 0.28{\stdsize$\pm$0.32} & 0.08{\stdsize$\pm$0.14} & 0.21{\stdsize$\pm$0.25} \\
&{\footnotesize Ante-hoc Act. Reg. (AAR)} & 0.28{\stdsize$\pm$0.27} & 0.29{\stdsize$\pm$0.27} & \textbf{0.09}{\stdsize$\pm$\textbf{0.07}} & 0.29{\stdsize$\pm$0.33} & 0.07{\stdsize$\pm$0.14} & 0.28{\stdsize$\pm$0.27} \\
&{\footnotesize Post-hoc Act. Reg. (PAR)} & 0.28{\stdsize$\pm$0.27} & 0.28{\stdsize$\pm$0.27} & 0.10{\stdsize$\pm$0.08} & 0.29{\stdsize$\pm$0.33} & 0.07{\stdsize$\pm$0.14} & 0.28{\stdsize$\pm$0.27} \\
&{\footnotesize Explicit Curv. Reg. (ECR)} & \textbf{0.22}{\stdsize$\pm$\textbf{0.20}} & \textbf{0.22}{\stdsize$\pm$\textbf{0.20}} & \textbf{0.09}{\stdsize$\pm$\textbf{0.07}} & \textbf{0.19}{\stdsize$\pm$\textbf{0.21}} & 0.07{\stdsize$\pm$0.14} & 0.22{\stdsize$\pm$0.20} \\
&{\footnotesize Adv. Trainig Reg. (ATR)} & \textbf{0.25}{\stdsize$\pm$\textbf{0.31}} & \textbf{0.26}{\stdsize$\pm$\textbf{0.31}} & \textbf{0.06}{\stdsize$\pm$\textbf{0.05}} & \textbf{0.24}{\stdsize$\pm$\textbf{0.32}} & \textbf{0.05}{\stdsize$\pm$\textbf{0.10}} & 0.25{\stdsize$\pm$0.31} \\
&{\footnotesize Sharp. Aware Min. (SAM)} & 0.26{\stdsize$\pm$0.24} & \textbf{0.26}{\stdsize$\pm$\textbf{0.24}} & 0.10{\stdsize$\pm$0.08} & 0.25{\stdsize$\pm$0.28} & 0.08{\stdsize$\pm$0.15} & 0.26{\stdsize$\pm$0.24} \\
\cmidrule{2-8}
&{\footnotesize Implicit Curv. Reg. (ICR$^\dagger$)} & \textbf{0.15}{\stdsize$\pm$\textbf{0.08}} & \textbf{0.16}{\stdsize$\pm$\textbf{0.08}} & 0.11{\stdsize$\pm$0.09} & \textbf{0.14}{\stdsize$\pm$\textbf{0.15}} & 0.08{\stdsize$\pm$0.14} & \textbf{0.15}{\stdsize$\pm$\textbf{0.08}} \\

\midrule
\parbox[t]{1mm}{\rotatebox[origin=c]{90}{\scriptsize Kern.}}
&{\footnotesize Implicit Curv. Reg. (ICR$^\dagger$)} & \textbf{0.22}{\stdsize$\pm$\textbf{0.10}} & \textbf{0.23}{\stdsize$\pm$\textbf{0.12}} & 0.11{\stdsize$\pm$0.06} & \textbf{0.10}{\stdsize$\pm$\textbf{0.26}} & 0.08{\stdsize$\pm$0.10} & 0.22{\stdsize$\pm$0.10} \\
\bottomrule
\end{tabular}
\end{table*}

\section{Trade-offs Associated with \Kernelized Attention}
Replacing softmax with unnormalized \kernelized attention enables ICR to improve attention-based attribution robustness, but it also changes the training and interpretability landscape.
Note that this is a drop-in replacement that only uses $\phi(K)\phi(Q^\top)$, for some $\phi$, instead of $\text{Softmax}(QK^\top)$ and can be implemented by replacing the definition of multi-head attention layer with a custom class.

\paragraph{Training Stability.}
Kernel-based attention is more sensitive to data scarcity. On STL-10, where training data is limited, it leads to a larger generalization gap and makes test-time attribution sensitivity harder to control, even when ICR successfully reduces curvature on the training set (see~\cref{tab:robustness-attatt-stl10}. In contrast, on CIFAR-10 and Imagenette, kernel-based attention trains as reliably as softmax attention and achieves comparable validation accuracy. In this regime, ICR transfers cleanly to test-time robustness (~\cref{tab:robustness-attatt-cifar10,tab:robustness-attatt}).

\paragraph{Robustness and Interpretability.}
With softmax attention, \cref{lemma:attention-entropy-notR} shows that normalization blocks the effect of ICR. Using unnormalized kernels removes this bottleneck. Empirically, kernel-based attention with GELU activation allows learning rate–induced curvature regularization to directly reduce attention-based attribution sensitivity, which is not possible under softmax normalization.
Unnormalized kernels also lead to more localized attention maps (compare~\cref{fig:samples-1,fig:samples-2}. As shown in~\cref{tab:robustness-attatt-cifar10,tab:robustness-attatt-stl10,tab:robustness-attatt}, they reduce average attention entropy compared to softmax, yielding more localized attributions, which are typically easier to interpret.

\paragraph{Contrast with Temperature Scaling.}
Temperature scaling provides an alternative way to control attention entropy: increasing the temperature smooths the softmax distribution, which can improve attention-based attrobution robustness by limiting extreme attention weights (as shown in~\cref{lemma:attention-entropy-notR}). However, this comes at the cost of interpretability, as high temperatures push attention toward uniform distribution, producing diffuse and uninformative attribution maps. In contrast, removing normalization via kernel-based attention allows robustness to be controlled through learning dynamics without forcing attention to become uniform. 
A major challenge for using alternative normalization strategies is variable length number of tokens.
Moreover, temperature scaling is typically applied post-hoc, which alters the trained model at evaluation time and can degrade predictive performance and faithfulness of the attributions.
In this work we therefore focus on ante-hoc methods—such as ECR, SAM, and ICR—that act during training and preserve the integrity of the learned model while shaping attribution robustness.

\section{Derivations and Theoretical Considerations}
\label{sec:proofs-and-theoretical-considerations}
\subsection{Gradient-based Attribution}
\label{sec:proofs-and-theoretical-considerations-gradient-based}
We restate key curvature results from prior work that enable a concise derivation for~\cref{theorem:ijr-is-good-for-input-curvature} in~\cref{sec:theory-of-attribution-robustness}.

\paragraph{Gauss-Newton Decomposition of Curvature.}
\label{sec:implicit-jacobian-regularization}
Consider the negative log-likelihood
\begin{equation}
 -l_j~=~\operatorname{LSE}(\vz) - z_j.   
\end{equation}
The Hessian with respect to the parameters $\vw$ admits the following decomposition:
\begin{align}
    \HH_\vw(l_j) &= \nabla^2_\vw\!\left(\operatorname{LSE}(\vz) - \vz_j \right) \label{eq:GN-3}\\
    &= \nabla_\vw\!\left(\vp^\top\nabla_\vw \vz- \nabla_\vw \vz_j \right) \label{eq:GN-2}\\
    &= \nabla_\vw\vz \, \nabla_\vz\vp \, \nabla_\vw\vz^\top + \sum_k p_k \nabla^2_\vw \vz_k - \nabla^2_\vw \vz_j \label{eq:GN-1}\\
    &= \GG_\vw + \FF_\vw
    \label{eq:GN-decomposition}
\end{align}
This well-known Gauss–Newton decomposition separates the Hessian into the \textit{outer-product Hessian} $\GG_\vw$ and the \textit{functional Hessian} $\FF_\vw$.

Importantly, the above derivation applies verbatim to both parameter-space curvature and input-space curvature, up to the consistent substitution $\vw \leftrightarrow \vx$. This observation underlies our transfer of curvature control from parameters (relevant for curvature analysis) to inputs (relevant for explainability).

The term $\nabla_\vz \vp$ in~\eqref{eq:GN-1}, referred to as the impurity matrix~\cite{lee_implicit_2023}, is typically locally constant or slowly varying during training. For analytical simplicity, we state subsequent results for the $L_2$ loss, expecting analogous behavior for cross-entropy since the impurity matrix appears in both parameter and input gradients.

\paragraph{Effect of the Functional Hessian $\FF_\vw$.}
The functional Hessian can be rewritten as
\begin{align}
\FF_\vw
&= \sum_k p_k \nabla^2_\vw z_k - \nabla^2_\vw z_j \\
&= \sum_{k\neq j} p_k \nabla^2_\vw z_k - (1-p_j)\nabla^2_\vw z_j .
\end{align}
Assuming the network is trained to convergence and attributions are computed for the most probable class, we have $p_j \approx 1$. Consequently, the contribution of $\FF_\vw$ to the total curvature is negligible, both in parameter and input space.

\paragraph{Effect of the Outer-Product Hessian $\GG_\vw$.}
Extensive empirical and theoretical evidence shows that $\GG_\vw$ provides an accurate approximation of the curvature, with the largest eigenvalues of $\HH_\vw$ typically dominated by $\GG_\vw$~\cite{singla_understanding_2019,sagun_empirical_2018,fort_emergent_2019,papyan_measurements_2019,ziyin_strength_2022}. We therefore adopt the Gauss–Newton approximation
\begin{equation}
 \GG_\vw(\Theta^*) \approx \HH_\vw(\Theta^*)   
\end{equation}
where $\Theta^*=\{\vw_1,\dots,\vw_L\}$ refers to the set of optimal parameters.

\paragraph{Implicit Control of Parameter Curvature.}
\label{sec:implicit-control-of-curvature}
Recent works on implicit curvature regularization establish a formal connection between SGD dynamics and curvature control~\cite{cohen_gradient_2022,lee_implicit_2023,wu_implicit_2023,singh_phenomenology_2022,cohen_understanding_2025}. In particular, \citep{wu_implicit_2023}[Proposition~3.2]
shows that, for SGD with a quadratic loss and linearly stable global minima,
\begin{equation}
\tr(\GG_\vw(\Theta^*)) \;\le\; \frac{2}{\eta},
\label{eq:ijr-condition}
\end{equation}
where $\eta$ is the learning rate and $\GG_\vw(\Theta^*)$ denotes the Gauss–Newton matrix at convergence. Empirically, while training with SGD, $\tr(\GG_\vw(\Theta^*))$ closely tracks $2/\eta$ as learning rate increases.

A closely related result in~\citep{lee_implicit_2023}[Prop.~5.2] shows that increasing $\eta$ implicitly enforces a tighter upper bound on the spectral norm of the Hessian, suppressing high-curvature directions. We work with the trace formulation for analytical convenience.

\paragraph{From Parameter Curvature to Input Curvature.}
We now relate parameter-space curvature to input-space curvature using the following result.
\setcounter{theorem}{0}
\begin{theorem}[Parameter Curvature Bounds Input Curvature~\cite{dherin2022neural,gamba_lipschitz_2023}]
Let $f$ be a neural network with at least one hidden layer, first-layer weights $\vw_1$ satisfying $\|\vw_1\|_2>0$, and remaining (layer-wise) parameters $\Theta=\{\vw_2,\dots,\vw_l,\dots,\vw_L\}$. Then
\begin{equation}
c(\vx)\E_{\mathcal{D}}\!\left[\|\nabla_{\vx} \vz\|_2^2\right] \;\le\; \E_{\mathcal{D}}\!\left[\|\nabla_{\vw} \vz\|_F^2\right],
\label{eq:nabla-weight-bounds-nabla-input}
\end{equation}
where $c(\vx):=\left(\sum_{l=1}^L \frac{\left\|\vh_{l-1}(\mathbf{x})\right\|_2^2}{\left\|\vw_l\right\|_2^2\left\|\nabla_{\vx} \vh_{l-1}(\mathbf{x})\right\|_2^2}\right)$.
\end{theorem}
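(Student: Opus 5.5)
The plan is to prove a \emph{pointwise} version of \eqref{eq:nabla-weight-bounds-nabla-input}, layer by layer, and then take expectations over $\mathcal{D}$. I will use only the layered structure $\vh_l = \sigma_l(\mathbf{a}_l)$, with pre-activation $\mathbf{a}_l = \vw_l \vh_{l-1}$ and input $\vh_0 = \vx$, where $\vz$ is a function of $\mathbf{a}_l$ alone once the layers above $l$ are fixed. Biases and residual connections can be absorbed into this form in the usual way.

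\textbf{Step 1 (parameter gradient of one layer).} Write $J_l := \partial \vz/\partial \mathbf{a}_l$. Since $\mathbf{a}_l$ is linear in $\vw_l$, the chain rule gives $\partial z_k/\partial (\vw_l)_{ij} = (J_l)_{ki}\,(\vh_{l-1})_j$. The Jacobian of $\vz$ with respect to $\vw_l$ is therefore a Kronecker/outer-product structure, and
\begin{equation*}
\|\nabla_{\vw_l}\vz\|_F^2 = \|J_l\|_F^2\,\|\vh_{l-1}(\vx)\|_2^2 .
\end{equation*}
This identity is exact and is the only place the layerwise structure is really used.

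\textbf{Step 2 (input gradient through the same layer).} Factor the input Jacobian through $\mathbf{a}_l$ as $\nabla_\vx \vz = J_l\,\vw_l\,\nabla_\vx \vh_{l-1}$. Combining $\|\cdot\|_2\le\|\cdot\|_F$ with submultiplicativity $\|ABC\|_F\le \|A\|_F\|B\|_2\|C\|_2$ gives
\begin{equation*}
\|\nabla_\vx\vz\|_2^2 \le \|J_l\|_F^2\,\|\vw_l\|_2^2\,\|\nabla_\vx\vh_{l-1}\|_2^2 .
\end{equation*}
For $l=1$ we have $\nabla_\vx \vh_0 = I$, which is where $\|\vw_1\|_2>0$ is needed. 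For $l\ge 2$ I assume $\nabla_\vx\vh_{l-1}\neq 0$ so that $c(\vx)$ is well defined; otherwise that term is simply dropped. Eliminating $\|J_l\|_F^2$ between Steps 1 and 2 yields, for each layer,
\begin{equation*}
\frac{\|\vh_{l-1}\|_2^2}{\|\vw_l\|_2^2\|\nabla_\vx\vh_{l-1}\|_2^2}\,\|\nabla_\vx \vz\|_2^2 \le \|\nabla_{\vw_l}\vz\|_F^2 .
\end{equation*}

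\textbf{Step 3 (sum and average).} Summing over $l=1,\dots,L$ and using $\|\nabla_\vw\vz\|_F^2=\sum_l\|\nabla_{\vw_l}\vz\|_F^2$ gives the pointwise bound $c(\vx)\|\nabla_\vx\vz\|_2^2\le\|\nabla_\vw\vz\|_F^2$. Taking $\E_{\mathcal{D}}$ of both sides then gives $\E[c(\vx)\|\nabla_\vx\vz\|^2]\le\E[\|\nabla_\vw\vz\|_F^2]$.

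\textbf{Main obstacle.} The statement writes $c(\vx)$ outside the expectation, so the last step needs care. I would make this rigorous by replacing $c(\vx)$ with $\inf_{\vx\in\mathcal{D}}c(\vx)$, which makes the displayed inequality literally true. Alternatively, one can read the statement as the pointwise inequality integrated over the data. The analytic steps themselves are routine; the care lies in matching norms, namely Frobenius on the parameter side and the spectral norm for $\vw_l$ and $\nabla_\vx\vh_{l-1}$.
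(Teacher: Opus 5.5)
The paper gives no proof of this statement; it quotes the bound from Dherin et al.\ and Gamba et al. Your argument is the standard one behind that bound: Dherin et al.'s first-layer factorization, repeated at every layer. You factor $\nabla_\vx\vz$ through the pre-activation $\mathbf{a}_l=\vw_l\vh_{l-1}$, use the exact outer-product identity $\|\nabla_{\vw_l}\vz\|_F^2=\|J_l\|_F^2\|\vh_{l-1}\|_2^2$, eliminate $\|J_l\|_F^2$, and sum over layers. For untied fully connected layers, with biases appended to $\vh_{l-1}$, every step is correct. Your treatment of the expectation is also right. With $c(\vx)$ outside $\E_{\mathcal{D}}$ the display is not well posed. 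What holds is the pointwise bound $c(\vx)\|\nabla_\vx\vz\|_2^2\le\|\nabla_\vw\vz\|_F^2$, and the printed form follows only with $\inf_{\vx}c(\vx)$ in front.

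The one claim that fails is that residual connections ``can be absorbed in the usual way.'' With a skip $\vh_l=\vh_{l-1}+\sigma(\vw_l\vh_{l-1})$, $\vz$ depends on $\vh_{l-1}$ directly, not only through $\mathbf{a}_l$. So the factorization $\nabla_\vx\vz=J_l\vw_l\nabla_\vx\vh_{l-1}$ in Step 2 is false. Rewriting the block as the stacked map $[I;\vw_l]$ does not help: the identity block has no trainable parameters, so Step 1 then controls only part of the Jacobian.

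This is a failure of the inequality itself, not just of your technique. Take $z=u^\top(\vx+\mathrm{ReLU}(\vw_1\vx))$ with $\vw_1=-I$ and $\vx$ entrywise positive. Then $\nabla_{\vw_1}z=0$, $\nabla_u z=\vx^\top$, $\nabla_\vx z=u^\top$ and $\nabla_\vx\vh_1=I$. Hence $c(\vx)\|\nabla_\vx z\|_2^2=\|\vx\|^2\|u\|^2+\|\vx\|^2>\|\vx\|^2=\|\nabla_\vw z\|_F^2$.

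Similarly, the exact identity in Step 1 requires each weight entry to be used once. With weight tying (convolutions, projections shared across tokens), the parameter gradient is a sum over positions, and those terms can cancel. You should state your result for plain feedforward layers, which is the setting of the cited bound, and drop the claim that it extends to residual architectures.
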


Under the Gauss–Newton approximation for the $L_2$ loss, this yields
\begin{equation}
\E_{\mathcal{D}}[\lambda_{\max}(\GG_\vx)] \;\le\; \frac{1}{c(\vx)}\E_{\mathcal{D}}[\tr(\GG_\vw)].
\end{equation}

Note that $c(\vx)$ is a summation of non-negative terms. Each term of the form $\frac{\|\vh_{l-1}(\vx)\|_2^2}{\|\nabla_{\vx}\vh_{l-1}(\vx)\|_2^2}$ can be interpreted as an aggregation of layerwise signal-to-noise ratios. Accordingly, $c(\vx)$ admits the interpretation of a weighted signal-to-noise ratio across layers.





\paragraph{Connection: Implicit Input Jacobian Regularization.}
We now provide a formal justification for~\cref{theorem:ijr-is-good-for-input-curvature} by explicitly combining existing results on the implicit regularization induced by SGD near convergence, and the relationship between parameter-space curvature and input-space smoothness. For this connection we do not need new theoretical claims; rather, we restate the relevant assumptions and show how the cited results compose to yield the stated implication.

\setcounter{proposition}{0}
\begin{proposition}[Implicit input curvature regularization via SGD]
Let $f_\Theta$ be a neural network trained by stochastic gradient descent with a constant learning rate $\eta$ on a fixed dataset and initialization.  
Assume that  
\textbf{(i)} training operates in a near-convergence, linearly stable regime,  
\textbf{(ii)} the loss is twice continuously differentiable in parameters and inputs, and  
\textbf{(iii)} the layerwise scaled signal-to-noise ratio $c(\vx)$ is non-decreasing \wrt learning rate.

Let $\Theta_1^*,\Theta_2^*$ denote the stationary SGD solutions obtained with learning rates $\eta_1<\eta_2$.
Then, in the asymptotic regime and in expectation over the stationary SGD distribution,
\[
\mathbb{E}\!\left[\lambda_{\max}\!\left(H_\vx(\Theta_2^*)\right)\right]
\;\lesssim\;
\mathbb{E}\!\left[\lambda_{\max}\!\left(H_\vx(\Theta_1^*)\right)\right].
\]
\end{proposition}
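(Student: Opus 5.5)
The argument chains the four ingredients recalled above: the Gauss--Newton decomposition, the implicit trace bound \eqref{eq:ijr-condition}, the parameter-to-input Jacobian bound of the restated theorem, and assumption (iii). We work with the Gauss--Newton decomposition \eqref{eq:GN-decomposition} in both spaces. Using the substitution $\vw \leftrightarrow \vx$, we write $\HH_\vx = \GG_\vx + \FF_\vx$. Near convergence (assumption (i)), attributions are taken for the predicted class with $p_j \approx 1$, so $\FF_\vx$ and $\FF_\vw$ are negligible. This gives $\lambda_{\max}(\HH_\vx) \approx \lambda_{\max}(\GG_\vx)$ and $\HH_\vw \approx \GG_\vw$; the symbol $\lesssim$ in the statement absorbs these approximations. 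Assumption (ii) makes the Hessians well defined and lets us pass approximations through expectations.

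\textbf{Step 1 (parameter side).} At a linearly stable stationary point of SGD with step $\eta$, \citet[Prop.~3.2]{wu_implicit_2023} gives $\tr(\GG_\vw(\Theta^*)) \le 2/\eta$ for the quadratic loss. Taking expectation over the stationary SGD distribution yields
\begin{equation*}
\E\left[\E_{\mathcal{D}}\tr(\GG_\vw(\Theta_i^*))\right] \le 2/\eta_i, \qquad i=1,2,
\end{equation*}
so the bound for $\eta_2$ is strictly tighter.

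\textbf{Step 2 (transfer to inputs).} Take the $L_2$ loss. Then $\GG_\vx = \nabla_\vx\vz\,\nabla_\vx\vz^\top$, so $\lambda_{\max}(\GG_\vx) \le \|\nabla_\vx \vz\|_2^2$. Likewise $\tr(\GG_\vw) = \|\nabla_\vw\vz\|_F^2$. Combining these with \eqref{eq:nabla-weight-bounds-nabla-input} gives
\begin{equation*}
\E_{\mathcal{D}}[\lambda_{\max}(\GG_\vx(\Theta^*))] \le \frac{1}{c(\vx)}\,\E_{\mathcal{D}}[\tr(\GG_\vw(\Theta^*))] \le \frac{2}{\eta\, c_\eta(\vx)},
\end{equation*}
where $c_\eta$ is the ratio \eqref{eq:snr} evaluated at $\Theta^*_\eta$. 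For cross-entropy, the impurity matrix $\nabla_\vz\vp$ appears in both spaces. Because it is slowly varying, it contributes a common factor and the same inequality holds up to constants.

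\textbf{Step 3 (monotonicity).} By assumption (iii), $c_{\eta_2} \ge c_{\eta_1}$, and $\eta_2 > \eta_1$. Hence $2/(\eta_2 c_{\eta_2}) < 2/(\eta_1 c_{\eta_1})$, so the upper bound on expected input curvature decreases with the learning rate.

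\textbf{Main obstacle.} Step 3 only orders \emph{upper bounds}. It does not order the quantities $\E[\lambda_{\max}(H_\vx(\Theta_i^*))]$ themselves. To close the gap, we plan to invoke the empirical observation recorded after \eqref{eq:ijr-condition}: in the asymptotic edge-of-stability regime, $\tr(\GG_\vw)$ closely tracks $2/\eta$, so the bound is approximately saturated. If the parameter-to-input inequality is likewise tight up to a factor independent of $\eta$, the ordering of the bounds transfers to the quantities themselves, which is exactly what the $\lesssim$ encodes. We will state this tightness explicitly as the interpretation of the asymptotic regime.

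A secondary technicality is that $c(\vx)$ depends on $\vx$ while the bound is stated for expectations. We will handle it by taking $\inf_{\vx} c(\vx)$ over the data support, or by reading (iii) pointwise.
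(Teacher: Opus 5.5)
Your proposal follows the paper's route. The paper likewise composes the trace bound \eqref{eq:ijr-condition} with the parameter-to-input bound \eqref{eq:nabla-weight-bounds-nabla-input}, writes $\E[\lambda_{\max}(\GG_\vx(\Theta_i^*))] = \frac{1}{c_i(\vx)}\big(\frac{2}{\eta_i}-\varepsilon_{l,i}-\varepsilon_{g,i}\big)$ with nonnegative slacks, and concludes via assumption (iii). The only difference is how the bound-versus-quantity gap you flagged is closed. The paper appeals to the slacks $\varepsilon_{l,i},\varepsilon_{g,i}$ decreasing with $\eta$, whereas you assume both bounds are nearly saturated. Your reading is the more defensible one: a shrinking additive slack on its own pushes $\frac{2}{\eta}-\varepsilon$ up rather than down, so it does not by itself give the ordering.
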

\begin{proof}
Using~\cref{eq:ijr-condition}, in the stationary SGD regime we have
\begin{equation}
\mathbb{E}\!\left[\tr(\GG_\vw(\Theta_i^*))\right]
= \frac{2}{\eta_i} - \varepsilon_{l,i},
\qquad
\varepsilon_{l,i}\ge 0,
\label{eq:excess-gvw-lr-ei}
\end{equation}
where the expectation is taken over the stationary distribution induced by SGD noise
\cite{wu_implicit_2023,damian_label_2021}.
    
From~\eqref{eq:nabla-weight-bounds-nabla-input}, we further obtain
\begin{equation}
\mathbb{E}\!\left[\tr(\GG_{\vw}(\Theta^*_i))\right]
- c_i(\vx)\,
\mathbb{E}\!\left[\lambda_{\max}(\GG_{\vx}(\Theta^*_i))\right]
= \varepsilon_{g,i},
\qquad
\varepsilon_{g,i} \ge 0 ,
\label{eq:excess-gvw-gvx-ei}
\end{equation}
where $c_i(\vx)$ denotes the parameter--input curvature coupling constant of
\cite{dherin2022neural,gamba_lipschitz_2023}.

Combining \eqref{eq:excess-gvw-lr-ei} and \eqref{eq:excess-gvw-gvx-ei} yields
\begin{equation}
\mathbb{E}\!\left[\lambda_{\max}(\GG_{\vx}(\Theta^*_i))\right]
=
\frac{1}{c_i(\vx)}
\Big(\frac{2}{\eta_i} - \varepsilon_{l,i}-\varepsilon_{g,i}\Big).
\label{eq:gvx-eis}
\end{equation}
    
Taking the difference between learning rates $\eta_1<\eta_2$ gives
\begin{align}
\Delta
&=
\mathbb{E}\!\left[\lambda_{\max}(\GG_{\vx}(\Theta^*_1))\right]
-
\mathbb{E}\!\left[\lambda_{\max}(\GG_{\vx}(\Theta^*_2))\right]  \\
&=
\frac{1}{c_1(\vx)}
\Big(\frac{2}{\eta_1} - \varepsilon_{l,1}-\varepsilon_{g,1}\Big)
-
\frac{1}{c_2(\vx)}
\Big(\frac{2}{\eta_2} - \varepsilon_{l,2}-\varepsilon_{g,2}\Big).
\end{align}

The quantities $c_i(\vx)$, $\varepsilon_{l,i}$, and $\varepsilon_{g,i}$ depend on
the learning rate through the stationary SGD distribution and are therefore random
variables. The residual terms
$\varepsilon_{l,i}$ and $\varepsilon_{g,i}$ decrease in expectation as $\eta$
increases~\cite{gamba_lipschitz_2023,wu_implicit_2023}. 
$c(\vx)$ is a sum of nonnegative layerwise terms, each measuring signal amplitude relative to sensitivity. Assumption (iii) motivates that $c(\vx)$ is non-decreasing in expectation with respect to the learning rate.
Consequently, $\Delta$
is non-negative in expectation for $\eta_2>\eta_1$, which yields the stated
expectation-level inequality.
\end{proof}

\paragraph{A Discussion on The Scope of Assumptions.}
Assumption \textbf{(i)} corresponds to the late-training, near-convergence regime where SGD operates close to the edge of stability, as commonly analyzed in prior work on implicit regularization \cite{wu_implicit_2023, lee_implicit_2023}.
Assumption \textbf{(ii)} (local smoothness) holds in practice for standard architectures such as CNNs and ViTs when attributions are defined via negative log-probabilities, ensuring twice-differentiability of the loss in a neighborhood of interest.
Assumption \textbf{(iii)} is motivated by established results linking training dynamics and curvature: in particular, \cite{wu_implicit_2023} characterizes the scaling of parameter norms $\|\vw\|_2$ and curvature $\tr(G_\vw(\Theta^*))$ in simplified settings, supporting the modeling of $c(\vx)$ as a non-decreasing function with respect to the learning rate. 



\vspace{0.5em}
\begin{mdframed}
\paragraph{Practical Considerations.}
It should be emphasized that the argument in~\cref{theorem:ijr-is-good-for-input-curvature} is an expectation-level trend and does
not imply a deterministic ordering between individual stationary points obtained
at different learning rates. Therefore this argument does not apply to early training, highly unstable regimes, or adaptive optimizers whose noise structure differs from SGD; in such cases, the relationship between learning rate and input curvature may break down.

Moreover, as the tightness of the bound in~\cite{gamba_lipschitz_2023} depends on the final loss value, for near-zero training error, increased learning rates immediately enforce lower input curvature. 
However, when the loss remains nonzero, a delay may occur before curvature reduction becomes apparent.
This shows the necessary condition of being asymptotic in~\cref{theorem:ijr-is-good-for-input-curvature}.

In practice, we use logarithmically spaced learning rates, making $\eta_2$ orders of magnitude larger than $\eta_1$. As a result, the implicit regularization effect is typically observable early in training.
\vspace{0.5em}
\end{mdframed}

\subsection{Distinct Robustness Behavior of Attention-Based Attribution}
\label{sec:proofs-and-theoretical-considerations-attention-based}
While the robustness properties of gradient-based attribution can be characterized through curvature, it is natural—but ultimately incorrect—to expect analogous behavior for attention-based attribution. In this section, we observe that attention-based methods exhibit a fundamentally different robustness mechanism, rendering curvature-based arguments inapplicable.

This discrepancy follows directly from the structural differences between the two attribution paradigms: gradient-based methods depend on local derivatives of the model output, whereas attention-based methods are constrained by normalization and entropy properties of the attention distribution. 

\begin{figure}[h]
\centering
\includegraphics[width=0.45\columnwidth,trim=0 0 0 23px,clip]{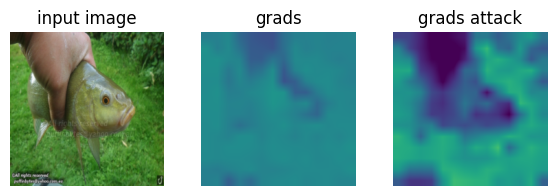
}
\includegraphics[width=0.45\columnwidth,trim=0 0 0 23px,clip]{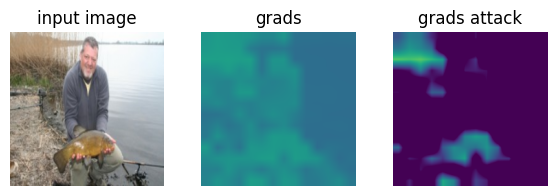
}   
\includegraphics[width=0.45\columnwidth,trim=0 0 0 23px,clip]{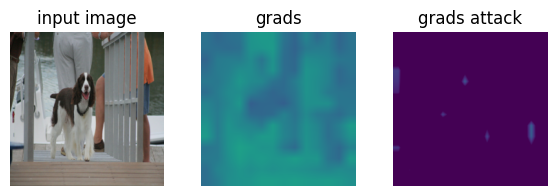
}  
\includegraphics[width=0.45\columnwidth,trim=0 0 0 23px,clip]{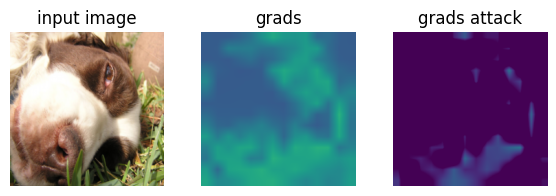
}  
\caption{\textbf{Visualization of Attention Maps — Kernelized Attention with Base}
This figure shows last-layer attention heatmaps from a ViT-B/16 trained on Imagenette using kernelized attention without regularization (Base).
The left column shows the input image, the middle column the original attention map, and the right column the attention map after an adversarial attack.
The learning rate is set according to Base (see \cref{tab:hparams-unnorm}). For consistency, the same normalization constants (vmin, vmax) are used across all attention visualizations (\cref{fig:samples-1,fig:samples-2,fig:samples-3,fig:samples-4}).
Compared to ICR$^\dagger$~\cref{fig:samples-2}, the attention maps are less defined and less sharp, and they exhibit larger changes under adversarial perturbations. This indicates reduced attribution robustness and lower interpretability when implicit curvature regularization is not applied.}
\label{fig:samples-1}
\end{figure}

\begin{figure}[h]
\centering
\includegraphics[width=0.45\columnwidth,trim=0 0 0 23px,clip]{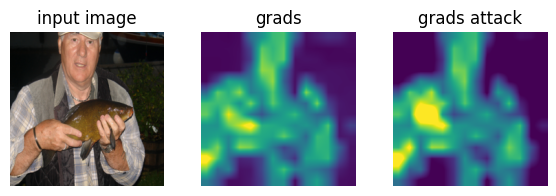
}  
\includegraphics[width=0.45\columnwidth,trim=0 0 0 23px,clip]{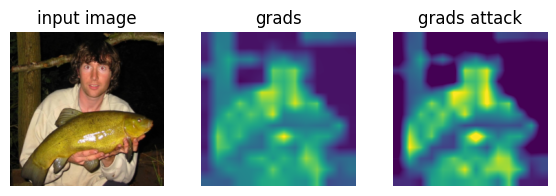
}  
\includegraphics[width=0.45\columnwidth,trim=0 0 0 23px,clip]{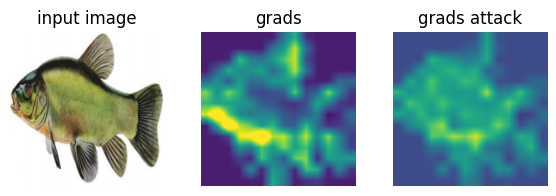
} 
\includegraphics[width=0.45\columnwidth,trim=0 0 0 23px,clip]{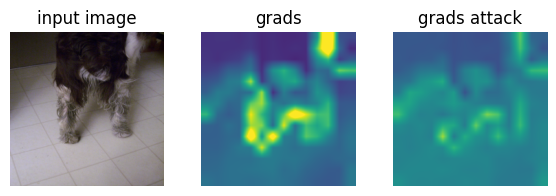
} 
\caption{\textbf{Visualization of Attention Maps — Kernelized Attention with ICR$^\dagger$}
This figure shows heatmap visualizations of last-layer attention maps from a ViT-B/16 trained on Imagenette with \kernelized attention modules.
The left column shows the input image, the middle column the original attention map, and the right column the attention map after an adversarial attack.
The learning rate is set according to ICR$^\dagger$ (see \cref{tab:hparams-unnorm}). For consistency, the same normalization constants (vmin, vmax) are used across all attention visualizations (\cref{fig:samples-1,fig:samples-2,fig:samples-3,fig:samples-4}).
The resulting heatmaps are sharper and more concise, indicating improved interpretability. This demonstrates that ICR$^\dagger$ not only improves robustness of attention-based attributions but also enhances the clarity of attention maps. Unlike \cref{fig:samples-4}, we do not incur training instability with \kernelized attention.}
\label{fig:samples-2}
\end{figure}

\begin{figure}[h]
\centering
\includegraphics[width=0.45\columnwidth,trim=0 0 0 23px,clip]{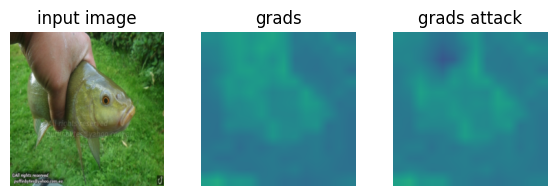
}  
\includegraphics[width=0.45\columnwidth,trim=0 0 0 23px,clip]{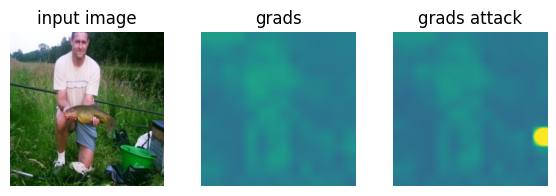
}
\includegraphics[width=0.45\columnwidth,trim=0 0 0 23px,clip]{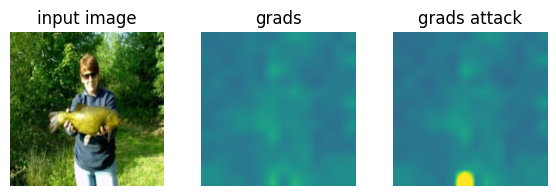
} 
\includegraphics[width=0.45\columnwidth,trim=0 0 0 23px,clip]{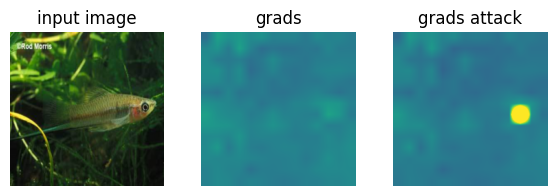
} 
\caption{\textbf{Visualization of Attention Maps — Softmax Attention with Base}
This figure shows last-layer attention maps from a ViT-B/16 trained with softmax attention and no regularization (Base).
The left column shows the input image, the middle column the original attention map, and the right column the attention map after an adversarial attack.
The learning rate is set according to Base (see \cref{tab:hparams}). For consistency, the same normalization constants (vmin, vmax) are used across all attention visualizations (\cref{fig:samples-1,fig:samples-2,fig:samples-3,fig:samples-4}).
The resulting heatmaps are less defined than in all other settings. After adversarial perturbation, attention mass collapses onto a single patch, producing low-entropy and highly unstable attention maps.
This highlights the susceptibility of softmax attention to adversarial amplification in the absence of regularization.}
\label{fig:samples-3}
\end{figure}

\begin{figure}[h]
\centering
\includegraphics[width=0.45\columnwidth,trim=0 0 0 23px,clip]{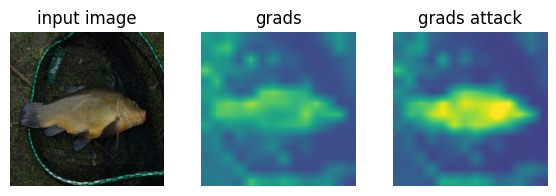
}  
\includegraphics[width=0.45\columnwidth,trim=0 0 0 23px,clip]{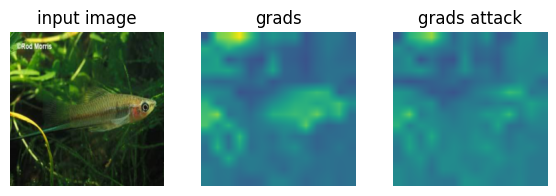
}  
\includegraphics[width=0.45\columnwidth,trim=0 0 0 23px,clip]{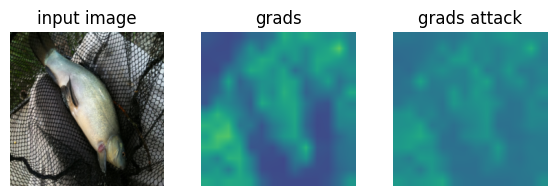
} 
\includegraphics[width=0.45\columnwidth,trim=0 0 0 23px,clip]{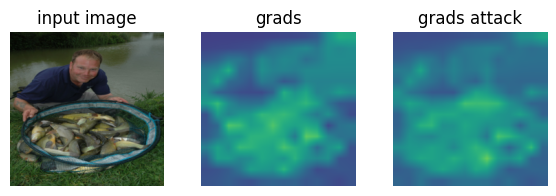
}  
\caption{\textbf{Visualization of Attention Maps — Softmax Attention with ICR$^\dagger$}
This figure presents attention heatmaps from a ViT-B/16 trained with softmax attention under ICR$^\dagger$.
The left column shows the input image, the middle column the original attention map, and the right column the attention map after an adversarial attack.
The learning rate is set according to ICR$^\dagger$ (see \cref{tab:hparams}). For consistency, the same normalization constants (vmin, vmax) are used across all attention visualizations (\cref{fig:samples-1,fig:samples-2,fig:samples-3,fig:samples-4}).
The attention maps are more defined than those obtained without regularization (\cref{fig:samples-3}), but remain less sharp than \kernelized attention with ICR$^\dagger$ (\cref{fig:samples-2}). Moreover, the changes after the adversarial attack are loss pronounced compared to Base baseline.
Note also that while ICR$^\dagger$ can improve interpretability for softmax attention, it may come at the cost of reduced attribution robustness and training stability compared to kernelized attention.}
\label{fig:samples-4}
\end{figure}

\setcounter{theorem}{1}
\begin{proposition}[Minimum Attainable Entropy Bounds Attention-Based Attribution Robustness]
Let $\Ent_{\min}$ and $\Ent_{\unif}$ denote the minimum and maximum attainable entropy of an attention layer, respectively. Denote by $A_{\init}$ the attention scores for a given input, and denote by $A_{\attack}$ the attention scores after an adversarial perturbation. Define the sensitivity of the attention layer as
\begin{equation}
 \notR(A_{\init}) := \max_{A_\attack} \|A_{\attack} - A_{\init}\|_2.
\end{equation}
Then $\notR(A_{\init})$ is monotonically decreasing \wrt $\Ent_{\min}$.
\label{lemma:attention-entropy-notR-appendix}
\end{proposition}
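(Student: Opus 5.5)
The plan is to recast the statement as a monotonicity property of a maximization over a nested family of feasible sets. The attention row $A_{\init}$ lies in the probability simplex $\Delta^{T-1}$, with $T$ the token count fixed. A perturbed row $A_{\attack}$ is admissible exactly when it lies in the simplex and satisfies the entropy constraint from the main-text version of \cref{lemma:attention-entropy-notR}, namely $\Ent(A_{\attack})\ge \Ent_{\min}$. So we define the superlevel set
\[
S(\Ent_{\min}) := \{A\in\Delta^{T-1} : \Ent(A)\ge \Ent_{\min}\},
\]
and then
\[
\notR(A_{\init}) = \max_{A\in S(\Ent_{\min})}\|A-A_{\init}\|_2 .
\]
Here $A_{\init}$ is itself an attained attention pattern, so $A_{\init}\in S(\Ent_{\min})$.

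The first step is nestedness. If $\Ent_{\min}\le \Ent'_{\min}$, then $S(\Ent'_{\min})\subseteq S(\Ent_{\min})$, because any row with entropy at least $\Ent'_{\min}$ also has entropy at least $\Ent_{\min}$. A maximum over a smaller set cannot exceed the maximum over a larger set, so $\notR$ is non-increasing in $\Ent_{\min}$. The maximum exists because $\Ent$ is continuous on the compact simplex, which makes $S$ compact, and the objective is continuous.

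The second step is the endpoint behaviour. Since $\Ent$ is strictly concave on the simplex, it has a unique maximizer, the uniform distribution $u$, with value $\Ent_{\unif}=\log T$. Consequently $S(\Ent_{\unif})=\{u\}$. Because $A_{\init}\in S$, this forces $A_{\init}=u$ and $\notR=0$. By continuity of $\Ent$ and compactness, the sets $S(\Ent_{\min})$ shrink to $\{u\}$ in Hausdorff distance as $\Ent_{\min}\to\Ent_{\unif}$, so $\notR$ vanishes. This matches the remark following the main-text statement.

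The third step, and the main obstacle, is upgrading "non-increasing" to genuine (strict) decrease for $\Ent_{\min}<\Ent_{\unif}$. The plan is to use the fact that $S(\Ent_{\min})$ is a convex set, being a superlevel set of a concave function. The farthest point from $A_{\init}$ in a compact convex set is an extreme point. For $\Ent_{\min}>0$ this point lies on the boundary $\{\Ent=\Ent_{\min}\}$ in the relative interior of the simplex. For $\Ent_{\min}$ small enough that the maximizer is a vertex, $\notR$ is flat, so strictness fails there. Strict decrease therefore holds only in the regime $\Ent_{\min}>\Ent^\star$, where $\Ent^\star$ is the largest entropy level whose superlevel set still contains the farthest vertex.

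In that regime, raising $\Ent_{\min}$ strictly moves the boundary inward: $\nabla\Ent$ is nonzero on the level set away from $u$. Hence the current maximizer becomes infeasible, and by compactness the new maximum is strictly smaller. I would state the conclusion as monotone non-increasing in general, strictly decreasing on $(\Ent^\star,\Ent_{\unif})$, and equal to zero at $\Ent_{\unif}$. This is the sense in which the proposition should be read.
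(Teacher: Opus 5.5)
Your core argument is correct and uses the same idea as the paper's proof: the entropy superlevel sets $S(\Ent_{\min})$ are nested, so a maximum over the smaller set cannot exceed the maximum over the larger one. What differs is the quantity being maximized, and hence what gets proved.

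You hold $A_{\init}$ fixed and maximize $\|A-A_{\init}\|_2$ over $S(\Ent_{\min})$. That gives monotonicity of $\notR$ itself, but only for a fixed reference point. The paper instead bounds $\notR(A_{\init})\le M(S(\Ent_{\min}))$, the diameter of the feasible set (i.e.\ the worst case of $\notR$ over all $A_{\init}\in S$), and shows that this bound is monotone. The paper's version is uniform in $A_{\init}$, which is what it needs: it compares two different layers with levels $\Ent_{\min}\le\Ent_{\aux}$, and their $A_{\init}$ will generally differ.

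The distinction surfaces in your second step. With $A_{\init}$ fixed, you cannot push $\Ent_{\min}\to\Ent_{\unif}$ while keeping $A_{\init}\in S(\Ent_{\min})$ unless $A_{\init}=u$ from the start. If you drop that membership, the limit is $\|u-A_{\init}\|_2\neq 0$. Your phrase ``this forces $A_{\init}=u$'' silently lets $A_{\init}$ move with the level. The vanishing then really comes from $\notR(A_{\init})\le\operatorname{diam}S(\Ent_{\min})\to 0$, which is exactly the paper's device.

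The strictness upgrade goes beyond the paper. Its final statement is true, but several steps along the way are wrong or incomplete.

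\emph{The flat regime is empty.} Every vertex of the simplex has entropy $0$, so no vertex lies in $S(\Ent_{\min})$ once $\Ent_{\min}>0$; thus $\Ent^\star=0$. In fact, for fixed $A_{\init}$, strict decrease holds on all of $[0,\Ent_{\unif}]$ by a direct argument. Take $h_1<h_2$ and let $B$ maximize the distance over $S(h_2)$. Then $\Ent(B)\ge h_2>h_1$, so a relative neighbourhood of $B$ in the simplex lies in $S(h_1)$. Since $B$ is not a vertex, there is a nonzero $d$ tangent to the face containing $B$ with $\langle d,B-A_{\init}\rangle\ge 0$. Then $\|B+td-A_{\init}\|_2^2=\|B-A_{\init}\|_2^2+2t\langle d,B-A_{\init}\rangle+t^2\|d\|_2^2$ strictly exceeds $\|B-A_{\init}\|_2^2$ for small $t>0$.

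\emph{The farthest point need not lie in the relative interior of the simplex.} For $\Ent_{\min}<\log(T-1)$ the level set $\{\Ent=\Ent_{\min}\}$ meets the facets, and the maximizer can sit on one.

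\emph{Infeasibility must hold for every maximizer, not just one.} ``The current maximizer becomes infeasible'' needs to apply to all maximizers. This follows from strict convexity of $\|A-A_{\init}\|_2^2$: it forces every maximizer to be an extreme point of $S(\Ent_{\min})$, hence to have entropy exactly $\Ent_{\min}$.
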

\begin{proof}
    Denote the set of attention maps attainable with entropy constraints $\Ent_{\min}$ and $\Ent_{\unif}$ by
    \begin{equation}
        S(\Ent_{\min}) :=\left\{A \;:\; \Ent_{\min} \leq \Ent(A) \leq \Ent_{\unif}\right\}
        \label{eq:s-ent-def}
    \end{equation}
    and the maximum distance between attention maps within this set by
    \begin{equation}
    M\Big(S(\Ent_{\min})\Big) := \max_{A,B\in S(\Ent_{\min})\times S(\Ent_{\min})}\| A - B\|_2.
    \end{equation}
    By construction, we know that 
    \begin{equation}
    \notR(A_{\init}) \;\leq\; M(S(\Ent_{\min}))
    \end{equation}

    Now consider another attention layer with minimum attainable entropy $\Ent_{\aux}$ satisfying $\Ent_{\min} \le \Ent_{\aux}$. By~\eqref{eq:s-ent-def}, we have the set inclusion
    \begin{equation}
     S(\Ent_{\aux}) \;\subseteq\; S(\Ent_{\min}).   
    \end{equation}
    Therefore,
    \begin{equation}
     M\!\left(S(\Ent_{\aux})\right) \;\leq\; M\Big(S(\Ent_{\min})\Big),
    \end{equation}

    which proves that the maximum possible deviation between attention maps decreases monotonically as the minimum attainable entropy increases.

    In the limiting case where $\Ent_{\min}$ approaches the entropy of the uniform distribution, the set $S(\Ent_{\min})$ collapses, and the sensitivity measure $\notR$ vanishes.
    
    We conclude the proof by noting the fact that the bound on the minimum attainable entropy is tight~\citep[Theorem 3.1]{zhai_stabilizing_2023}.
\end{proof}

A direct consequence of \cref{lemma:attention-entropy-notR-appendix} is that the sensitivity of gradient-based attributions in attention layers is governed by the multiplicative accumulation of attention entropies across downstream layers, as induced by the chain rule. This perspective is consistent with the findings of \cite{zhai_stabilizing_2023}, which show that training becomes unstable when layerwise entropies collapse.

When attention entropies are uniformly low, their product across layers rapidly vanishes, leading to degraded attribution signals and unstable optimization. This observation motivates the use of differentiated learning rates in our ViT training.

\begin{mdframed}
\paragraph{Practical Implications.}
The above result implies that the robustness of attention-based attribution is governed by entropy constraints on the attention distribution, rather than by curvature of the model output. As a consequence, techniques designed to improve the robustness of gradient-based attributions—such as implicit curvature control—do not transfer to attention-based methods and may even degrade their robustness (see Table~\ref{tab:robustness-attatt}).

This analysis applies to normalized attention mechanisms, such as softmax attention, where attention scores are explicitly constrained to lie on the probability simplex. Intuitively, in this setting, increasing the learning rate affects the scales of the query and key matrices, but the normalization term scales accordingly, canceling the effect on the resulting attention distribution.
To circumvent this cancellation, we instead employ an unnormalized, \kernelized attention mechanism, which decouples the attention robustness from its entropy constraints and allows learning-rate–induced regularization effects to manifest.

It should be noted that using unnormalized scores is not a sufficient, but rather a \textit{necessary condition}, therefore, the robustness behavior of the attention maps is still dependent on the activation function used with GELU showing a desirable outcome.
\vspace{0.5em}
\end{mdframed}

\begin{table*}
\caption{\textbf{
Sensitivity of Attention-Based Attribution in Imagenette; Softmax \vs Unnormalized \Kernelized Attention.}
This table compares the sensitivity of attention-based attribution for ViT-B/16 on Imagenette (224$\times$224) under softmax attention and unnormalized \kernelized attention with GELU activation, together with the corresponding layer-averaged attention entropy (maximum entropy $=5.27$). Robustness is evaluated using two-sample t-tests at a 95\% confidence level against the Base baseline, with 500 samples per configuration (see \cref{sec:implementation-details}). Under softmax attention, most robustness methods fail to produce the desired outcome—or degrade—attention-based attribution robustness, supporting the observation that techniques developed for gradient-based attribution do not directly transfer. For example ATR, reduces entropy close to its maximum value at uniform entropy to improve robustness. In line with \cref{lemma:attention-entropy-notR}, higher entropy (closer to uniform attention) corresponds to greater robustness, indicating that attention-based attribution is governed by attention entropy rather than curvature. In contrast, replacing softmax with unnormalized \kernelized attention enables robustness gains with ICR at minimal additional cost. These gains are observed only with GELU activation. Therefore, AAR and PAR are omitted. Explaining the disparate robustness behavior of closely related activations, such as Softplus and GELU, is left for future work.
}
\label{tab:robustness-attatt}
\centering
\begin{tabular}{llcccccc|c}
\toprule
&& Methods& AttRaw & AttMean & AttRollout & AttFlow & AttGrad & Avg. Ent. \\ 
\midrule
\parbox[t]{1mm}{\multirow{7}{*}{\rotatebox[origin=c]{90}{\footnotesize ViT-Tiny}}}&
\parbox[t]{1mm}{\multirow{7}{*}{\rotatebox[origin=c]{90}{\footnotesize Softmax Attn.}}}&%
{\footnotesize No Regularization (Base)} & 0.14{{\stdsize$\pm$0.06}} & 0.03{{\stdsize$\pm$0.01}} & 0.12{{\stdsize$\pm$0.09}} & 0.19{{\stdsize$\pm$0.11}} & 0.03{{\stdsize$\pm$0.01}} & 5.26{\stdsize$\pm$0.02}\\
&&{\footnotesize Ante-hoc Act. Reg. (AAR)} & \textbf{0.13}{\stdsize$\pm$\textbf{0.08}} & 0.03{{\stdsize$\pm$0.01}} & 0.11{{\stdsize$\pm$0.09}} & \textbf{0.17}{\stdsize$\pm$\textbf{0.11}} & \textbf{0.02}{\stdsize$\pm$\textbf{0.01}} & 5.27{\stdsize$\pm$0.01} \\
&&{\footnotesize Post-hoc Act. Reg. (PAR)} & \textbf{0.12}{\stdsize$\pm$\textbf{0.06}} & 0.03{{\stdsize$\pm$0.01}} & 0.12{{\stdsize$\pm$0.09}} & 0.18{{\stdsize$\pm$0.12}} & 0.03{{\stdsize$\pm$0.01}} & 5.27{\stdsize$\pm$0.02} \\
&&{\footnotesize Explicit Curv. Reg. (ECR)} & 0.23{{\stdsize$\pm$0.12}} & 0.12{{\stdsize$\pm$0.04}} & 0.30{{\stdsize$\pm$0.09}} & 0.40{{\stdsize$\pm$0.10}} & 0.03{{\stdsize$\pm$0.01}} & 5.02{\stdsize$\pm$0.15} \\
&&{\footnotesize Adv. Trainig Reg. (ATR)} & \textbf{0.10}{\stdsize$\pm$\textbf{0.05}} & \textbf{0.02}{\stdsize$\pm$\textbf{0.01}} & \textbf{0.10}{\stdsize$\pm$\textbf{0.08}} & \textbf{0.16}{\stdsize$\pm$\textbf{0.09}} & 0.03{{\stdsize$\pm$0.01}} & 5.27{\stdsize$\pm$0.01} \\
&&{\footnotesize Sharp. Aware Min. (SAM)} & \textbf{0.13}{\stdsize$\pm$\textbf{0.06}} & 0.03{{\stdsize$\pm$0.01}} & 0.12{{\stdsize$\pm$0.09}} & 0.18{{\stdsize$\pm$0.12}} & 0.03{{\stdsize$\pm$0.01}} & 5.27{\stdsize$\pm$0.01}\\
\cmidrule{2-9}
&&{\footnotesize Implicit Curv. Reg. (ICR$^\dagger$)} & 0.14{{\stdsize$\pm$0.04}} & 0.11{{\stdsize$\pm$0.04}} & 0.35{{\stdsize$\pm$0.11}} & 0.40{{\stdsize$\pm$0.11}} & 0.04{{\stdsize$\pm$0.02}} & 5.03{\stdsize$\pm$0.09}\\
\midrule
\parbox[t]{1mm}{\multirow{5}{*}{\rotatebox[origin=c]{90}{\footnotesize ViT-B/16}}}&
\parbox[t]{1mm}{\multirow{5}{*}{\rotatebox[origin=c]{90}{\footnotesize Unnorm. Kern.}}}&
{\footnotesize No Regularization (Base)} & 0.47{{\stdsize$\pm$0.46}} & 0.09{{\stdsize$\pm$0.03}} & 0.29{{\stdsize$\pm$0.34}} & 0.17{{\stdsize$\pm$0.05}} & 0.73{{\stdsize$\pm$0.23}} &
4.66{\stdsize$\pm$0.25}\\
&&{\footnotesize Explicit Curv. Reg. (ECR)} & \textbf{0.35}{\stdsize$\pm$\textbf{0.11}} & 0.20{{\stdsize$\pm$0.06}} & 0.85{{\stdsize$\pm$0.54}} & 0.24{{\stdsize$\pm$0.06}} & 0.88{{\stdsize$\pm$0.24}} &
3.67{\stdsize$\pm$1.41}\\
&&{\footnotesize Adv. Trainig Reg. (ATR)} & \textbf{0.17}{\stdsize$\pm$\textbf{0.11}} & 0.09{\stdsize$\pm$0.02} & 0.26{{\stdsize$\pm$0.31}} & \textbf{0.11}{\stdsize$\pm$\textbf{0.04}} & \textbf{0.62}{\stdsize$\pm$\textbf{0.22}} &
4.69{\stdsize$\pm$0.19}\\
&&{\footnotesize Sharp. Aware Min. (SAM)} & 0.50{{\stdsize$\pm$0.49}} & \textbf{0.08}{\stdsize$\pm$\textbf{0.03}} & 0.27{{\stdsize$\pm$0.23}} & 0.19{{\stdsize$\pm$0.06}} & \textbf{0.69}{\stdsize$\pm$\textbf{0.26}} &
4.78{\stdsize$\pm$0.24}\\
\cmidrule{3-9}
&&{\footnotesize Implicit Curv. Reg. (ICR$^\dagger$)} & \textbf{0.33}{\stdsize$\pm$\textbf{0.09}} & 0.09{{\stdsize$\pm$0.03}} & \textbf{0.17}{\stdsize$\pm$\textbf{0.08}} & \textbf{0.15}{\stdsize$\pm$\textbf{0.03}} & 1.01{{\stdsize$\pm$0.20}} &
4.22{\stdsize$\pm$0.84}\\
\bottomrule
\end{tabular}
\end{table*}



\begin{table*}
\caption{\textbf{
Sensitivity of Attention-Based Attribution in STL10; Softmax \vs Unnormalized \Kernelized Attention.}
This table reports attention-based attribution sensitivity for ViT-Tiny with softmax attention and with unnormalized \kernelized attention using GELU activation, trained on STL10 (96$\times$96), together with layer-averaged attention entropy (maximum attainable entropy $ = 4.97$). To ensure entropy is well defined in the unnormalized setting, attention weights are post-processed with softmax at evaluation time. Results are evaluated using two-sample t-tests at a 95\% confidence level against the Base baseline, with 500 samples per subset (see \cref{sec:implementation-details}). Unlike larger datasets such as Imagenette—where ICR achieves a favorable trade-off between robustness and computational cost—limited-data regimes such as STL10 prevent ViTs from reliably converging, reflecting their data-hungry nature. As a result, attention-based attribution sensitivity becomes unpredictable and harder to control in overfitting-prone architectures. Improving the data efficiency of ICR is left for future work. Robustness gains are observed only with GELU activation (see~\cref{fig:lin-att-gelu-ent-vs-lr-appendix}); accordingly, AAR and PAR are omitted.
}
\label{tab:robustness-attatt-stl10}
\centering
\begin{tabular}{llcccccc|c}
\toprule
&&Methods& AttRaw & AttMean & AttRollout & AttFlow & AttGrad & Avg. Ent. \\ 
\midrule
\parbox[t]{1mm}{\multirow{7}{*}{\rotatebox[origin=c]{90}{\footnotesize ViT-Tiny}}}&
\parbox[t]{1mm}{\multirow{7}{*}{\rotatebox[origin=c]{90}{\footnotesize Softmax Attn.}}}%
&{\footnotesize No Regularization (Base)} & 0.23{\stdsize$\pm$0.08} & 0.15{\stdsize$\pm$0.04} & 0.41{\stdsize$\pm$0.13} & 0.45{\stdsize$\pm$0.12} & 0.44{\stdsize$\pm$0.15} & {\scriptsize 4.96}980 \\
&&{\footnotesize Ante-hoc Act. Reg. (AAR)} & 0.28{\stdsize$\pm$0.09} & 0.17{\stdsize$\pm$0.04} & 0.46{\stdsize$\pm$0.13} & 0.51{\stdsize$\pm$0.13} & 0.56{\stdsize$\pm$0.15} & {\scriptsize 4.96}980 \\
&&{\footnotesize Post-hoc Act. Reg. (PAR)} & \textbf{0.19}{\stdsize$\pm$\textbf{0.06}} & \textbf{0.13}{\stdsize$\pm$\textbf{0.04}} & 0.41{\stdsize$\pm$0.13} & 0.46{\stdsize$\pm$0.13} & 0.55{\stdsize$\pm$0.14} & {\scriptsize 4.96}981 \\
&&{\footnotesize Explicit Curv. Reg. (ECR)} & 0.29{\stdsize$\pm$0.15} & 0.16{\stdsize$\pm$0.04} & 0.42{\stdsize$\pm$0.12} & 0.46{\stdsize$\pm$0.13} & \textbf{0.37}{\stdsize$\pm$\textbf{0.13}} & {\scriptsize 4.96}980  \\
&&{\footnotesize Adv. Trainig Reg. (ATR)} & \textbf{0.21}{\stdsize$\pm$\textbf{0.12}} & \textbf{0.12}{\stdsize$\pm$\textbf{0.03}} & \textbf{0.35}{\stdsize$\pm$\textbf{0.11}} & \textbf{0.38}{\stdsize$\pm$\textbf{0.12}} & \textbf{0.37}{\stdsize$\pm$\textbf{0.13}} & {\scriptsize 4.96}981\\
&&{\footnotesize Sharp. Aware Min. (SAM)} & 0.32{\stdsize$\pm$0.11} & 0.27{\stdsize$\pm$0.06} & 0.76{\stdsize$\pm$0.15} & 0.76{\stdsize$\pm$0.15} & 0.51{\stdsize$\pm$0.13} & {\scriptsize 4.96}980 \\
\cmidrule{3-9}
&&{\footnotesize Implicit Curv. Reg. (ICR$^\dagger$)} & 0.31{\stdsize$\pm$0.12} & 0.17{\stdsize$\pm$0.04} & 0.44{\stdsize$\pm$0.11} & 0.47{\stdsize$\pm$0.11} & 0.47{\stdsize$\pm$0.17} & {\scriptsize 4.96}980\\
\midrule
\parbox[t]{1mm}{\multirow{5}{*}{\rotatebox[origin=c]{90}{\footnotesize ViT-Tiny}}}&
\parbox[t]{1mm}{\multirow{5}{*}{\rotatebox[origin=c]{90}{\footnotesize Unnorm. Kern.}}}&
{\footnotesize No Regularization (Base)} & 0.04{\stdsize$\pm$0.24} & 0.03{\stdsize$\pm$0.01} & 0.12{\stdsize$\pm$0.18} & 0.08{\stdsize$\pm$0.05} & 0.69{\stdsize$\pm$0.19} & {\scriptsize 4.9}6854 \\
&&{\footnotesize Explicit Curv. Reg. (ECR)} & 0.02{\stdsize$\pm$0.16} & 0.03{\stdsize$\pm$0.01} & 0.11{\stdsize$\pm$0.15} & 0.09{\stdsize$\pm$0.05} & 0.70{\stdsize$\pm$0.23} & {\scriptsize 4.9}6853 \\
&&{\footnotesize Adv. Trainig Reg. (ATR)} & \textbf{0.00}{\stdsize$\pm$\textbf{0.00}} & \textbf{0.02}{\stdsize$\pm$\textbf{0.01}} & 0.10{\stdsize$\pm$0.14} & \textbf{0.06}{\stdsize$\pm$\textbf{0.04}} & \textbf{0.60}{\stdsize$\pm$\textbf{0.18}} & {\scriptsize 4.9}6843\\
&&{\footnotesize Sharp. Aware Min. (SAM)} & \textbf{0.00}{\stdsize$\pm$\textbf{0.01}} & 0.04{\stdsize$\pm$0.01} & 0.17{\stdsize$\pm$0.23} & \textbf{0.07}{\stdsize$\pm$\textbf{0.05}} & \textbf{0.61}{\stdsize$\pm$\textbf{0.19}} & {\scriptsize 4.9}6790 \\
\cmidrule{3-9}
&&{\footnotesize Implicit Curv. Reg. (ICR$^\dagger$)} & \textbf{0.01}{\stdsize$\pm$\textbf{0.07}} & 0.08{\stdsize$\pm$0.11} & 0.61{\stdsize$\pm$0.42} & 0.10{\stdsize$\pm$0.08} & 1.22{\stdsize$\pm$0.15} & {\scriptsize 4.9}5782\\
\bottomrule
\end{tabular}
\end{table*}


\begin{table*}
\caption{\textbf{
Sensitivity of Attention-Based Attribution in CIFAR10; Softmax \vs Unnormalized \Kernelized Attention.}
This table reports the sensitivity of attention-based attribution for ViT-Tiny with softmax-attention \vs \kernelized attention layers and GELU activation, trained on CIFAR10 (32$\times$32) along with their corresponding attention entropy averaged over layers.
In this configuration the maximum attainable entropy is $4.159$.
To ensure entropy is well defined in unnormalized case, attention weights are post-processed with softmax at evaluation time. Results are assessed using t-tests with a 95\% confidence interval against the Base baseline, with 500 samples per subset (see \cref{sec:implementation-details}). Compared to softmax-attention, \kernelized attention enables robustness gains with ICR at minimal additional cost. As this behavior is only seen with GELU activation, we have not included AAR and PAR.
}
\label{tab:robustness-attatt-cifar10}
\centering
\begin{tabular}{llcccccc|c}
\toprule
&&Methods& AttRaw & AttMean & AttRollout & AttFlow & AttGrad & Avg. Ent. \\ 
\midrule
\parbox[t]{1mm}{\multirow{7}{*}{\rotatebox[origin=c]{90}{\footnotesize ViT-Tiny}}}&
\parbox[t]{1mm}{\multirow{7}{*}{\rotatebox[origin=c]{90}{\footnotesize Softmax Attn.}}}%
&{\footnotesize No Regularization (Base)} & 0.91{\stdsize$\pm$0.23} & 0.59{\stdsize$\pm$0.11} & 0.70{\stdsize$\pm$0.14} & 0.80{\stdsize$\pm$0.12} & 1.01{\stdsize$\pm$0.16} & {\scriptsize 4.1588}264 \\
&&{\footnotesize Ante-hoc Act. Reg. (AAR)} & 0.99{\stdsize$\pm$0.24} & 0.60{\stdsize$\pm$0.11} & 0.69{\stdsize$\pm$0.13} & 0.79{\stdsize$\pm$0.12} & 1.06{\stdsize$\pm$0.14} & {\scriptsize 4.1588}330 \\
&&{\footnotesize Post-hoc Act. Reg. (PAR)} & \textbf{0.86}{\stdsize$\pm$\textbf{0.24}} & \textbf{0.53}{\stdsize$\pm$\textbf{0.10}} & 0.70{\stdsize$\pm$0.14} & \textbf{0.78}{\stdsize$\pm$\textbf{0.12}} & 1.08{\stdsize$\pm$0.16} & {\scriptsize 4.1588}330\\
&&{\footnotesize Explicit Curv. Reg. (ECR)} & \textbf{0.01}{\stdsize$\pm$\textbf{0.01}} & \textbf{0.44}{\stdsize$\pm$\textbf{0.15}} & 0.73{\stdsize$\pm$0.12} & 0.79{\stdsize$\pm$0.14} & 0.99{\stdsize$\pm$0.19} & {\scriptsize 4.1588}316 \\
&&{\footnotesize Adv. Trainig Reg. (ATR)} & 0.96{\stdsize$\pm$0.18} & 0.62{\stdsize$\pm$0.10} & \textbf{0.54}{\stdsize$\pm$\textbf{0.11}} & \textbf{0.78}{\stdsize$\pm$\textbf{0.08}} & \textbf{0.91}{\stdsize$\pm$\textbf{0.17}} & {\scriptsize 4.1588}287\\
&&{\footnotesize Sharp. Aware Min. (SAM)} & \textbf{0.00}{\stdsize$\pm$\textbf{0.00}} & \textbf{0.32}{\stdsize$\pm$\textbf{0.09}} & 0.75{\stdsize$\pm$0.16} & 0.79{\stdsize$\pm$0.16} & \textbf{0.89}{\stdsize$\pm$\textbf{0.18}} & {\scriptsize 4.1588}073\\
\cmidrule{3-9}
&&{\footnotesize Implicit Curv. Reg. (ICR$^\dagger$)} & \textbf{0.00}{\stdsize$\pm$\textbf{0.00}} & \textbf{0.34}{\stdsize$\pm$\textbf{0.09}} & 0.77{\stdsize$\pm$0.16} & 0.80{\stdsize$\pm$0.16} & \textbf{0.85}{\stdsize$\pm$\textbf{0.15}} & {\scriptsize 4.1588}078\\
\midrule
\parbox[t]{1mm}{\multirow{5}{*}{\rotatebox[origin=c]{90}{\footnotesize ViT-Tiny}}}&
\parbox[t]{1mm}{\multirow{5}{*}{\rotatebox[origin=c]{90}{\footnotesize Unnorm. Kern.}}}&
{\footnotesize No Regularization (Base)} & 0.52{\stdsize$\pm$0.45} & 0.29{\stdsize$\pm$0.12} & 0.85{\stdsize$\pm$0.40} & 0.36{\stdsize$\pm$0.11} & 1.14{\stdsize$\pm$0.22} & 4.12{\stdsize$\pm$0.02}\\
&&{\footnotesize Explicit Curv. Reg. (ECR)} & \textbf{0.27}{\stdsize$\pm$\textbf{0.13}} & \textbf{0.20}{\stdsize$\pm$\textbf{0.05}} & \textbf{0.43}{\stdsize$\pm$\textbf{0.21}} & \textbf{0.31}{\stdsize$\pm$\textbf{0.09}} & \textbf{1.07}{\stdsize$\pm$\textbf{0.14}} & 3.55{\stdsize$\pm$0.14}\\
&&{\footnotesize Adv. Trainig Reg. (ATR)} & \textbf{0.27}{\stdsize$\pm$\textbf{0.30}} & \textbf{0.23}{\stdsize$\pm$\textbf{0.08}} & 0.87{\stdsize$\pm$0.44} & \textbf{0.30}{\stdsize$\pm$\textbf{0.10}} & \textbf{1.05}{\stdsize$\pm$\textbf{0.18}} & 4.14{\stdsize$\pm$0.01}\\
&&{\footnotesize Sharp. Aware Min. (SAM)} & \textbf{0.17}{\stdsize$\pm$\textbf{0.27}} & \textbf{0.25}{\stdsize$\pm$\textbf{0.22}} & \textbf{0.74}{\stdsize$\pm$\textbf{0.39}} & \textbf{0.15}{\stdsize$\pm$\textbf{0.18}} & 1.21{\stdsize$\pm$0.16} & 4.09{\stdsize$\pm$0.03} \\
\cmidrule{3-9}
&&{\footnotesize Implicit Curv. Reg. (ICR$^\dagger$)} & \textbf{0.25}{\stdsize$\pm$\textbf{0.33}} & \textbf{0.18}{\stdsize$\pm$\textbf{0.05}} & \textbf{0.74}{\stdsize$\pm$\textbf{0.38}} & \textbf{0.26}{\stdsize$\pm$\textbf{0.06}} & 1.22{\stdsize$\pm$0.14} & 3.91{\stdsize$\pm$0.20} \\
\bottomrule
\end{tabular}
\end{table*}

\section{On the Effect of Learning Rate on Vision Transformers' Attribution Localization}
\label{sec:vit-layerwise-entropies-vs-lr}
From the explainability perspective, attribution maps with lower entropy are often desirable, as they indicate more focused and interpretable explanations. Interestingly, we observe that increasing the overall learning rate tends to produce such localized attributions, as reflected by the reduction in entropy (see~\cref{fig:vit-layerwise-entropies-vs-lr} and compare~\cref{fig:samples-1,fig:samples-2,fig:samples-3,fig:samples-4}).
A related phenomenon was discussed by~\citet{zhai_stabilizing_2023}, who studied it from a training-stability viewpoint and identified it as attention collapse.
Although this connection lies outside the main scope of our study, it highlights a potential direction for future work—balancing interpretability-oriented localization with stable training dynamics.

\begin{table*}
\caption{\textbf{Practical approaches to gradient-based attribution robustness.} 
This table shows the alternatives to robust gradient-based attribution. Methods are grouped by their target (input vs. parameter Hessian) and by whether they regularize curvature explicitly or implicitly.
We also include a summary of the loss or the regularization term in the regularized loss defined as in~\cref{eq:regularized-training-loss}
}
\label{tab:summary-methods}
\begin{adjustbox}{width=\columnwidth,center}
\begin{tabular}{lccc}
\toprule
\textbf{Strategy} & \textbf{Target Domain} & \textbf{Regularization Type}& \textbf{Regularization Term} \\
\midrule
No Regularization (Base) & -  & - & - \\
Ante-hoc Activation Reg. (AAR)~\cite{dombrowski_towards_2020} & Input/Param.  & Explicit (via act. func.)& - \\
Post-hoc Act. Reg. (PAR)~\cite{dombrowski_explanations_2019} & Input/Param.  & Explicit (via act. func.)&  - \\
Implicit Curv. Reg. (ICR)~\cite{lee_implicit_2023} & Param.  & Implicit (via EoS) & -\\
Explicit Curv. Reg. (ECR)~\cite{hoffman_robust_2019} & Input  & Explicit (via $\Omega$) & $\Omega = \|\nabla_\vx \vy^\top\vl\|_2$ \\
Adversarial Training Reg. (ATR)~\cite{chalasani_concise_2020} & Input  & Explicit (via $\Omega$) & $\Omega = \mathcal{L}\left(\max_{\|\delta\|\leq\varepsilon}\mathcal{L}(\vx + \delta,\vy),\vy\right)$ \\
Sharpness-Aware Min. (SAM)~\cite{foret_sharpness-aware_2021} & Param.  & Implicit (via noise) & $\min_\Theta \E_{\vx \sim \mathcal{D}}\left[\max_{\|\delta\|_2\leq\rho}\mathcal{L}(f_{\Theta+\delta},\vx,\vy)\right]$\\
\bottomrule
\end{tabular}
\end{adjustbox}
\end{table*}
\begin{table*}
\caption{\textbf{Table of Hyperparameters.} table of hyperparameters used for training sessions.
We used zero L2 regularization across all experiments. 
We use an exponential learning rate decay of $1 - 10^{-3}$ for all experiments.
We have used a ICR$^\dagger$ to denote the differential learning rates discussed in~\cref{sec:implementation-details}.
We have used parenthesis on values of PAR to emphasize that the values are post-hoc change to the original network (Base). For CIFAR10 and STL10, we use, ViT-Tiny~\cite{touvron_training_2021}, a smaller variant of ViT suitable for datasets of lower resolution. 
We use different depths of attention layers for ViT-Tiny which is shown by $d$ in each configuration.
}
\label{tab:hparams}
\centering
\begin{tabular}{llllll}
\toprule
&& \textbf{Strategy} & \textbf{LR} & \textbf{Act. Func.}& \textbf{Reg. params.} \\
\midrule
\parbox[t]{2mm}{\multirow{14}{*}{\rotatebox[origin=c]{90}{Imagenette (224$\times$224)}}}&
\parbox[t]{1mm}{\multirow{7}{*}{\rotatebox[origin=c]{90}{ResNet50}}} 
& {\footnotesize No Regularization (Base)} & 0.005  & ReLU & - \\
&& {\footnotesize Ante-hoc Activation Reg. (AAR)} & 0.005  & {\footnotesize SoftPlus} & $\beta = 1.0$ \\
&& {\footnotesize Post-hoc Act. Reg. (PAR)} & (0.005)  & {\footnotesize (SoftPlus)} &  $\beta = 1.0$ \\
&& {\footnotesize Explicit Curv. Reg. (ECR)} & 0.005  & ReLU & $\lambda=0.1$ \\
&& {\footnotesize Adversarial Training Reg. (ATR)} & 0.005  & ReLU & $\varepsilon=0.0001$ \\
&& {\footnotesize Sharpness-Aware Min. (SAM)} & 0.005  & ReLU & $\rho=0.07$\\
&& {\footnotesize Implicit Curv. Reg. (ICR)} & 0.5  & ReLU & -\\
\cmidrule{2-6}
&\parbox[t]{1mm}{\multirow{7}{*}{\rotatebox[origin=c]{90}{ViT-B/16}}} 
& {\footnotesize No Regularization (Base)} & 0.001  & ReLU & - \\
&& {\footnotesize Ante-hoc Activation Reg. (AAR)}& 0.001  & GELU & - \\
&& {\footnotesize Post-hoc Act. Reg. (PAR)}& (0.001) & (GELU) &  - \\
&& {\footnotesize Explicit Curv. Reg. (ECR)} & 0.001  & ReLU & $\lambda=0.1$ \\
&& {\footnotesize Adversarial Training Reg. (ATR)} & 0.001  & ReLU & $\varepsilon=0.001$ \\
&& {\footnotesize Sharpness-Aware Min. (SAM)} & 0.001  & ReLU & $\rho=0.01$\\
&& {\footnotesize Implicit Curv. Reg. (ICR$^\dagger$)} & 0.1/0.01  & ReLU & -\\
\midrule
\parbox[t]{2mm}{\multirow{14}{*}{\rotatebox[origin=c]{90}{STL10 (96$\times$96)}}}&
\parbox[t]{1mm}{\multirow{7}{*}{\rotatebox[origin=c]{90}{ResNet34}}} 
& {\footnotesize No Regularization (Base)} & 0.001  & ReLU & - \\
&& {\footnotesize Ante-hoc Activation Reg. (AAR)} & 0.001  & {\footnotesize SoftPlus} & $\beta = 1.0$ \\
&& {\footnotesize Post-hoc Act. Reg. (PAR)} & (0.001)  & {\footnotesize (SoftPlus)} &  $\beta = 1.0$ \\
&& {\footnotesize Explicit Curv. Reg. (ECR)} & 0.001  & ReLU & $\lambda=0.001$ \\
&& {\footnotesize Adversarial Training Reg. (ATR)} & 0.001  & ReLU & $\varepsilon=0.001$ \\
&& {\footnotesize Sharpness-Aware Min. (SAM)} & 0.001  & ReLU & $\rho=0.07$\\
&& {\footnotesize Implicit Curv. Reg. (ICR)} & 0.1  & ReLU & -\\
\cmidrule{2-6}
&\parbox[t]{1mm}{\multirow{7}{*}{\rotatebox[origin=c]{90}{ViT-Tiny ($d=8$)}}} 
& {\footnotesize No Regularization (Base)} & 0.05  & ReLU & - \\
&& {\footnotesize Ante-hoc Activation Reg. (AAR)}& 0.05  & GELU & - \\
&& {\footnotesize Post-hoc Act. Reg. (PAR)}& (0.05) & (GELU) &  - \\
&& {\footnotesize Explicit Curv. Reg. (ECR)} & 0.05  & ReLU & $\lambda=0.001$ \\
&& {\footnotesize Adversarial Training Reg. (ATR)} & 0.05  & ReLU & $\varepsilon=0.001$ \\
&& {\footnotesize Sharpness-Aware Min. (SAM)} & 0.05  & ReLU & $\rho=0.07$\\
&& {\footnotesize Implicit Curv. Reg. (ICR$^\dagger$)} & 0.1/0.01  & ReLU & -\\
\midrule
\parbox[t]{2mm}{\multirow{14}{*}{\rotatebox[origin=c]{90}{CIFAR10 (32$\times$32)}}}&
\parbox[t]{1mm}{\multirow{7}{*}{\rotatebox[origin=c]{90}{ResNet18}}} 
& {\footnotesize No Regularization (Base)} & 0.005  & ReLU & - \\
&& {\footnotesize Ante-hoc Activation Reg. (AAR)} & 0.005  & {\footnotesize SoftPlus} & $\beta = 1.0$ \\
&& {\footnotesize Post-hoc Act. Reg. (PAR)} & (0.005)  & {\footnotesize (SoftPlus)} &  $\beta = 1.0$ \\
&& {\footnotesize Explicit Curv. Reg. (ECR)} & 0.005  & ReLU & $\lambda=0.1$ \\
&& {\footnotesize Adversarial Training Reg. (ATR)} & 0.005  & ReLU & $\varepsilon=0.001$ \\
&& {\footnotesize Sharpness-Aware Min. (SAM)} & 0.005  & ReLU & $\rho=0.1$\\
&& {\footnotesize Implicit Curv. Reg. (ICR)} & 0.5  & ReLU & -\\
\cmidrule{2-6}
&\parbox[t]{1mm}{\multirow{7}{*}{\rotatebox[origin=c]{90}{ViT-Tiny ($d=6$)}}} 
& {\footnotesize No Regularization (Base)} & 0.05  & ReLU & - \\
&& {\footnotesize Ante-hoc Activation Reg. (AAR)}& 0.05  & GELU & - \\
&& {\footnotesize Post-hoc Act. Reg. (PAR)}& (0.05) & (GELU) &  - \\
&& {\footnotesize Explicit Curv. Reg. (ECR)} & 0.001  & ReLU & $\lambda=0.001$ \\
&& {\footnotesize Adversarial Training Reg. (ATR)} & 0.05  & ReLU & $\varepsilon=0.01$ \\
&& {\footnotesize Sharpness-Aware Min. (SAM)} & 0.05  & ReLU & $\rho=0.05$\\
&& {\footnotesize Implicit Curv. Reg. (ICR$^\dagger$)} & 0.3/0.03  & ReLU & -\\
\bottomrule
\end{tabular}
\end{table*}


\begin{table*}
\caption{\textbf{Table of Hyperparameters (Unnormalized \Kernelized Attention).}
Act./Att. Func shows the activations used for attention layers and MLP layers.
We used zero L2 regularization across all experiments. 
We use an exponential learning rate decay of $1 - 10^{-3}$ for all experiments.
We have used a ICR$^\dagger$ to denote the differential learning rates discussed in~\cref{sec:implementation-details}.
For CIFAR10 and STL10, we use, ViT-Tiny~\cite{touvron_training_2021}, a smaller variant of ViT suitable for datasets of lower resolution. 
We use different depths of attention layers for ViT-Tiny which is shown by $d$ in each configuration.
we have not included PAR and AAR in this table as unnormalized \kernelized attention is very sensitive to the choice of GELU (see~\cref{fig:lin-att-gelu-ent-vs-lr-appendix} for the ablation). Further investigation of why very similar activation functions such as Softplus and GELU lead to strikingly different robustness behaviors is left for future work.
}
\label{tab:hparams-unnorm}
\centering
\begin{tabular}{lllllll}
\toprule
&&& \textbf{Strategy} & \textbf{LR} & \textbf{{\footnotesize Act./Att. Func.}}& \textbf{Reg. params.} \\
\midrule
\parbox[t]{2mm}{\multirow{5}{*}{\rotatebox[origin=c]{90}{{\footnotesize Imagenette}}}}
&\parbox[t]{1mm}{\multirow{5}{*}{\rotatebox[origin=c]{90}{{\footnotesize ViT-B/16}}}} 
&\parbox[t]{1mm}{\multirow{5}{*}{\rotatebox[origin=c]{90}{{\footnotesize Unnormalized}}}} 
& {\footnotesize No Regularization (Base)} & 0.001  & GELU/GELU & - \\
&&& {\footnotesize Explicit Curv. Reg. (ECR)} & 0.001  & GELU/GELU & $\lambda=0.1$ \\
&&& {\footnotesize Adversarial Training Reg. (ATR)} & 0.001  & GELU/GELU & $\varepsilon=0.001$ \\
&&& {\footnotesize Sharpness-Aware Min. (SAM)} & 0.001  & GELU/GELU & $\rho=0.01$\\
&&& {\footnotesize Implicit Curv. Reg. (ICR$^\dagger$)} & 0.1/0.01  & GELU/GELU & -\\
\midrule
\parbox[t]{2mm}{\multirow{5}{*}{\rotatebox[origin=c]{90}{{\footnotesize STL10}}}}
&\parbox[t]{1mm}{\multirow{5}{*}{\rotatebox[origin=c]{90}{{\scriptsize ViT-Tiny ($d=8$)}}}} 
&\parbox[t]{1mm}{\multirow{5}{*}{\rotatebox[origin=c]{90}{{\footnotesize Unnormalized}}}} 
& {\footnotesize No Regularization (Base)} & 0.001  & GELU/GELU & - \\
&&& {\footnotesize Explicit Curv. Reg. (ECR)} & 0.001  & GELU/GELU & $\lambda=0.01$ \\
&&& {\footnotesize Adversarial Training Reg. (ATR)} & 0.001  & GELU/GELU & $\varepsilon=0.001$ \\
&&& {\footnotesize Sharpness-Aware Min. (SAM)} & 0.001  & GELU/GELU & $\rho=0.02$\\
&&& {\footnotesize Implicit Curv. Reg. (ICR$^\dagger$)} & 0.03/0.003  & GELU/GELU & -\\
\midrule
\parbox[t]{2mm}{\multirow{5}{*}{\rotatebox[origin=c]{90}{{\footnotesize CIFAR10 }}}}
&\parbox[t]{1mm}{\multirow{5}{*}{\rotatebox[origin=c]{90}{{\scriptsize ViT-Tiny ($d=6$)}}}}
&\parbox[t]{1mm}{\multirow{5}{*}{\rotatebox[origin=c]{90}{{\footnotesize Unnormalized}}}} 
& {\footnotesize No Regularization (Base$^\dagger$)} & 0.02/0.002  & GELU/GELU & - \\
&&& {\footnotesize Explicit Curv. Reg. (ECR$^\dagger$)} & 0.001/$1e-4$ & GELU/GELU & $\lambda=1e-8$ \\
&&& {\footnotesize Adversarial Training Reg. (ATR$^\dagger$)} & 0.02/0.002  & GELU/GELU & $\varepsilon=0.01$ \\
&&& {\footnotesize Sharpness-Aware Min. (SAM$^\dagger$)} & 0.02/0.002  & GELU/GELU & $\rho=0.05$\\
&&& {\footnotesize Implicit Curv. Reg. (ICR$^\dagger$)} & 0.07/0.007  & GELU/GELU & -\\
\bottomrule
\end{tabular}
\end{table*}

\section{The Second Fundamental Form \vs Hessian} 
\label{sec:the-second-fundamental-form-vs-hessian}
In this section we demonstrate a connection between attribution robustness analyzed by~\cite{dombrowski_explanations_2019} as the second fundamental form and by ~\cite{ghorbani_interpretation_2018} as Hessian. Since, the literature on loss landscape curvature primarily focuses on the Hessian, we focused on Hessian in the main text, but it is known in differential geometry that the Hessian and the second fundamental form are equivalent up to normalization (see~\cite{do_carmo_differential_2016,tu_differential_2017} for more background on differential geometry).

We define the manifold $M$ locally as the smooth hypersurface embedded in $\R^{n+1}$, given by
\begin{equation}
 M = \{\aug\vx=(\vx, z_i)~|~\vx \in \mathcal{D}\} \subset \R^{n+1},
\end{equation}
where the graph $z_i = f_i(\vx)$ corresponds to the projection of $f$ onto its $i$-th output component. This definition slightly differs from that of~\citet{dombrowski_explanations_2019} who define the manifold as
\begin{equation}
M = \{p \in \mathbb{R}^d \mid g(p) = c\},
\end{equation}
where 
$g(x) := g(x)_k$
 with 
$k = \arg \max_i g(x)_i$
, \ie, the most probable class, and 
$g(x) = c$
represents the corresponding level set.

Although these two formulations appear different, they are locally equivalent in terms of their geometric properties.
Our version simply embeds the manifold into a higher-dimensional space, which facilitates expressing the Hessian more conveniently.
The rationale for this modification is to simplify the notation and subsequent derivations—particularly when the manifold is expressed as the graph of 
$f$, for which the computation of second-order derivatives becomes more transparent.

We denote by $T_\vx M$ the tangent space to $M$ at a point $\aug\vx \in M$ as the span of the tangent vectors 
\begin{equation}
 \left\{\frac{\partial}{\partial x_i}\aug\vx = (\ve_i, \frac{\partial}{\partial x_i} z_i)\right\}_{i=1}^n,   
\end{equation}
where $\ve_i$ denotes the standard basis of $\R^n$.
An arbitrary tangent vector $\aug\vu \in T_\vx M\subset\R^{n+1}$ can then be written as 
\begin{equation}
 \aug\vu = \sum_{i=1}^n \vu^\top \frac{\partial \aug\vx}{\partial x_i} = \left( \vu,\vu^\top\nabla_\vx z_i\right).   
\end{equation}
For brevity, we omit the explicit dependence of $\aug\vu$ on $\vx$ though it is implicitly defined relative to $T_\vx M$.

The surface normal $N(\vx) \in \R^{n+1}$ is given by
\begin{equation}
N(\vx) = \frac{(-\nabla_\vx z_i, 1)}{\sqrt{1 + \|\nabla_\vx z_i\|^2}}.
\end{equation}

Given two tangent vectors $\aug\vu, \aug\vv \in T_\vx M$, we denote by $D_\vv\vu$ the directional derivative of $\aug\vu$ along $\aug\vv$, defined via a trajectory $\aug\gamma:\R\rightarrow M$ satisfying $\gamma(0) = \vx$ and $\left.\frac{d}{dt}\gamma(t)\right|_{t=0} = \vv$.
Taking the time derivative we observe
\begin{align}
D_\vv \vu &= \left.\frac{d}{dt} \left(\vu,\vu^\top\nabla_\vx f_i(\gamma(t))\right)\right|_{t=0}\\
&=\left.\left(\mathbf{0}, \vu^\top\frac{d}{dt} \nabla_\vx f_i(\gamma(t))\right|_{t=0}\right)\\
&=\left.\left(\mathbf{0}, \vu^\top\frac{d}{d\gamma(t)} \nabla_\vx f_i(\gamma(t)) \frac{d}{dt}\gamma(t)\right|_{t=0}\right)\label{eq:chain-rule-1}\\
&=\left(\mathbf{0}, \vu^\top \HH_\vx f_i(\vx) \vv\right),
\label{eq:du-v}
\end{align}
where the chain rule is used in~\cref{eq:chain-rule-1}.

Finally, the \emph{second fundamental form}, $\II : T_\vx M \times T_\vx M \rightarrow \R$, is defined by
\begin{equation}
\II(\vu, \vv)  = \langle \vv, D_\vu N(\vx) \rangle,
\label{eq:II-def}
\end{equation}
which is bilinear and symmetric, hence diagonalizable with real eigenvalues $\left\{\kappa_j \in \R\right\}^n_{j=1}$, referred to as the \emph{principal curvatures}. Intuitively, the second fundamental form quantifies how the surface normal $N(\vx)$ changes when moving along the tangent direction of $\vv$.

Building on the geometric view of~\citet{dombrowski_explanations_2019}, one can reinterpret attribution sensitivity in terms of the input Hessian. We restate their result below, adapting it to our notation (see~\cite{dombrowski_explanations_2019}, Theorem~1, for the proof).

\begin{theorem}[A Geometric View of Curvature~\cite{dombrowski_explanations_2019}]
Let $f_i:\R^n \rightarrow \R$ be a smooth function (\eg, a network with Softplus${}_\beta$ activations), and let 
$M = \{\,\aug\vx=(\vx,f_i(\vx)) : \vx \in \mathcal{D}\,\} \subset \R^{n+1}$ 
denote its graph manifold. 
Consider a neighborhood $U_\epsilon(\vx) = \{\vx' \in \R^n : \|\vx'-\vx\| < \epsilon\}$ of a point $\vx \in \mathcal{D}$ such that 
$U_\epsilon(\vx) \cap \mathcal{D}$ is connected and that the gradient magnitude is bounded away from zero, 
$\|\nabla_\vx f_i(\vx')\| \ge c > 0$ for all $\vx' \in U_\epsilon(\vx) \cap \mathcal{D}$.
Then, for all $\vx' \in U_\epsilon(\vx) \cap \mathcal{D}$, the variation of the attribution map 
$h(\vx)=\nabla_\vx f_i(\vx)$ satisfies
\begin{equation}
    \|h(\vx) - h(\vx')\| 
    \;\le\;
    |\kappa_{\max}|\; d_{M}(\vx,\vx') 
    \;\le\;
    \beta\, C\, d_{M}(\vx,\vx'),
    \label{eq:geo-theorem}
\end{equation}
where $d_{M}(\vx,\vx')$ denotes the geodesic distance on the manifold $M$, 
$\kappa_{\max}$ is the principal curvature with largest absolute value at any point in $U_\epsilon(\vx)$, 
and the constant $C>0$ depends on the network weights. 
\end{theorem}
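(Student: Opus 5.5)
The plan is to integrate the derivative of the attribution map along a geodesic of $M$ and bound that derivative pointwise by the principal curvatures, using the computation $D_\vv\vu=(\mathbf{0},\vu^\top\HH_\vx f_i(\vx)\vv)$ in \eqref{eq:du-v}. Fix $\vx'\in U_\epsilon(\vx)\cap\mathcal{D}$. By connectedness, take a minimizing geodesic $\aug\gamma(s)=(\gamma(s),f_i(\gamma(s)))$ on $M$ from $\aug\vx$ to $\aug\vx'$, parametrized by arc length on $s\in[0,d_M(\vx,\vx')]$, so that $\|\aug\gamma'(s)\|=1$. Here $\gamma$ is the projection to $\R^n$.

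By the fundamental theorem of calculus,
\[
\|h(\vx)-h(\vx')\|\le\int_0^{d_M}\Bigl\|\tfrac{d}{ds}\nabla_\vx f_i(\gamma(s))\Bigr\|\,ds=\int_0^{d_M}\|\HH_\vx f_i(\gamma(s))\,\gamma'(s)\|\,ds .
\]
The first inequality in \eqref{eq:geo-theorem} therefore reduces to a pointwise claim: for every unit tangent vector $\aug\vu=(\vu,\vu^\top\nabla_\vx f_i)$, we need $\|\HH_\vx f_i\,\vu\|\le|\kappa_{\max}|$.

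For this claim I would write the second fundamental form in coordinates, using \eqref{eq:II-def} and the explicit normal $N(\vx)$. The tangential part of $D_\vu N$ yields $\II(\vu,\vv)=\vu^\top\HH_\vx f_i\,\vv/\sqrt{1+\|\nabla_\vx f_i\|^2}$, up to sign convention. The induced metric is $G=I+\nabla_\vx f_i\nabla_\vx f_i^\top$. The principal curvatures are then the eigenvalues of the shape operator $W=G^{-1}\HH_\vx f_i/\sqrt{1+\|\nabla_\vx f_i\|^2}$, which is self-adjoint in the $G$-inner product. Rewriting $\HH_\vx f_i\vu=\sqrt{1+\|\nabla f_i\|^2}\,G W\vu$ and using $\vu^\top G\vu=1$, the remaining work is to control the factors $\sqrt{1+\|\nabla f_i\|^2}$ and $\|G^{1/2}\|=\sqrt{1+\|\nabla f_i\|^2}$ that multiply $|\kappa_{\max}|$.

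\textbf{This metric factor is the step I expect to be the main obstacle.} The constant-free inequality only comes out if these factors are accounted for by the curvature convention used in \cite{dombrowski_explanations_2019}. That paper works on the level-set hypersurface $\{g=c\}$, where $\kappa$ is measured relative to the gradient scale. I would first try to transport the statement to that convention: the paper asserts local equivalence of the two manifold definitions, so I would redo the computation in the level-set picture, where $W=P\,\HH/\|\nabla f_i\|$ with $P$ the tangent projector. The bound $\|\nabla f_i\|\ge c>0$ on $U_\epsilon(\vx)$ guarantees that $W$ is well defined and uniformly bounded there. If the factors do not cancel exactly, the fallback is to state the first inequality with $|\kappa_{\max}|$ read as the supremum of the shape-operator norm weighted by these metric terms on $U_\epsilon$, which is finite by the gradient bound.

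For the second inequality, $|\kappa_{\max}|\le\beta C$, I would bound $\sup_{U_\epsilon}\|\HH_\vx f_i\|$ via the chain rule for a Softplus$_\beta$ network. Each second derivative of the activation satisfies $|\sigma_\beta''|\le\beta/4$ and each first derivative satisfies $|\sigma_\beta'|\le1$. This gives $\|\HH_\vx f_i\|\le\beta\sum_l\prod_k\|\vw_k\|^{2}$, up to absolute constants. Dividing by the normalization, which is at least $c$ or $1$ by the gradient lower bound, gives $C$ as a constant that depends only on the weights and on $c$.
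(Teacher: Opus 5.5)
Your skeleton is exactly the argument of the cited result \cite{dombrowski_explanations_2019}, which is all the paper offers as proof: integrate along a geodesic of $M$, bound the integrand by the shape operator, bound the curvature by the Hessian, and get $\beta$ from $|\sigma_\beta''|\le\beta/4$. The trouble is the step you flagged. It is not a bookkeeping obstacle; the claim is false. For $\vu^\top G\vu=1$, the inequality $\|\HH_\vx f_i\,\vu\|\le|\kappa_{\max}|$ fails. Your own factorization $\HH_\vx f_i\vu=\sqrt{1+\|\nabla_\vx f_i\|^2}\,GW\vu$ gives only the sharp bound $\|\HH_\vx f_i\,\vu\|\le(1+\|\nabla_\vx f_i\|^2)\,|\kappa_{\max}|$. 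Already for $n=1$, the graph has curvature $\kappa=f''/(1+f'^2)^{3/2}$, while $\frac{d}{ds}f'=f''/\sqrt{1+f'^2}=(1+f'^2)\kappa$. Concretely, take $f(x)=x^2/2$, $\vx=10$, $\epsilon=0.2$, $\vx'=10.1$. All hypotheses hold (the gradient is at least $9.8$ on $U_{0.2}(\vx)$). Yet $|h(\vx)-h(\vx')|=0.1$, while $d_M\approx1.0$ and $|\kappa_{\max}|\approx10^{-3}$. So the first inequality in \eqref{eq:geo-theorem}, read literally with the raw gradient $h=\nabla_\vx f_i$ and graph curvatures, cannot be proved by any route.

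Transporting to the level-set convention does not repair this. There, the quantity that the largest principal curvature controls is the unit normal $\nabla_\vx f_i/\|\nabla_\vx f_i\|$ of $\{f_i=c\}$, whose tangential derivative is the Weingarten map. For the unnormalized gradient you have $\HH_\vx f_i\,\vu=\|\nabla_\vx f_i\|\,D_\vu\bigl(\nabla_\vx f_i/\|\nabla_\vx f_i\|\bigr)+\bigl(D_\vu\|\nabla_\vx f_i\|\bigr)\,\nabla_\vx f_i/\|\nabla_\vx f_i\|$. So you again pick up a factor $\|\nabla_\vx f_i\|$, plus a normal component that no principal curvature controls. That setting also restricts $\vx'$ to the same level set and uses a different geodesic distance, so it is a different inequality.

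What your argument does establish is one of two modified statements.
\begin{enumerate}
\item Replace $h$ by the graph's unit normal $N(\vx)$. For a unit tangent $\aug\vu$, $D_{\aug\vu}N$ is tangent with norm at most $|\kappa_{\max}|$, so $\|N(\vx)-N(\vx')\|\le|\kappa_{\max}|\,d_M(\vx,\vx')$. This is the faithful analogue of the cited theorem.
\item Keep the raw gradient. Since $\vu^\top G\vu=1$ forces $\|\vu\|\le1$, you get $\|h(\vx)-h(\vx')\|\le\sup\|\HH_\vx f_i\|\,d_M(\vx,\vx')\le\beta C\,d_M(\vx,\vx')$. The outer inequality survives, and your Hessian estimate for it is fine. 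The middle term must be dropped or weighted by $1+\|\nabla_\vx f_i\|^2$, as in your fallback.
\end{enumerate}
Both are changes of statement, not proofs of the one given.

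A smaller gap: connectedness of $U_\epsilon(\vx)\cap\mathcal{D}$ does not provide a minimizing geodesic of $M$. Nor does it guarantee one whose projection stays in $U_\epsilon(\vx)$, which is where $\kappa_{\max}$ and the Hessian bound are taken. You need suprema along the curve actually used, or a path inside the patch with its length in place of $d_M$.
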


This result formalizes the geometric interpretation that the sensitivity of an attribution map is governed by the local curvature of the graph manifold $M$. 
In particular, since each principal curvature satisfies
\begin{equation}
\kappa_i = \frac{\lambda_i}{\sqrt{1+\|\nabla_\vx z_i\|^{2}}}.
\end{equation}
the spectral radius of the input Hessian, $\|\HH_\vx\|_\sigma$, directly bounds the largest curvature $|\kappa_{\max}|$. 
Hence, smaller $\|\HH_\vx\|_\sigma$ implies lower curvature and consequently more stable attributions under infinitesimal input perturbations.

Expressing the second fundamental form in the eigenbasis of $\HH_\vx = Q \Lambda Q^\top$ makes this relationship explicit:
\begin{equation}
\II(\vv,\vv)=\frac{\vv^\top \HH_\vx \vv}{\sqrt{1+\|\nabla_\vx z_i\|^2}}
= \frac{\sum_{i=1}^{n} \lambda_{i} \tilde{\vv}i^2}{\sqrt{1+\|\nabla_\vx z_i\|^{2}}}.
\end{equation}
where $\tilde{\vv} = Q^\top \vv$.
Thus, each principal curvature $\kappa_j$ corresponds to a scaled eigenvalue of $\HH_\vx$, and the “worst-case” change in attribution is determined by the largest $|\kappa_j|$ which we focus on in this work.


\setlength{\bsize}{0.24\textwidth}

\begin{figure}[h]
\centering
    \includegraphics[width=0.4\columnwidth]{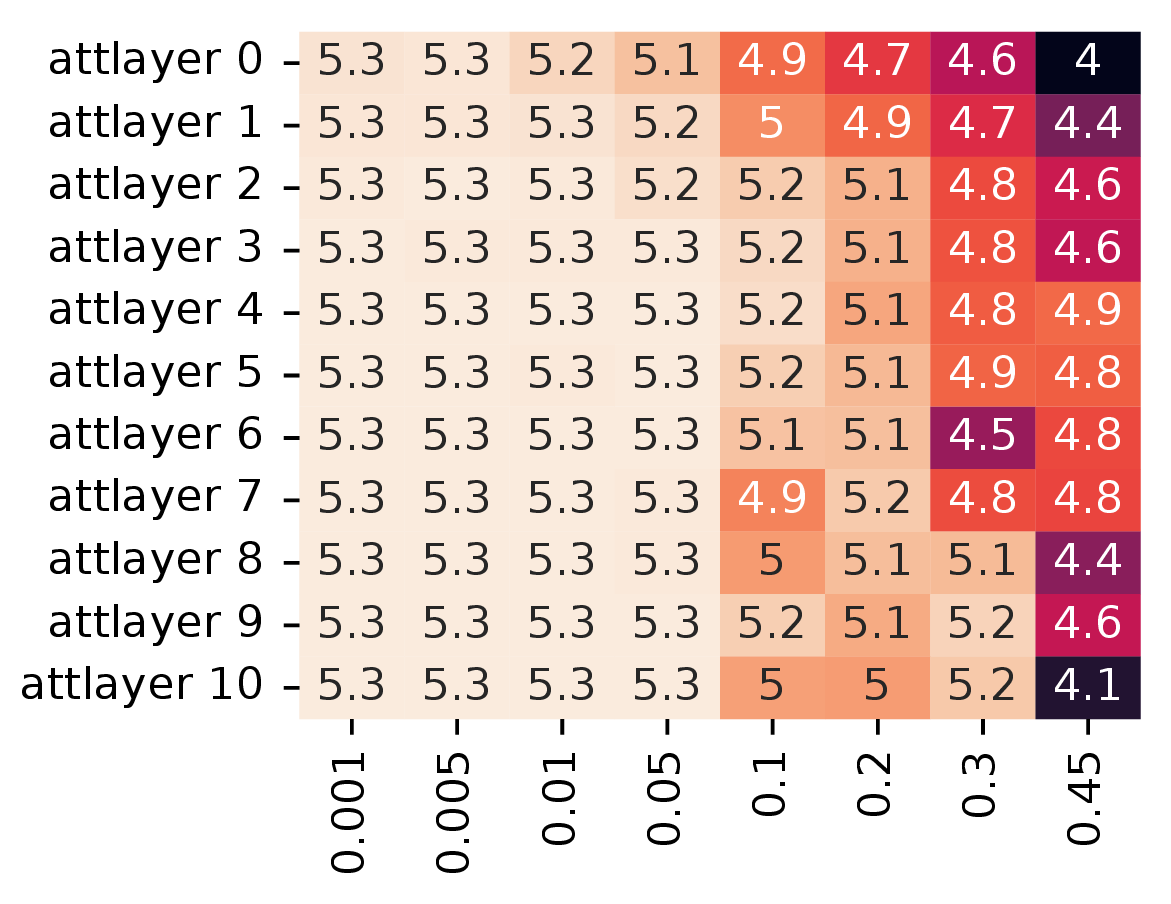}
    \caption{\textbf{Entropy of Attention Layers Decrease with Learning Rate.}
    This figure shows the mean attention entropy of ViT-B/16 over 500 samples of Imagenette dataset per configuration.
    We have attention layers on rows and different learning rates on the columns.
    As the training gets closer to the edge of stability we can see that attention layers exhibit a lower entropy which correspond to more localized and interpretable attention maps—a desirable property from an explainability standpoint. Note that excessively large learning rates can reduce training stability.
    A more detailed investigation of this trade-off is left for future work.}
\label{fig:vit-layerwise-entropies-vs-lr}
\end{figure}
\setlength{\bsize}{0.24\textwidth}

\begin{figure}[h]
\centering
    \includegraphics[width=.4\columnwidth]{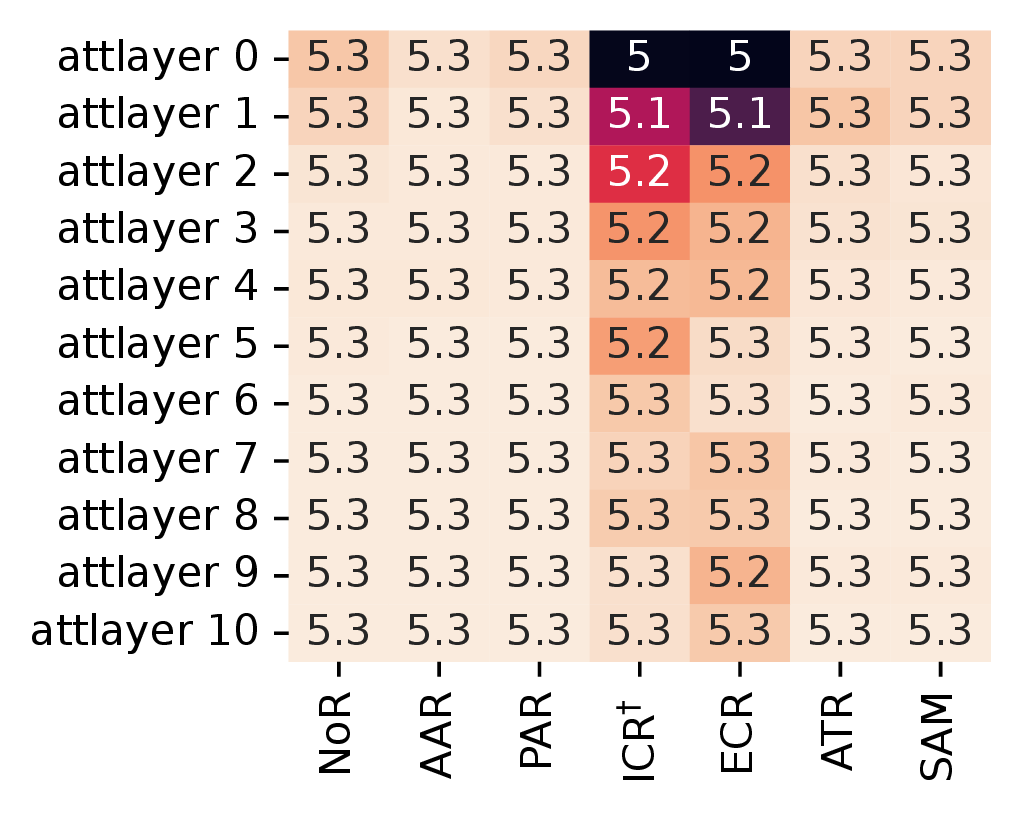}
    \caption{\textbf{Entropy of Attention Layers Across Different Robustness Methods.}
    Mean attention entropy of ViT-B/16 computed over 500 samples from the Imagenette dataset. The y-axis corresponds to attention layers, and the x-axis to different methods for improving the robustness of attribution. Lower entropy indicates more localized attention maps, which generally lead to clearer model interpretations. Explicit curvature regularization (ECR) and implicit curvature regularization with differentiated learning rate (ICR$^\dagger$) yield the most localized attention distributions, though ECR incurs substantially higher computational cost. Other methods exhibit comparable entropy values. See~\cref{tab:hparams,tab:summary-methods} for details on hyperparameters and loss functions.}
\label{fig:vit-layerwise-entropies-vs-methods}
\end{figure}
\setlength{\bsize}{0.3\textwidth}

\begin{figure*}[ht]
\centering
\begin{subfigure}[b]{\bsize}
    \centering
    \includegraphics[width=\textwidth]{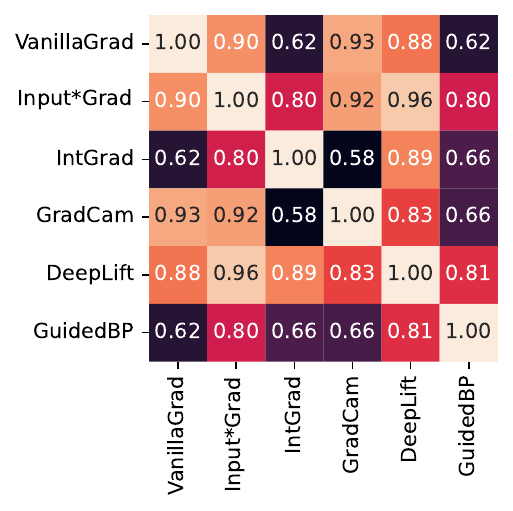}
    \caption{}
\end{subfigure}
\begin{subfigure}[b]{\bsize}
    \centering
    \includegraphics[width=\textwidth]{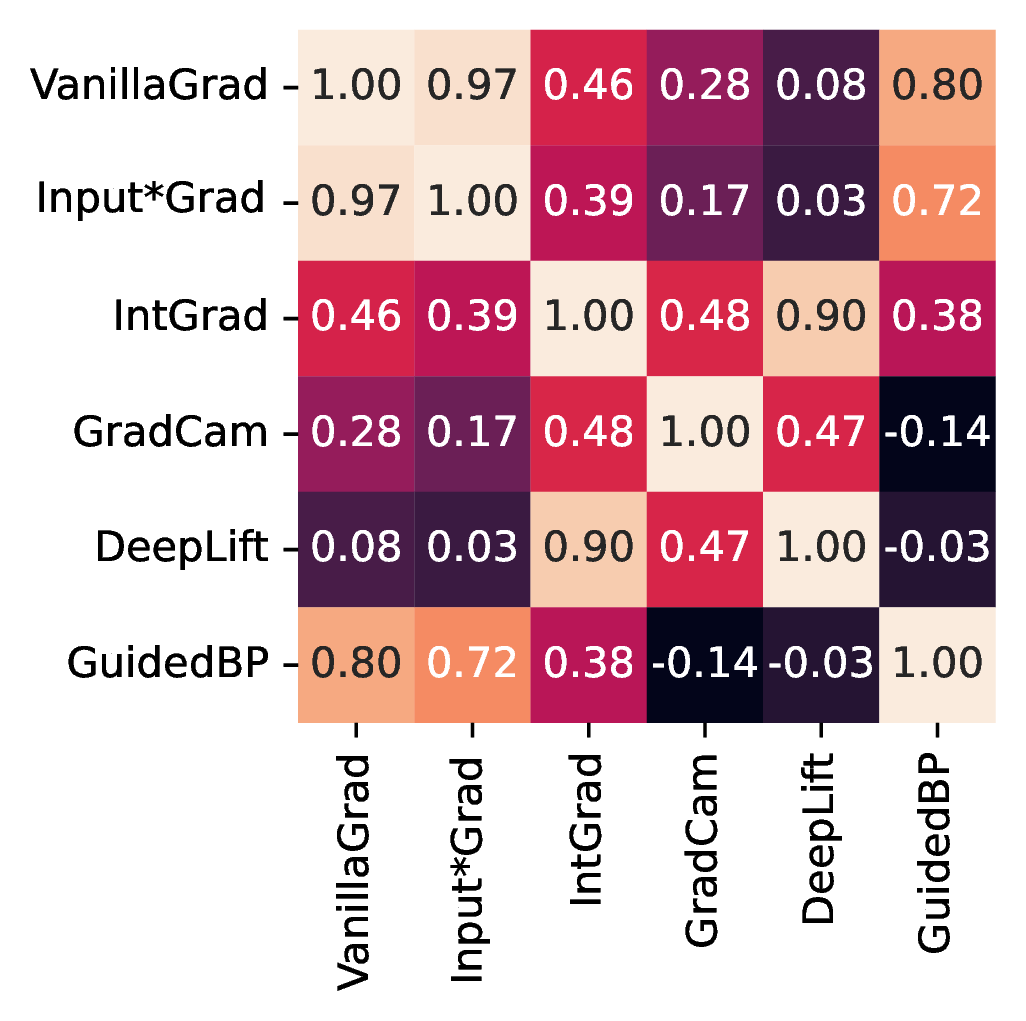}
    \caption{}
\end{subfigure}
\begin{subfigure}[b]{\bsize}
    \centering
    \includegraphics[width=\textwidth]{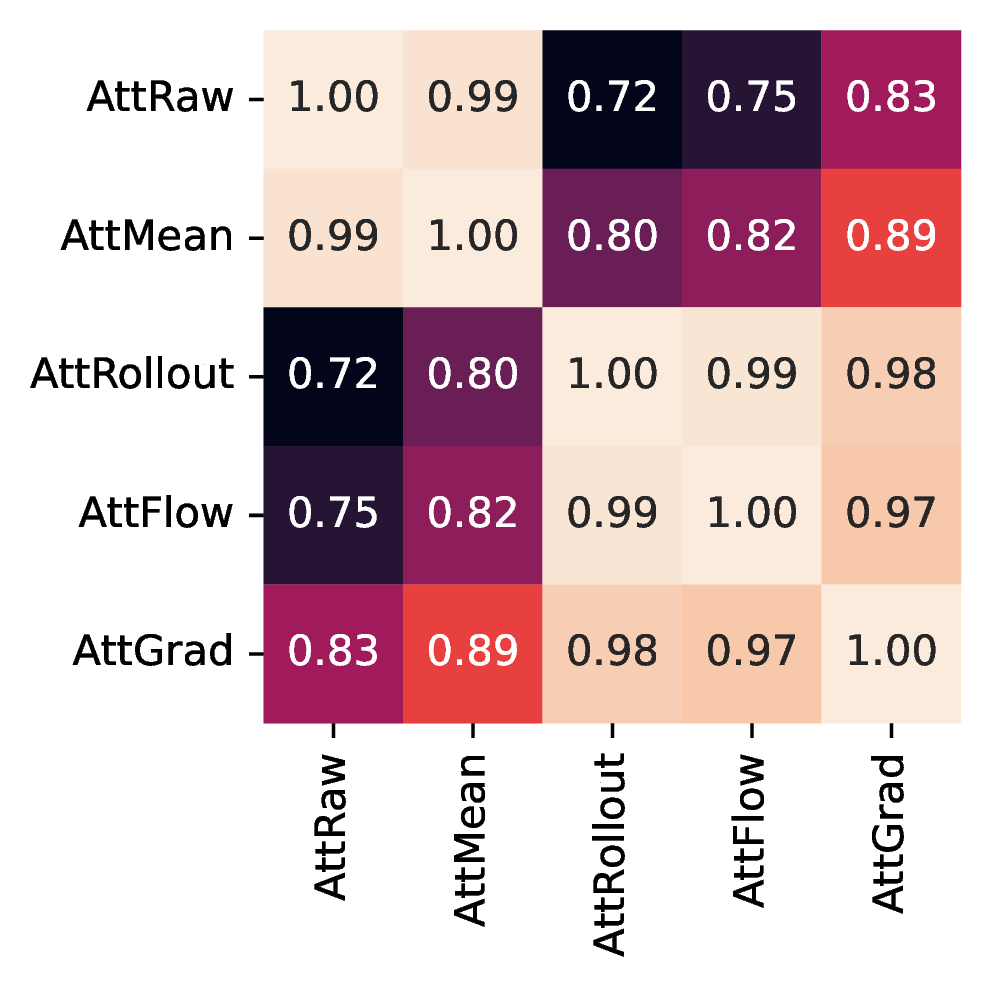}
    \caption{}
\end{subfigure}
\caption{\textbf{Correlation of Sensitivity Metric for Different Attribution Methods.}
This figure presents the values of $\notR$ computed for attribution methods measured for \textbf{(a)} gradient-based ResNet50, \textbf{(b)} gradient-based ViT-B/16, and \textbf{(b)} attention-based ViT-B/16 models trained on Imagenette. As observed, the robustness characteristics of these methods exhibit a high degree of correlation.}
\label{fig:correlated-not-R-imagenette}
\end{figure*}
\setlength{\bsize}{0.24\textwidth}

\begin{figure*}[]
\centering
    \includegraphics[width=.6\columnwidth]{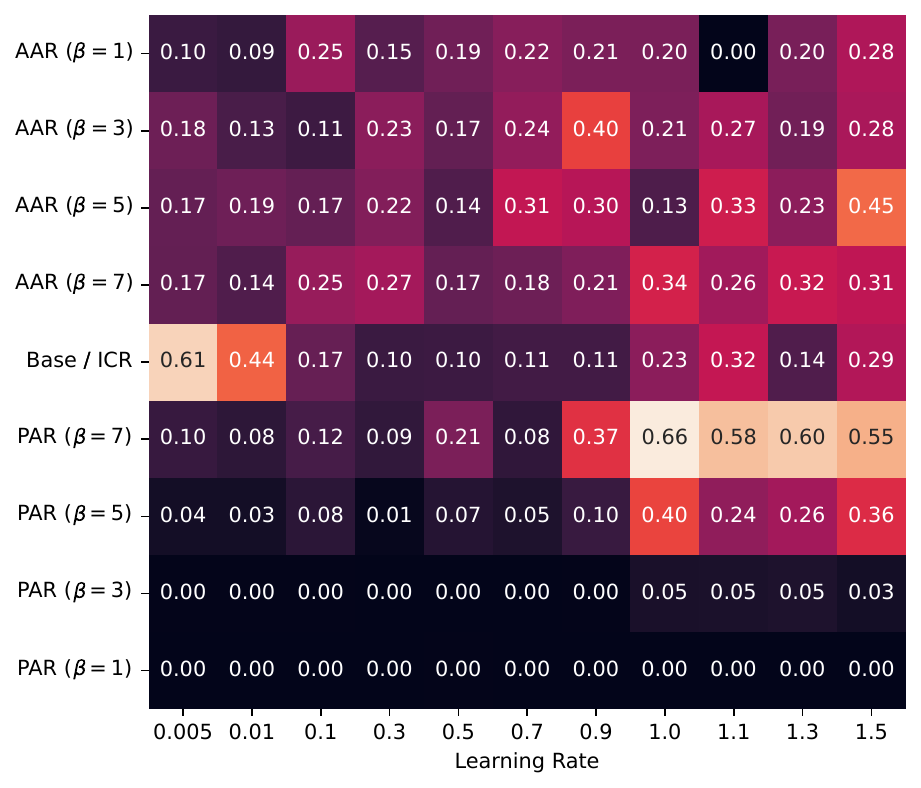}
    \caption{\textbf{A Closer Look at the Mean Robustness of PAR and AAR for ResNet50.}
    This figure compares post-hoc activation regularization (PAR) and ante-hoc activation regularization (AAR) in greater detail by varying the smoothing parameter $\beta$. We report $\notR$ for a ResNet50 model trained on Imagenette. The results indicate that retraining the network to improve task performance does not consistently enhance robustness. Notably, large learning rates correspond to implicit curvature regularization (ICR), which effectively reduces $\notR$, while small learning rates correspond to unregularized training (Base), exhibiting reduced adversarial robustness. The means are computed over 500 Imagenette samples using VanillaGrad attributions. Interestingly, for $\beta = 7$, robustness decreases—an effect that warrants further theoretical investigation which is left as future work. Learning rates were increased up to the point where SGD training diverged; overall, there appears to be an optimal range of learning rates, as excessively large values lead to higher mean robustness estimates.
    }
\label{fig:deeper-look-par-aar-resnet50}
\end{figure*}
\begin{figure*}
\centering
\includegraphics[width=0.5\textwidth]{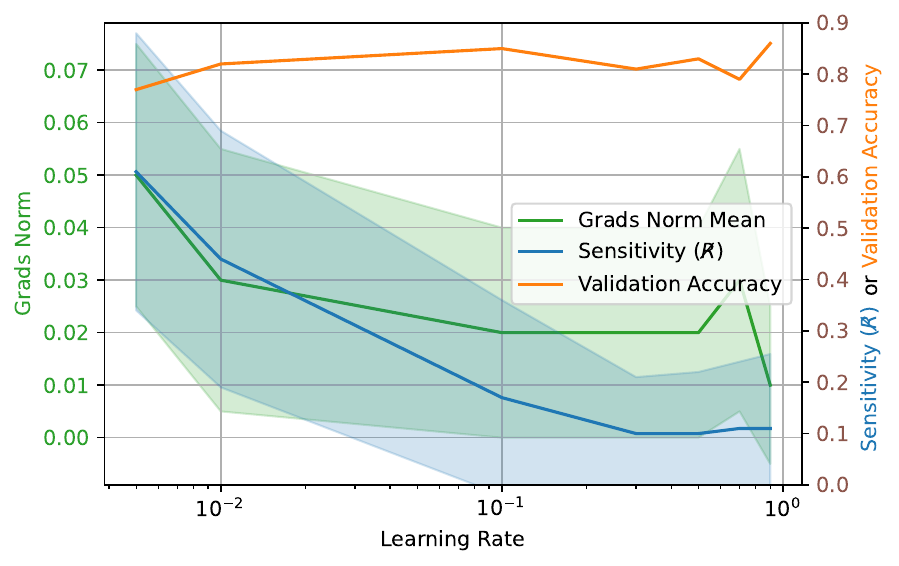}
\caption{\textbf{Accuracy–Sensitivity Trade-offs on Imagenette/ResNet50.} 
This figure reports validation accuracy and gradient-based attribution sensitivity (measured according to~\cref{eq:notR} on VanillaGrad using 500 samples per configuration) for ResNet50 trained on Imagenette. While, increasing the learning rate may not affect validation accuracy, it reduces the norm of gradients and sensitivity of gradient-based attribution. This phenomenon is the effect of implicit curvature regularization in parameter space in input space as shown in~\cref{theorem:ijr-is-good-for-input-curvature}.}
\label{fig:lr-curvature-val-acc-imagenette-resnet50}
\end{figure*}

\begin{figure*}
\centering
\includegraphics[width=0.5\textwidth]{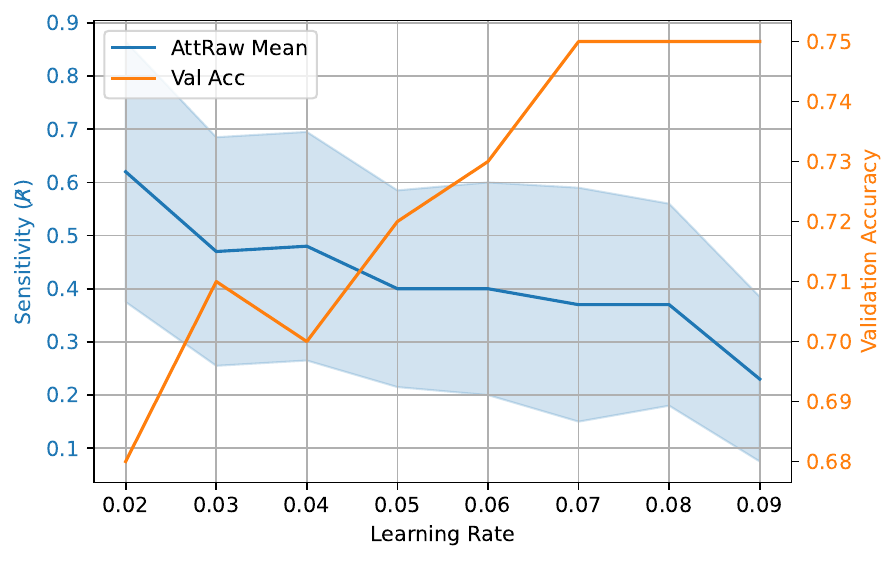}
\caption{\textbf{Accuracy–Sensitivity Trade-offs on CIFAR10/Unnormalized \Kernelized Attention.} 
This figure reports validation accuracy and attention-based attribution sensitivity (measured according to~\cref{eq:notR} on the raw attention of the final layer using 500 samples per configuration) for ViT-Tiny with unnormalized \kernelized attention and GELU activation trained on CIFAR10. Increasing the learning rate simultaneously improves validation accuracy and robustness. Note that as accuracy approaches saturation under softmax attention, higher curvature—and thus reduced attribution robustness—is expected but as we see in this plot the effect of ICR$^\dagger$ is stronger than the increase in curvature due to saturation in softmax and the combined effect is improving robustness by decreasing sensitivity.}
\label{fig:lr-val-acc-cifar10}
\end{figure*}
\setlength{\bsize}{0.24\textwidth}

\begin{figure*}[ht]
\centering
\begin{subfigure}[b]{\bsize}
    \centering
    \includegraphics[width=\textwidth]{content/imagenette-vitb16/png/att_stat_ent_vs_lr_layer_figsize=2.5_0.png}
    \caption{}
\end{subfigure}
\begin{subfigure}[b]{\bsize}
    \centering
    \includegraphics[width=\textwidth]{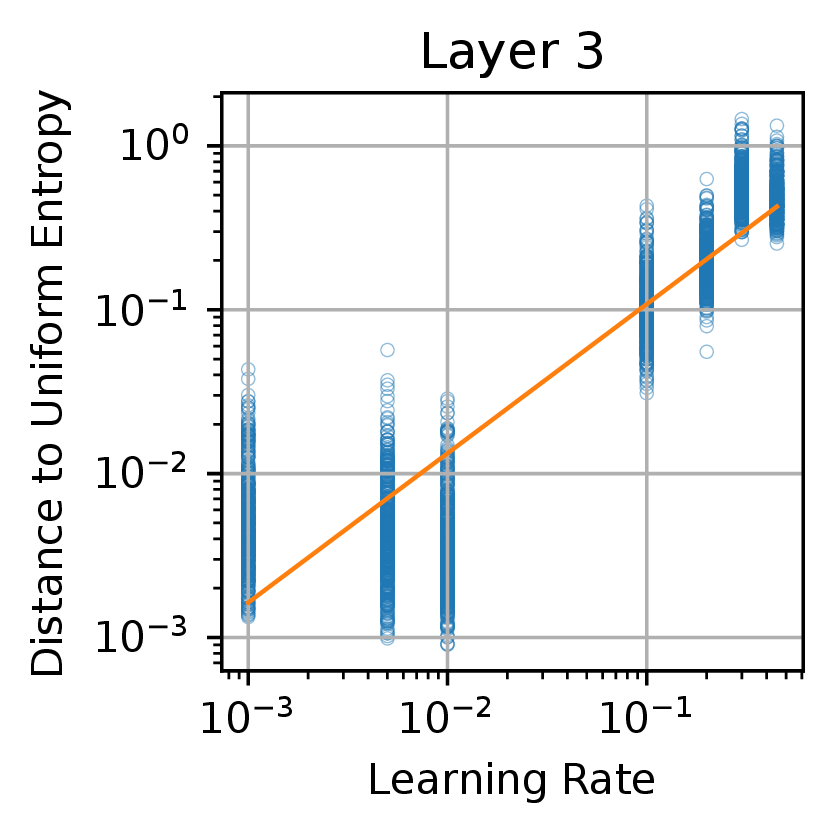}
    \caption{}
\end{subfigure}
\begin{subfigure}[b]{\bsize}
    \centering
    \includegraphics[width=\textwidth]{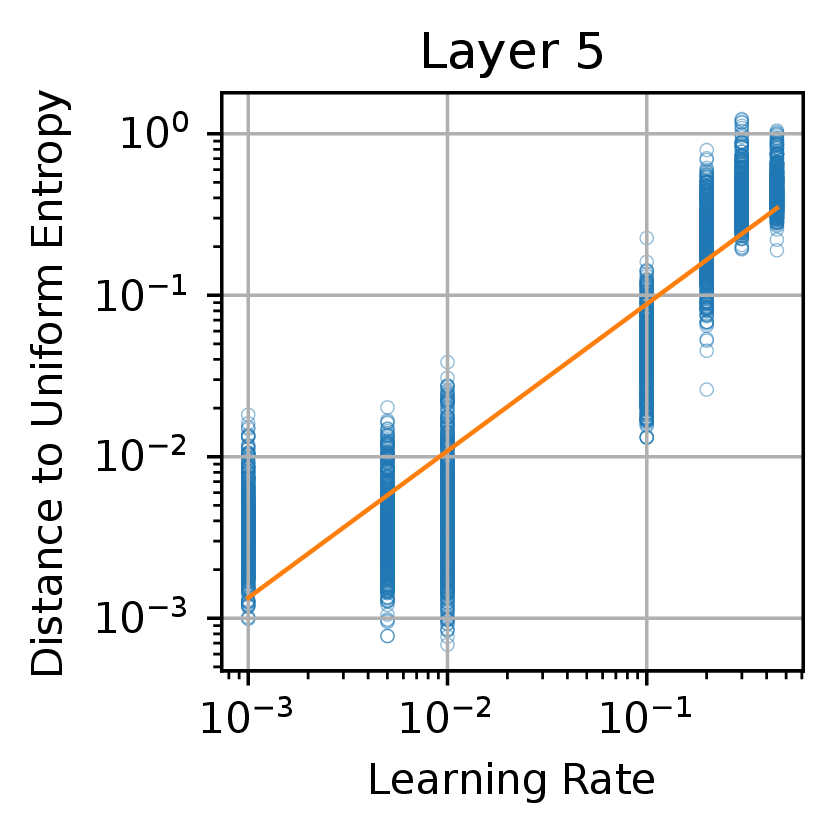}
    \caption{}
\end{subfigure}
\begin{subfigure}[b]{\bsize}
    \centering
    \includegraphics[width=\textwidth]{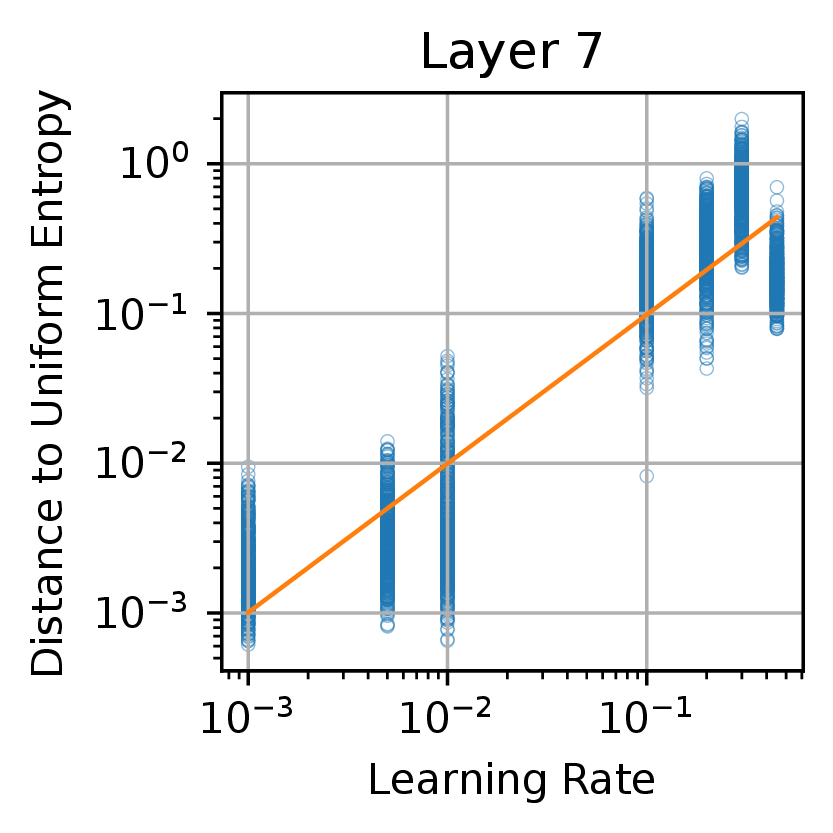}
    \caption{}
\end{subfigure}
\begin{subfigure}[b]{\bsize}
    \centering
    \includegraphics[width=\textwidth]{content/imagenette-vitb16/png/att_stat_ent_vs_notR_figsize=2.5_layer_0.png}
    \caption{}
\end{subfigure}
\begin{subfigure}[b]{\bsize}
    \centering
    \includegraphics[width=\textwidth]{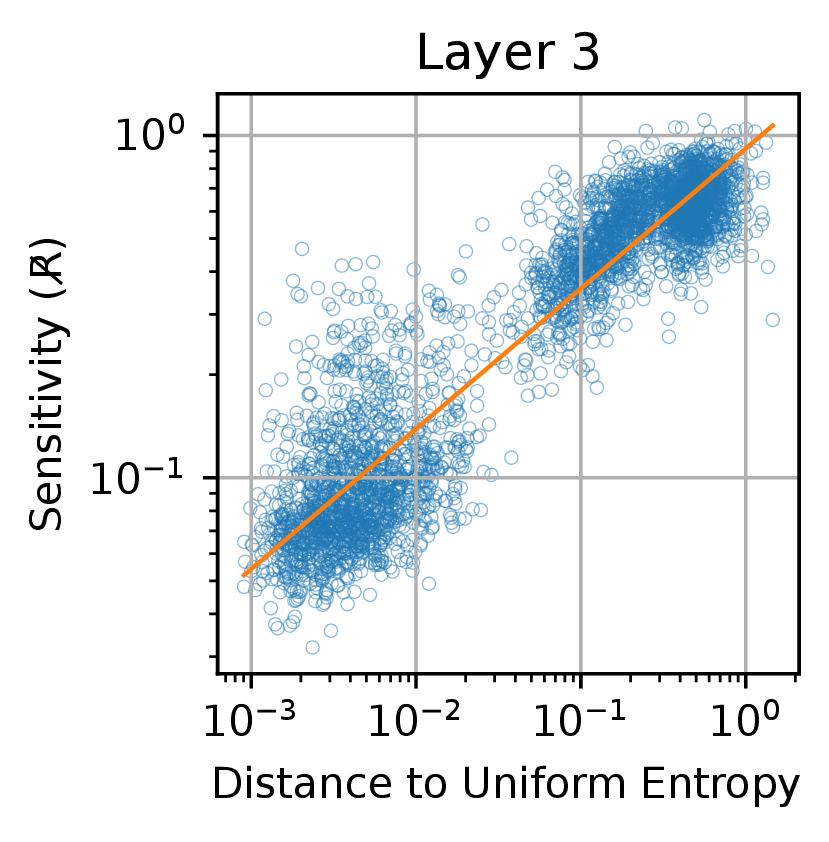}
    \caption{}
\end{subfigure}
\begin{subfigure}[b]{\bsize}
    \centering
    \includegraphics[width=\textwidth]{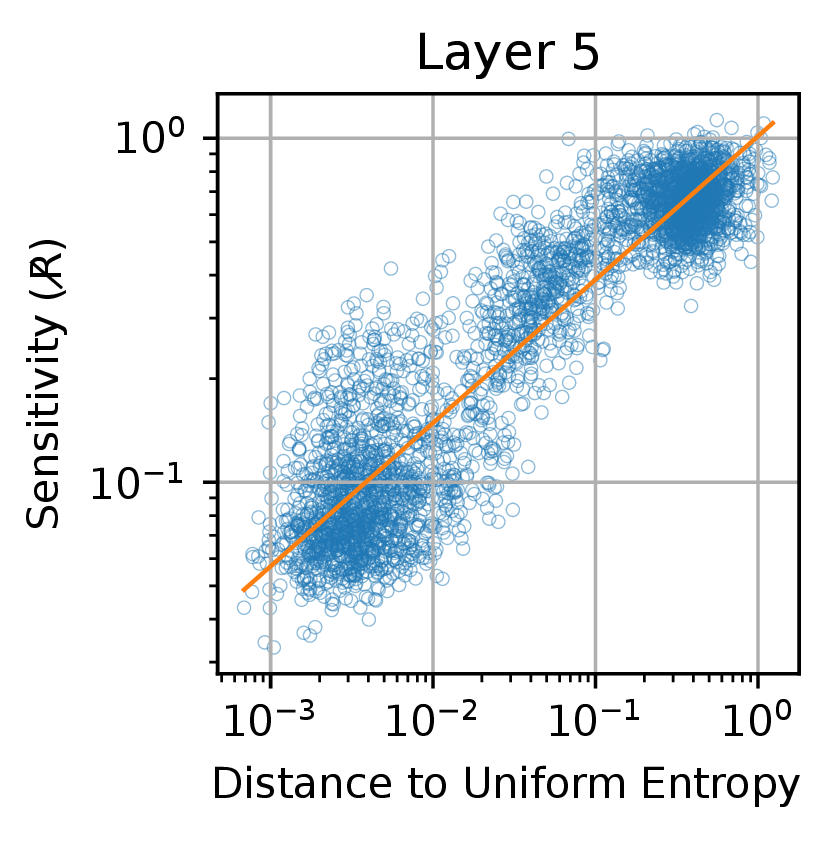}
    \caption{}
\end{subfigure}
\begin{subfigure}[b]{\bsize}
    \centering
    \includegraphics[width=\textwidth]{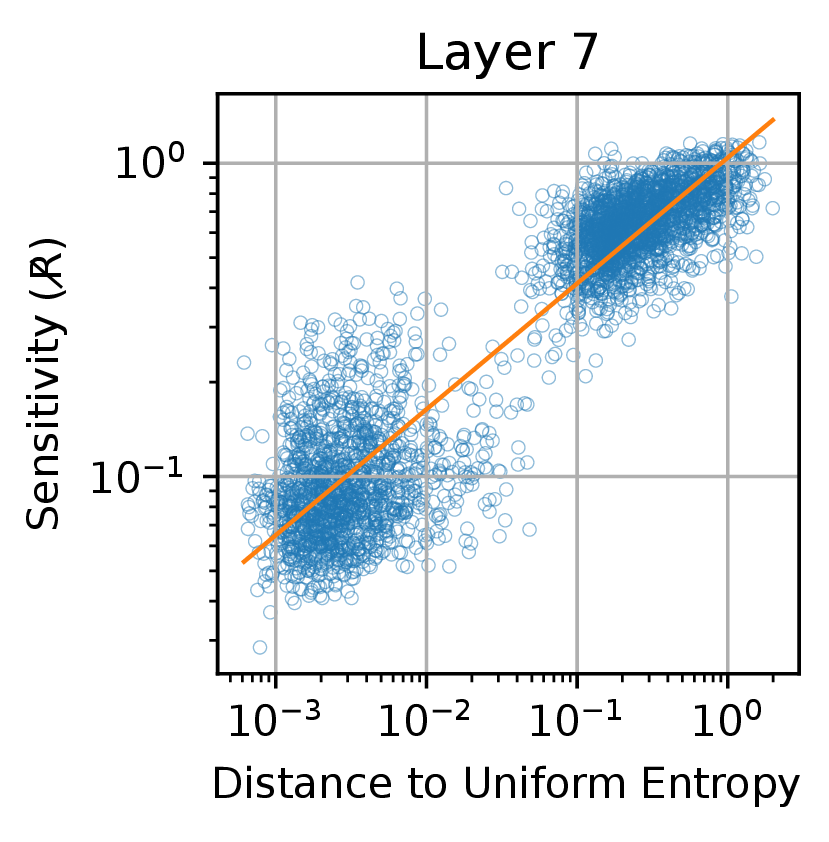}
    \caption{}
\end{subfigure}
\caption{
\textbf{Layerwise Attention Entropy and Sensitivity in ViTs with Softmax-Attention.}
This figure shows the relationship between the deviation of attention entropy from that of the uniform distribution (Distance to Uniform Entropy), learning rate (top row), and robustness (bottom row) for a standard ViT-B/16 with softmax attention, trained on Imagenette. We are visualizing 500 samples per learning rate in this figure. Aligned with \cref{fig:vit-layerwise-entropies-vs-lr}, higher learning rates yield larger distance to uniform entropy (top row). As we show theoretically in~\cref{lemma:attention-entropy-notR}, the bottom row illustrates that greater deviation from uniform attention corresponds to higher sensitivity $\notR$. The trend is very similar for normalized \kernelized attention, with different activation functions, therefore, we have not included normalized visualizations to avoid redundancy. Since~\cref{lemma:attention-entropy-notR} pertains specifically to normalized attention, we additionally evaluate unnormalized \kernelized attention variants under the same setup to examine how their robustness behavior varies with learning rate (see \cref{fig:lin-att-gelu-ent-vs-lr-appendix,fig:lin-att-relu-ent-vs-lr-appendix,fig:lin-att-elup1-ent-vs-lr-appendix,fig:lin-att-norm2-ent-vs-lr-appendix,fig:lin-att-softplus-ent-vs-lr-appendix}).}
\label{fig:att-ent-vs-lr-appendix}
\end{figure*}


\begin{figure*}[ht]
\centering
\begin{subfigure}[b]{\bsize}
    \centering
    \includegraphics[width=\textwidth]{content/imagenette-vitb16/png/att_stat_ent_vs_lr_layer_0_act=GELU_GELU_lllr=0.1_figsize=2.5.png}
    \caption{}
\end{subfigure}
\begin{subfigure}[b]{\bsize}
    \centering
    \includegraphics[width=\textwidth]{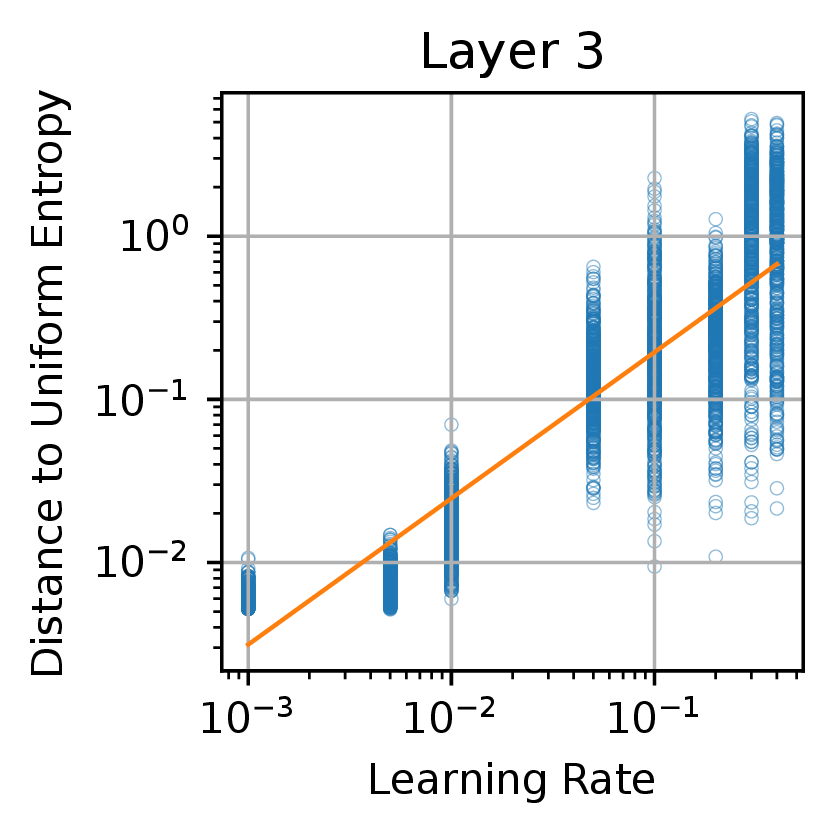}
    \caption{}
\end{subfigure}
\begin{subfigure}[b]{\bsize}
    \centering
    \includegraphics[width=\textwidth]{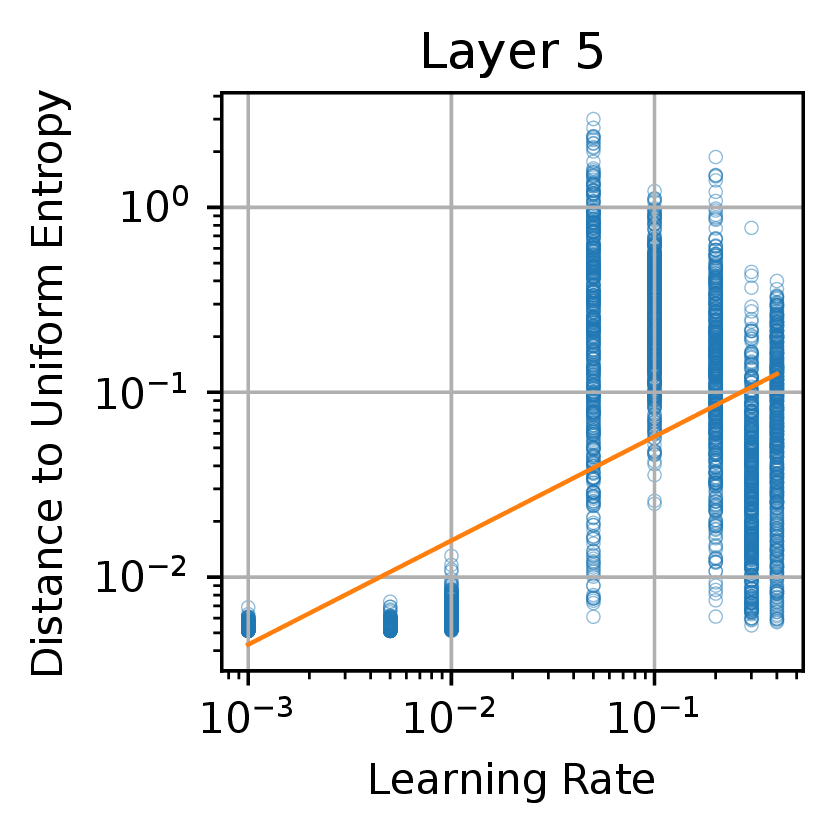}
    \caption{}
\end{subfigure}
\begin{subfigure}[b]{\bsize}
    \centering
    \includegraphics[width=\textwidth]{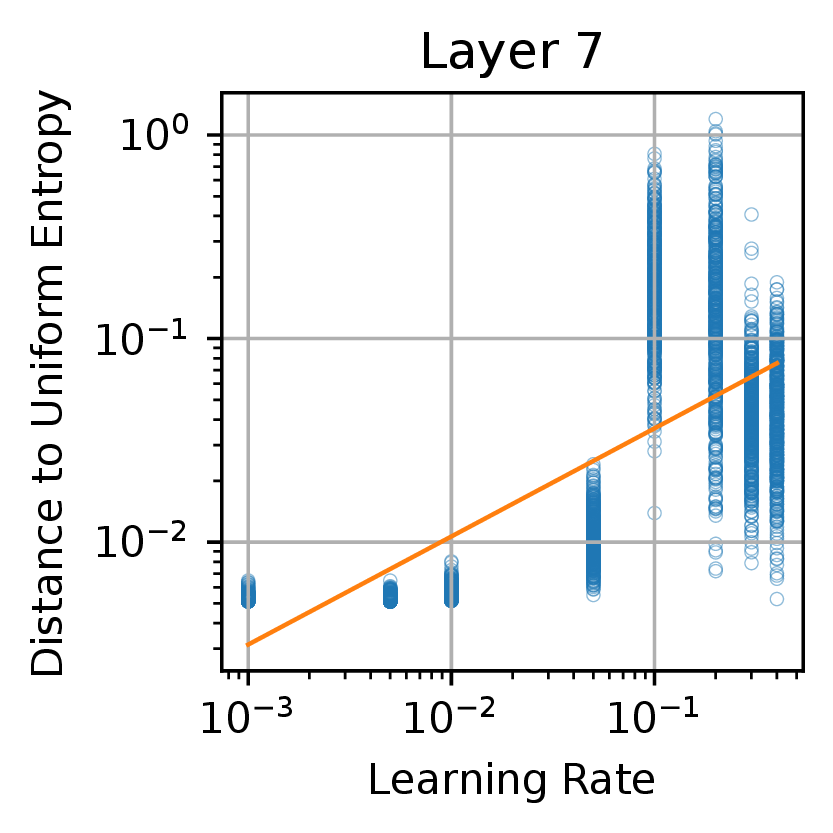}
    \caption{}
\end{subfigure}
\begin{subfigure}[b]{\bsize}
    \centering
    \includegraphics[width=\textwidth]{content/imagenette-vitb16/png/att_stat_ent_vs_notR_layer_0_act=GELU_GELU_lllr=0.1_figsize=2.5.png}
    \caption{}
\end{subfigure}
\begin{subfigure}[b]{\bsize}
    \centering
    \includegraphics[width=\textwidth]{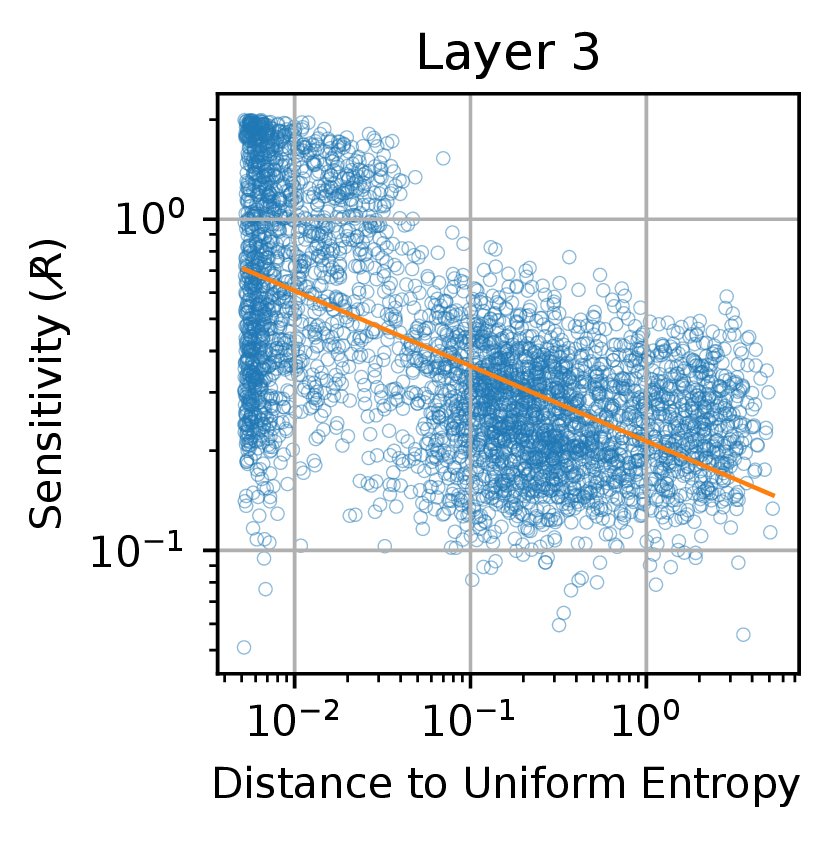}
    \caption{}
\end{subfigure}
\begin{subfigure}[b]{\bsize}
    \centering
    \includegraphics[width=\textwidth]{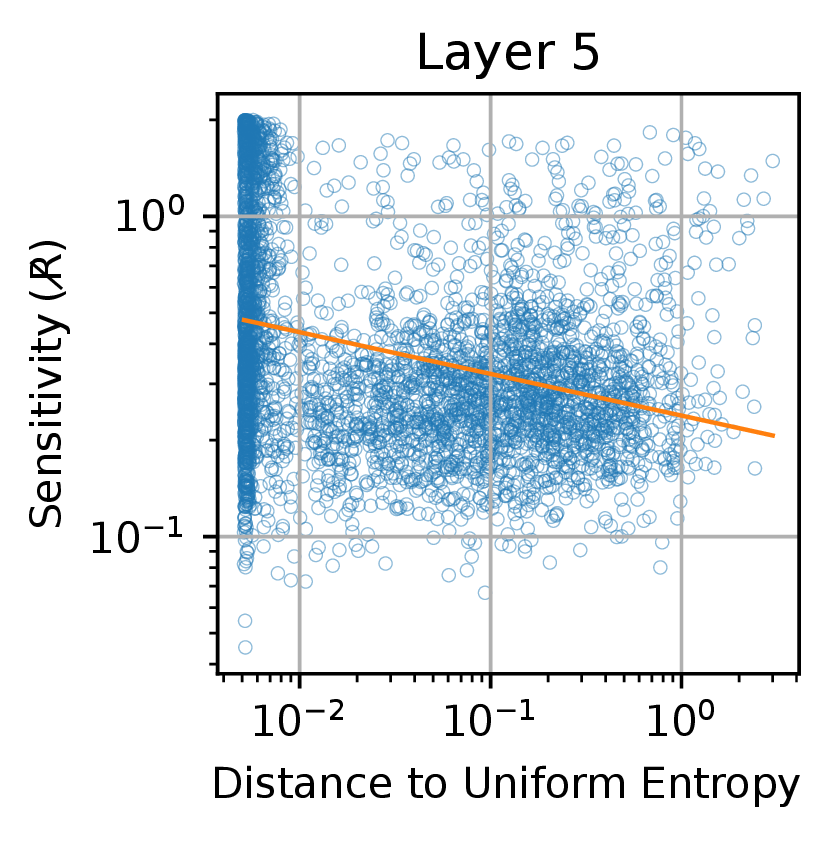}
    \caption{}
\end{subfigure}
\begin{subfigure}[b]{\bsize}
    \centering
    \includegraphics[width=\textwidth]{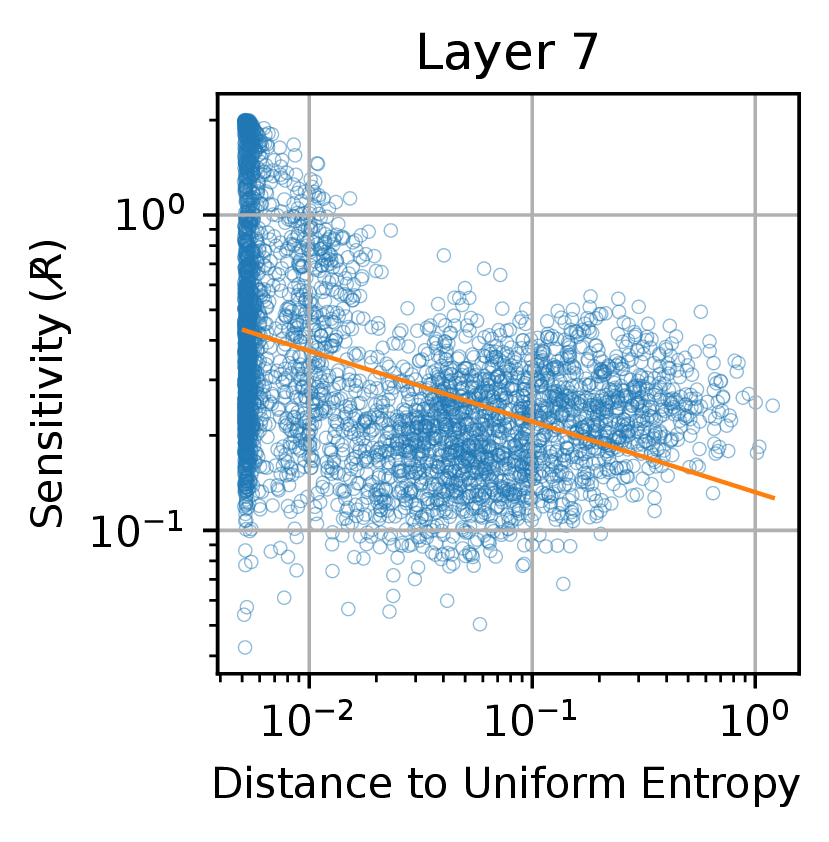}
    \caption{}
\end{subfigure}
\caption{
\textbf{Layerwise Attention Entropy and Sensitivity in ViTs with Unnormalized \Kernelized Attention (GELU Activation).}
This figure shows the relationship between the deviation of attention entropy from that of the uniform distribution (Distance to Uniform Entropy), learning rate (top row), and robustness (bottom row) for a linear ViT-B/16 with GELU activation, trained on Imagenette. We are visualizing 500 samples per learning rate in this figure. Similar to~\cref{fig:att-ent-vs-lr-appendix}, higher learning rates yield larger distance to uniform entropy (top row). However, different from~\cref{fig:att-ent-vs-lr-appendix}, in the bottom row we observe that greater deviation from uniform attention corresponds to a \textit{lower} sensitivity $\notR$. 
To investigate the sensitivity of this observation \wrt the choice of activation function, we also repeat this experiment for other activation functions
(see \cref{fig:lin-att-relu-ent-vs-lr-appendix,fig:lin-att-elup1-ent-vs-lr-appendix,fig:lin-att-softplus-ent-vs-lr-appendix,fig:lin-att-norm2-ent-vs-lr-appendix} for ReLU, SoftPlus, ELU+1 and cosine similarity).
Our ablation study on the activation functions for \kernelized attention reveals that the relationship between attention-based attribution robustness and learning rate (the second row) is sensitive to the choice of activation function with a desirable outcome happening for GELU.
We conjecture that such sensitivity is caused by the spectral biases of the implicit kernel induced by each activation function~\cite{mehrpanah_complexity-faithfulness_2025}, a deeper analysis of the causes of this sensitivity is left as future work.}
\label{fig:lin-att-gelu-ent-vs-lr-appendix}
\end{figure*}


\begin{figure*}[ht]
\centering
\begin{subfigure}[b]{\bsize}
    \centering
    \includegraphics[width=\textwidth]{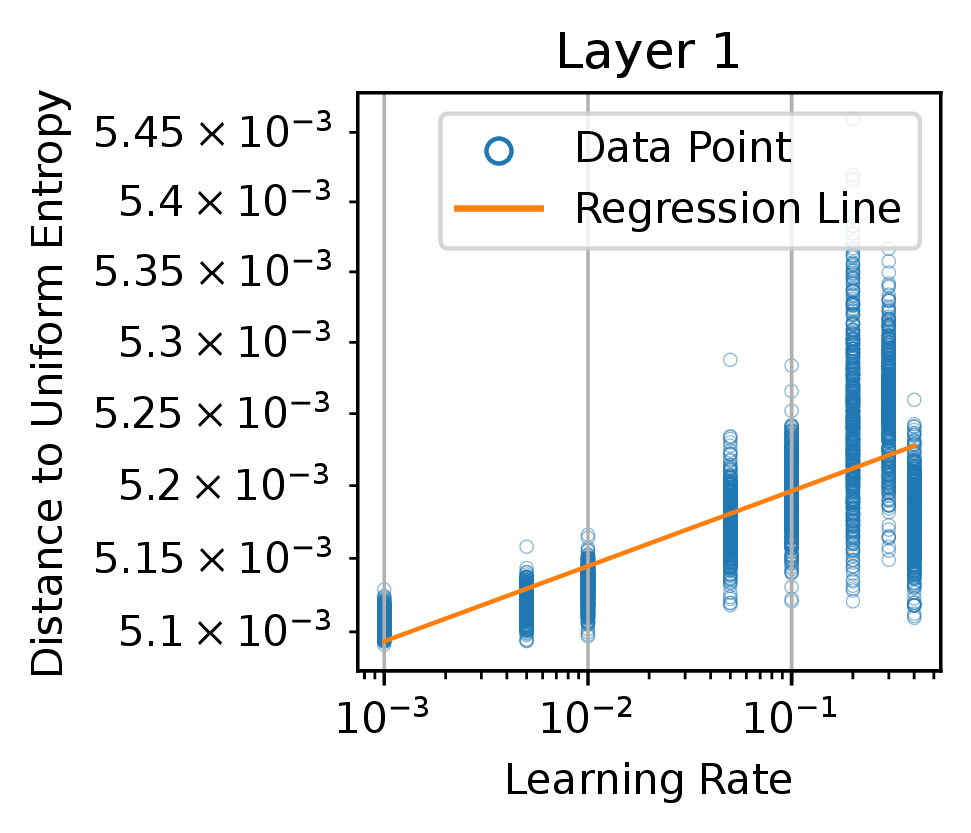}
    \caption{}
\end{subfigure}
\begin{subfigure}[b]{\bsize}
    \centering
    \includegraphics[width=\textwidth]{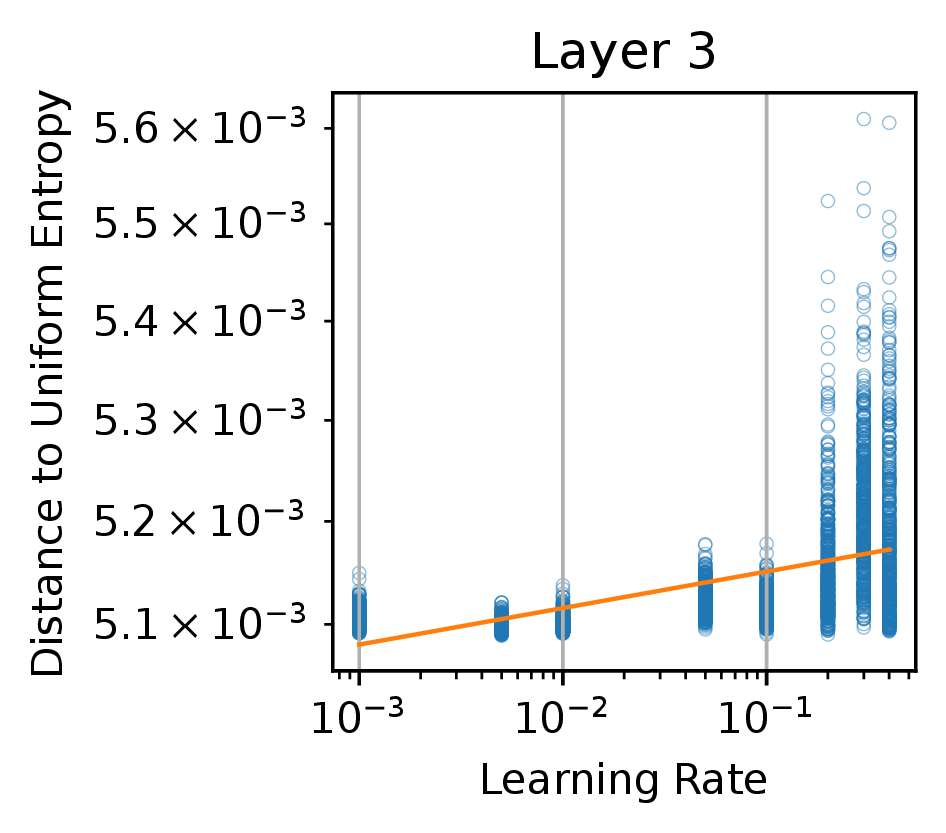}
    \caption{}
\end{subfigure}
\begin{subfigure}[b]{\bsize}
    \centering
    \includegraphics[width=\textwidth]{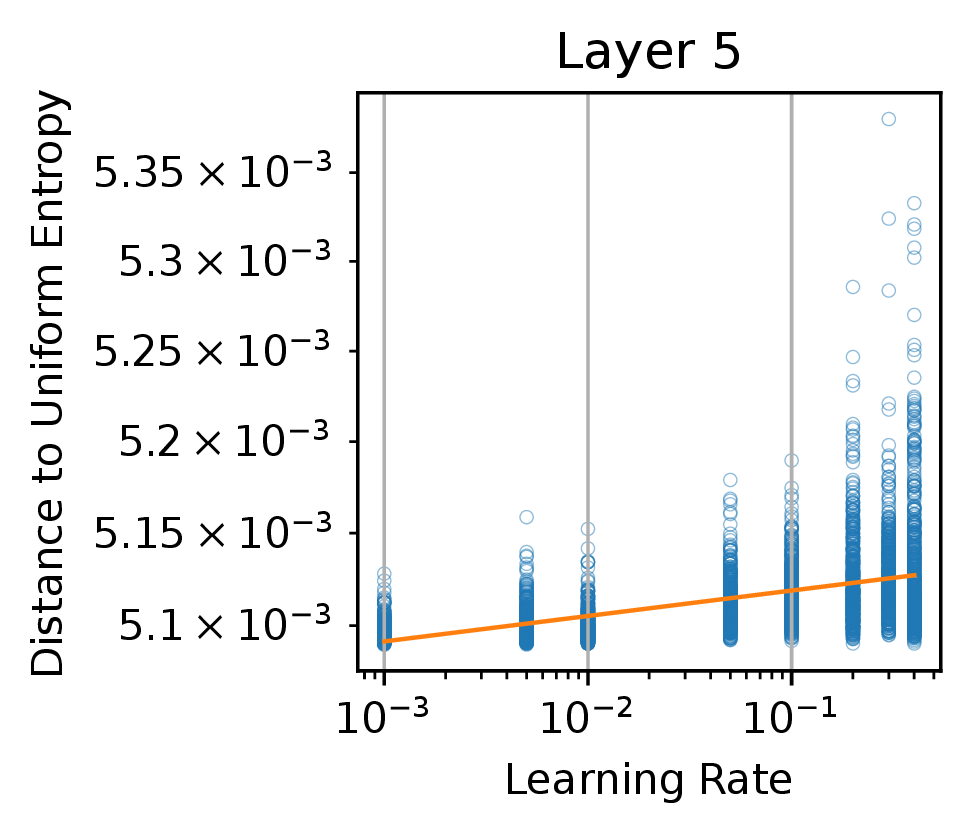}
    \caption{}
\end{subfigure}
\begin{subfigure}[b]{\bsize}
    \centering
    \includegraphics[width=\textwidth]{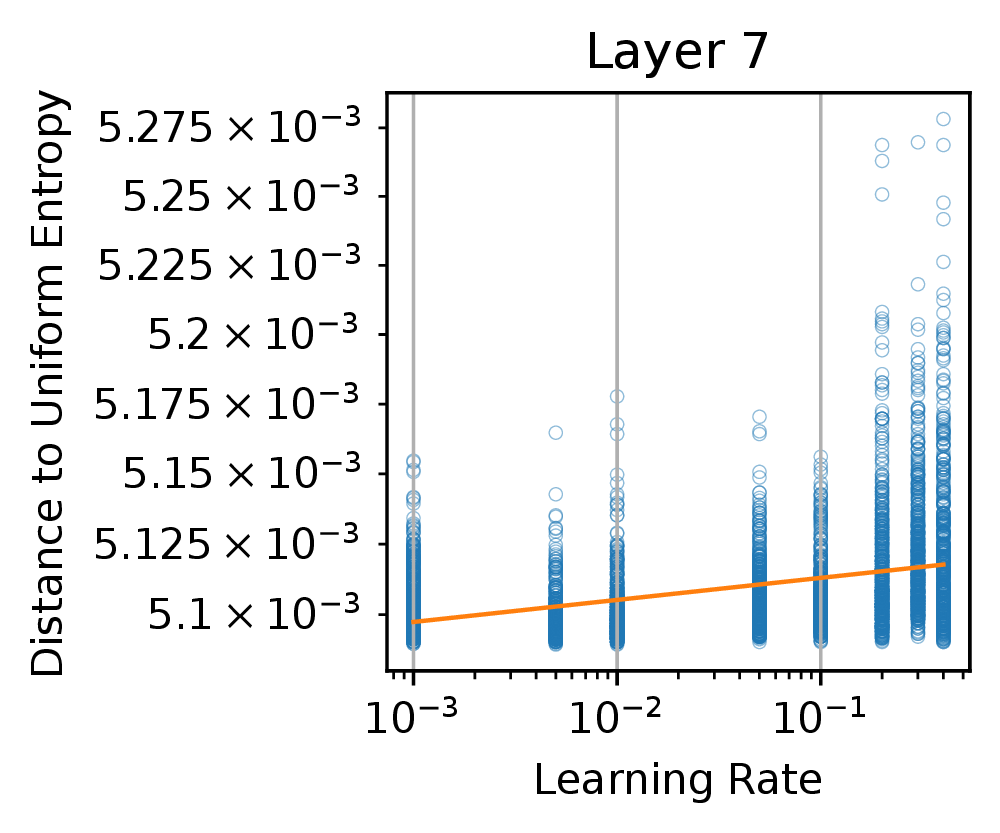}
    \caption{}
\end{subfigure}
\begin{subfigure}[b]{\bsize}
    \centering
    \includegraphics[width=\textwidth]{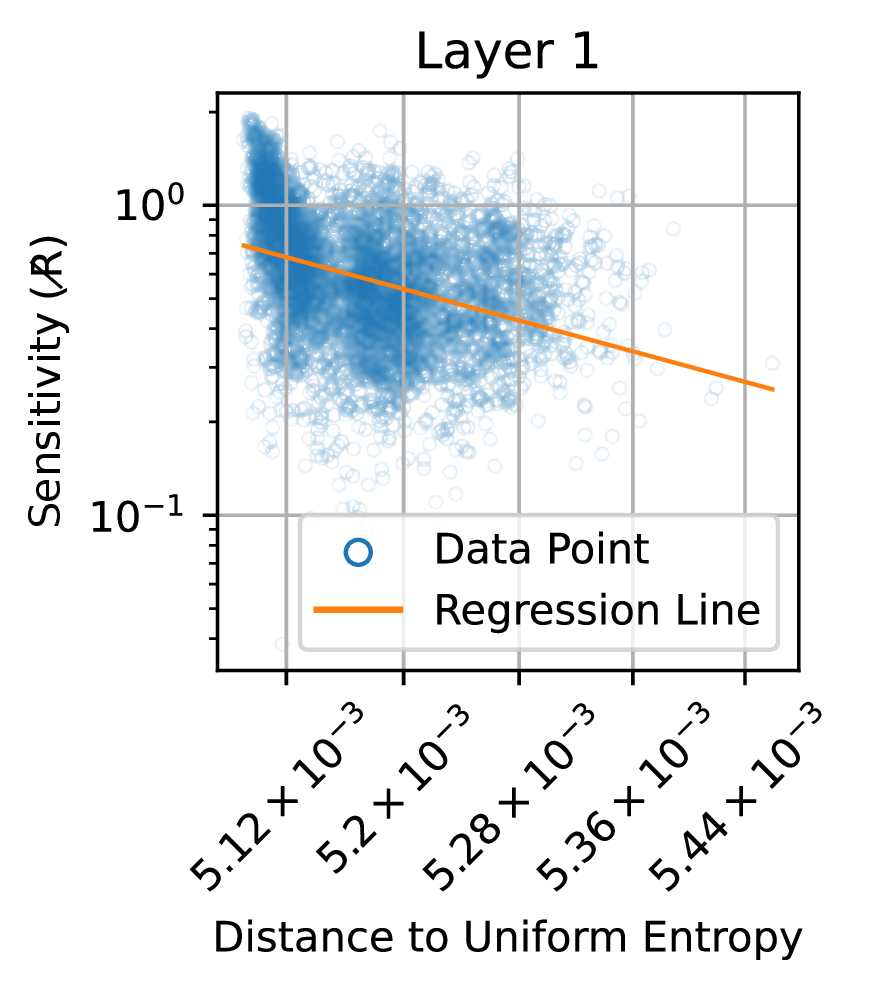}
    \caption{}
\end{subfigure}
\begin{subfigure}[b]{\bsize}
    \centering
    \includegraphics[width=1.01\textwidth]{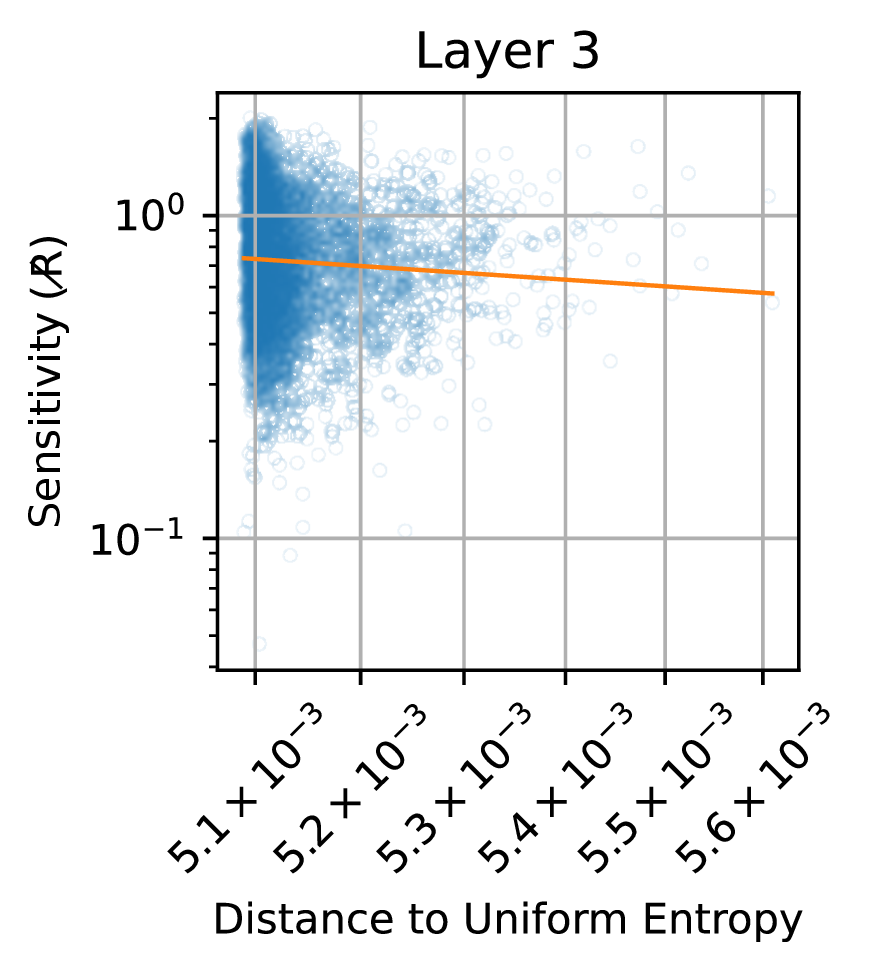}
    \caption{}
\end{subfigure}
\begin{subfigure}[b]{\bsize}
    \centering
    \includegraphics[width=0.91\textwidth]{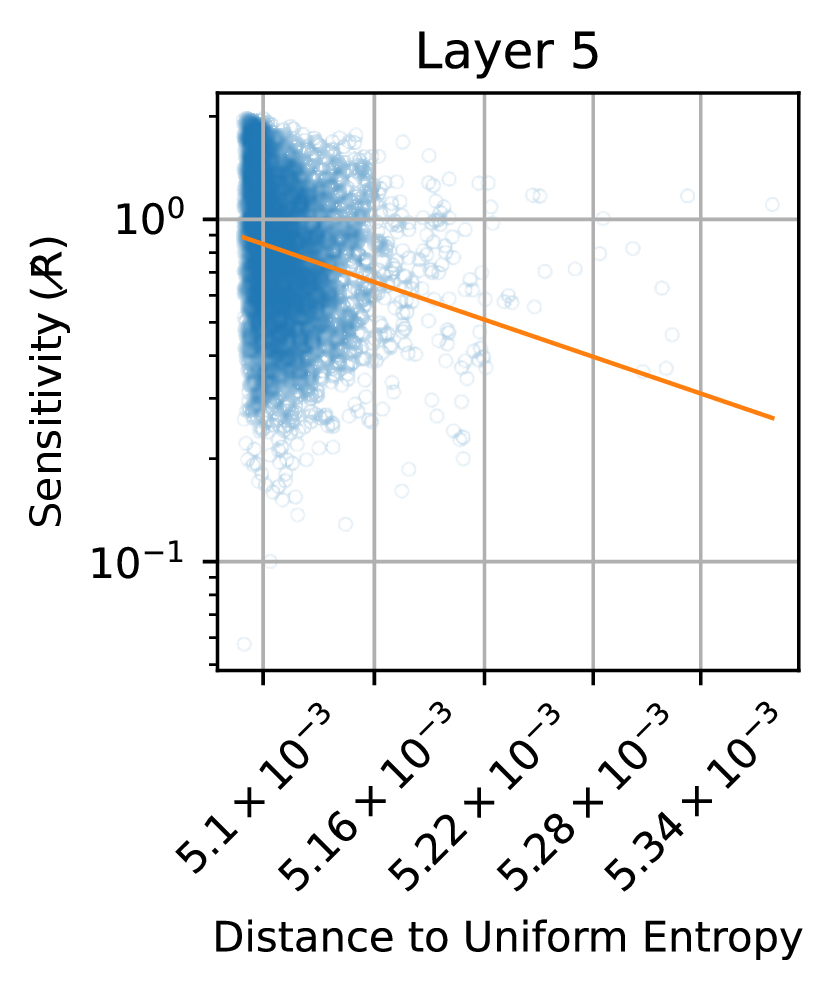}
    \caption{}
\end{subfigure}
\begin{subfigure}[b]{\bsize}
    \centering
    \includegraphics[width=\textwidth]{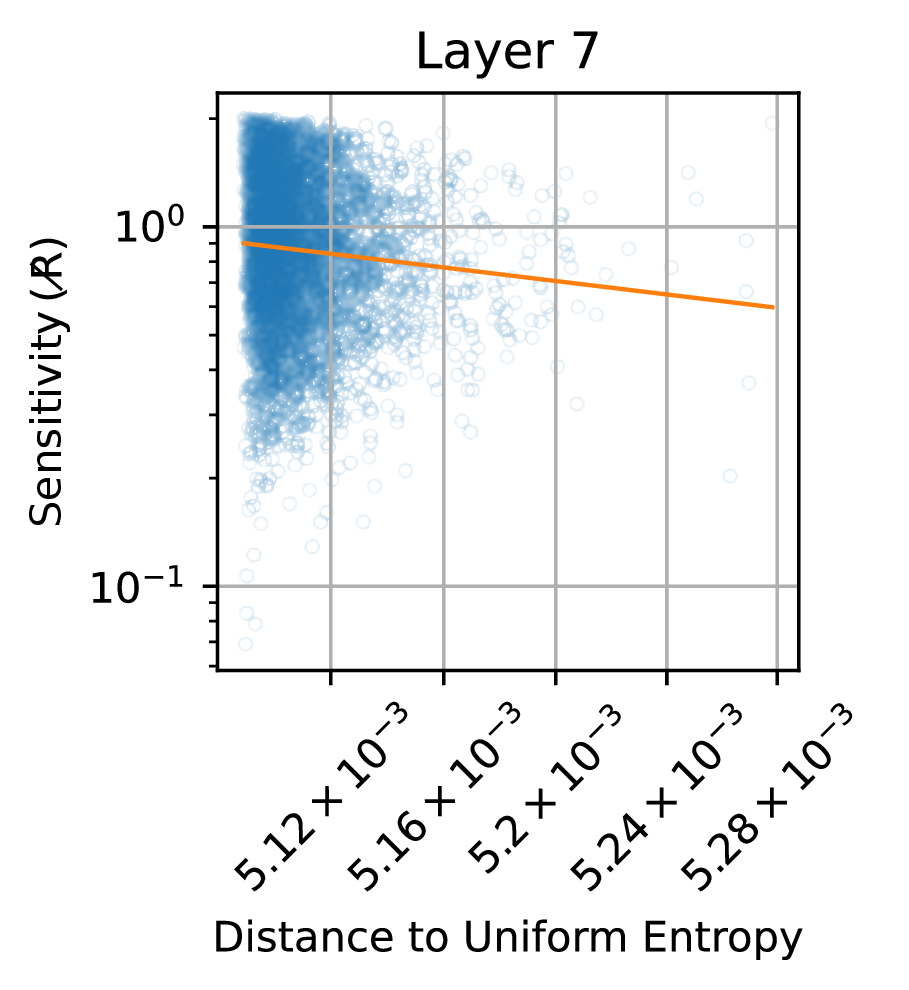}
    \caption{}
\end{subfigure}
\caption{
\textbf{Ablation; Layerwise Attention Entropy and Sensitivity in ViTs with Unnormalized \Kernelized-Attention (Cosine Similarity).}
This figure shows the relationship between the deviation of attention entropy from that of the uniform distribution (Distance to Uniform Entropy), learning rate (top row), and robustness (bottom row) for a unnormalized \kernelized attention version of ViT-B/16 with $L_2$ norm as activation, trained on Imagenette. We are visualizing 500 samples per learning rate in this figure. Using $L_2$ norm for Q and K leads to the cosine similarity of Q and K matrices, which is similar to~\cite{koohpayegani_sima_2024}--but note that the attention map will still be unnormalized. Similar to~\cref{fig:att-ent-vs-lr-appendix}, higher learning rates yield larger distance to uniform entropy (top row) at a smaller scale, meaning that the attention maps are generally very close to the uniform distribution. In the bottom row we observe that greater deviation from uniform attention corresponds to \textit{higher} sensitivity $\notR$. 
Comparing this with~\cref{fig:lin-att-gelu-ent-vs-lr-appendix}, we realize that the robustness of attention-based attribution for ViTs with \kernelized attention modules \textit{is sensitive to the choice of activation function}.}
\label{fig:lin-att-norm2-ent-vs-lr-appendix}
\end{figure*}


\begin{figure*}[ht]
\centering
\begin{subfigure}[b]{\bsize}
    \centering
    \includegraphics[width=\textwidth]{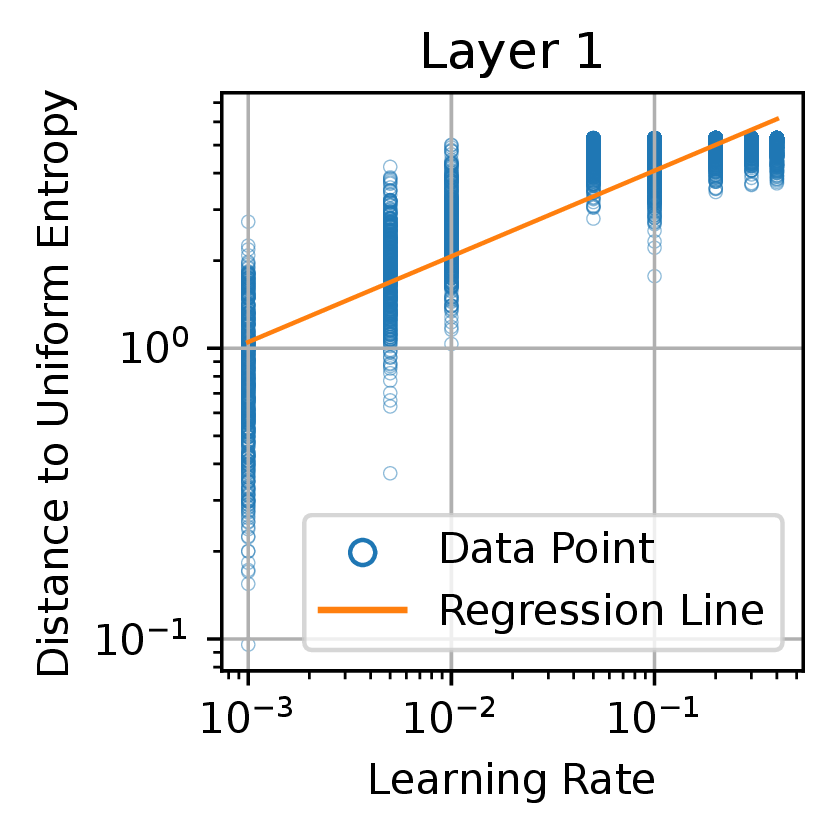}
    \caption{}
\end{subfigure}
\begin{subfigure}[b]{\bsize}
    \centering
    \includegraphics[width=\textwidth]{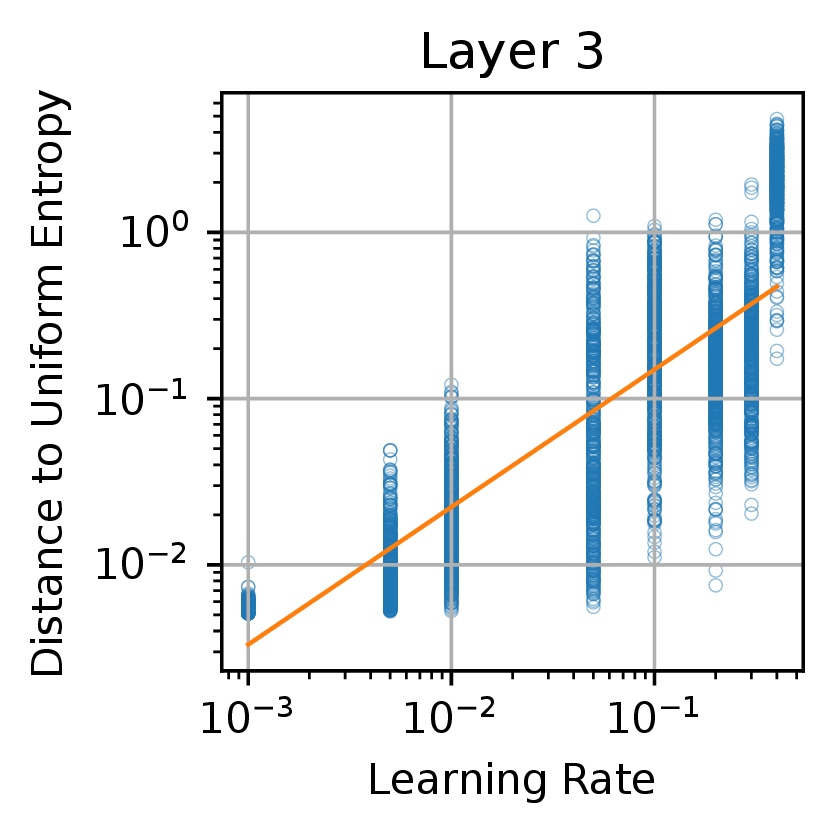}
    \caption{}
\end{subfigure}
\begin{subfigure}[b]{\bsize}
    \centering
    \includegraphics[width=\textwidth]{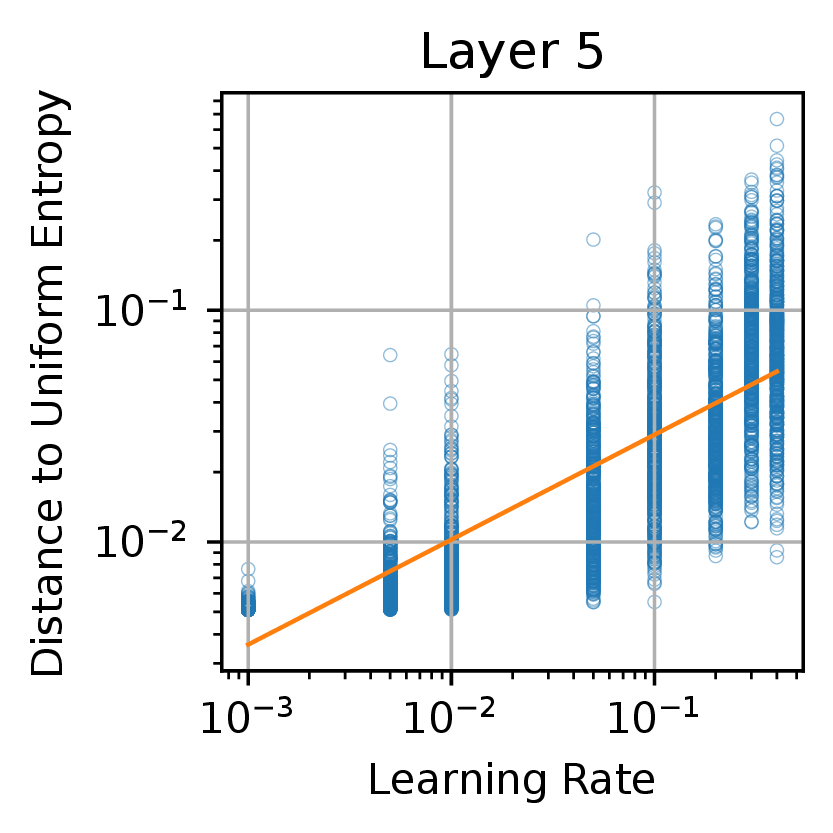}
    \caption{}
\end{subfigure}
\begin{subfigure}[b]{\bsize}
    \centering
    \includegraphics[width=\textwidth]{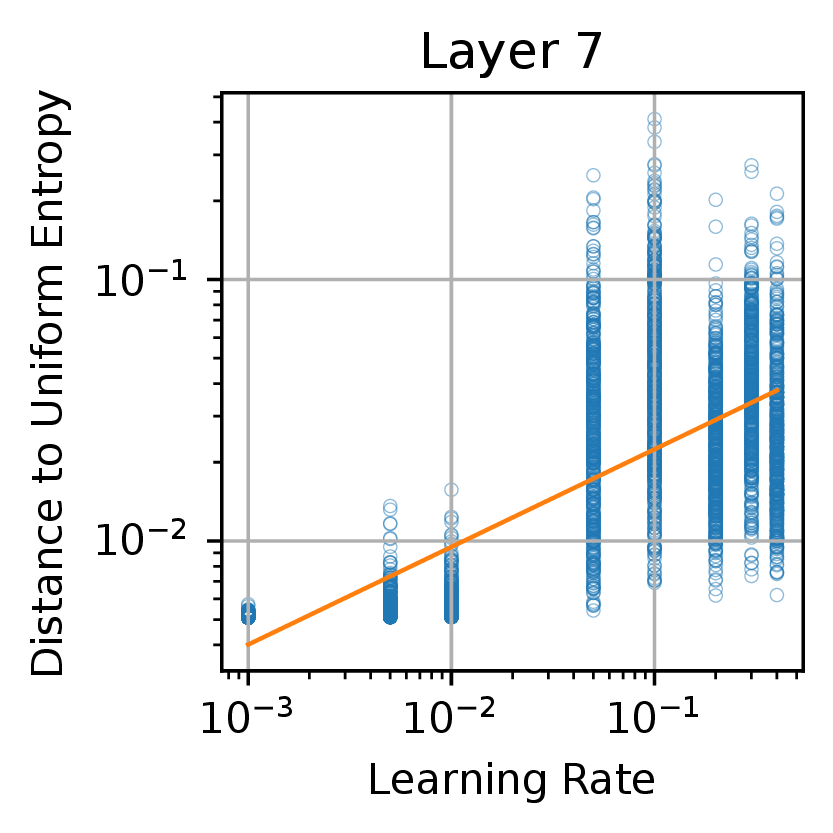}
    \caption{}
\end{subfigure}
\begin{subfigure}[b]{\bsize}
    \centering
    \includegraphics[width=\textwidth]{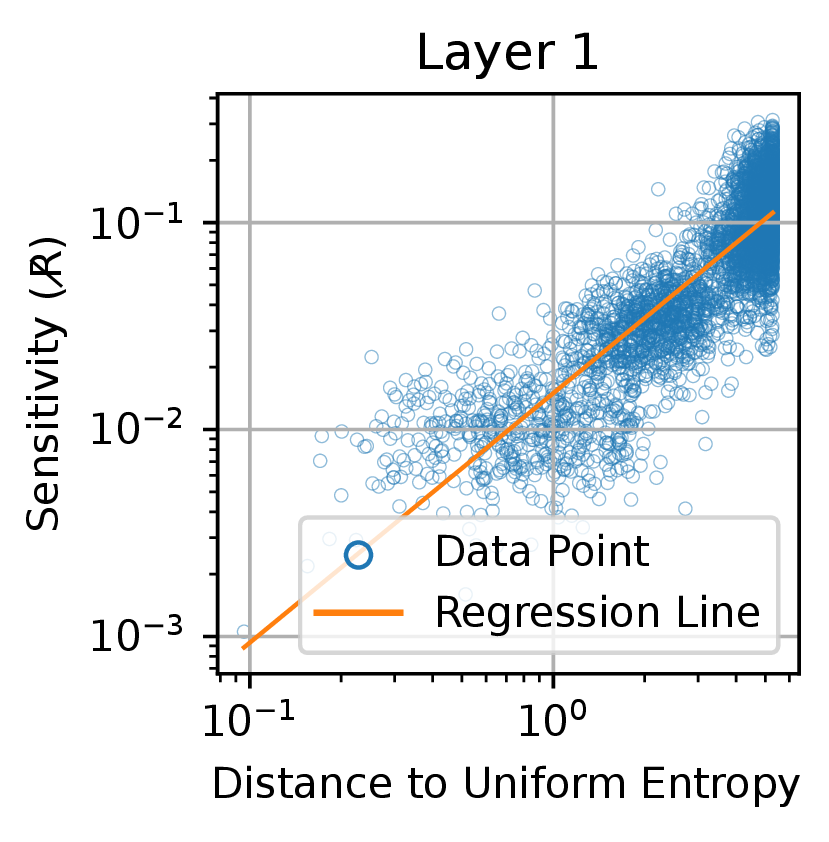}
    \caption{}
\end{subfigure}
\begin{subfigure}[b]{\bsize}
    \centering
    \includegraphics[width=\textwidth]{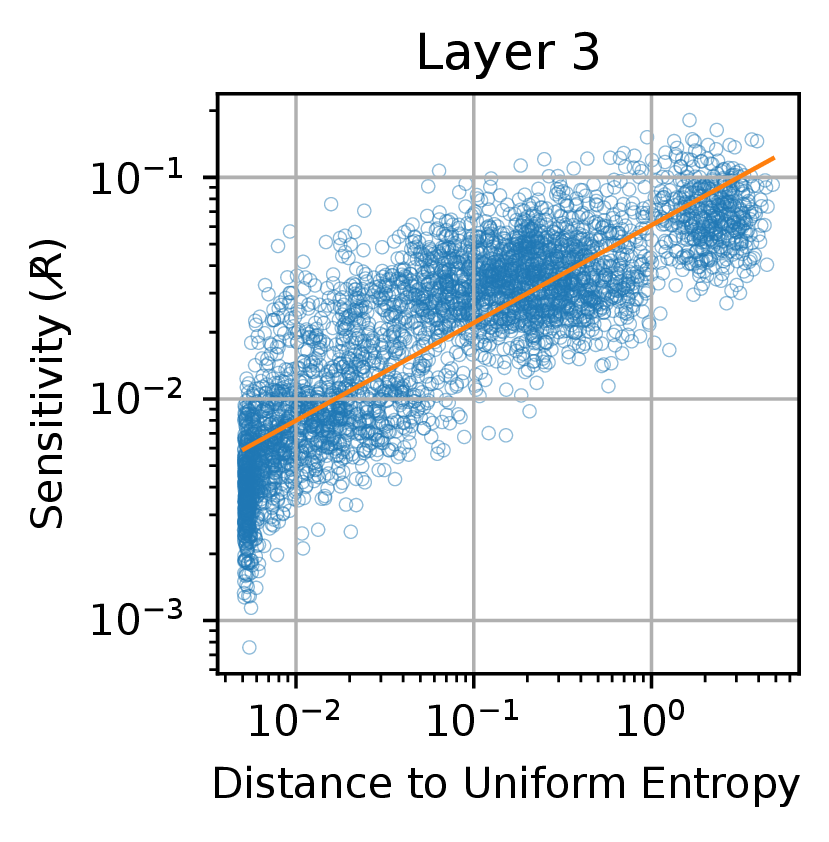}
    \caption{}
\end{subfigure}
\begin{subfigure}[b]{\bsize}
    \centering
    \includegraphics[width=\textwidth]{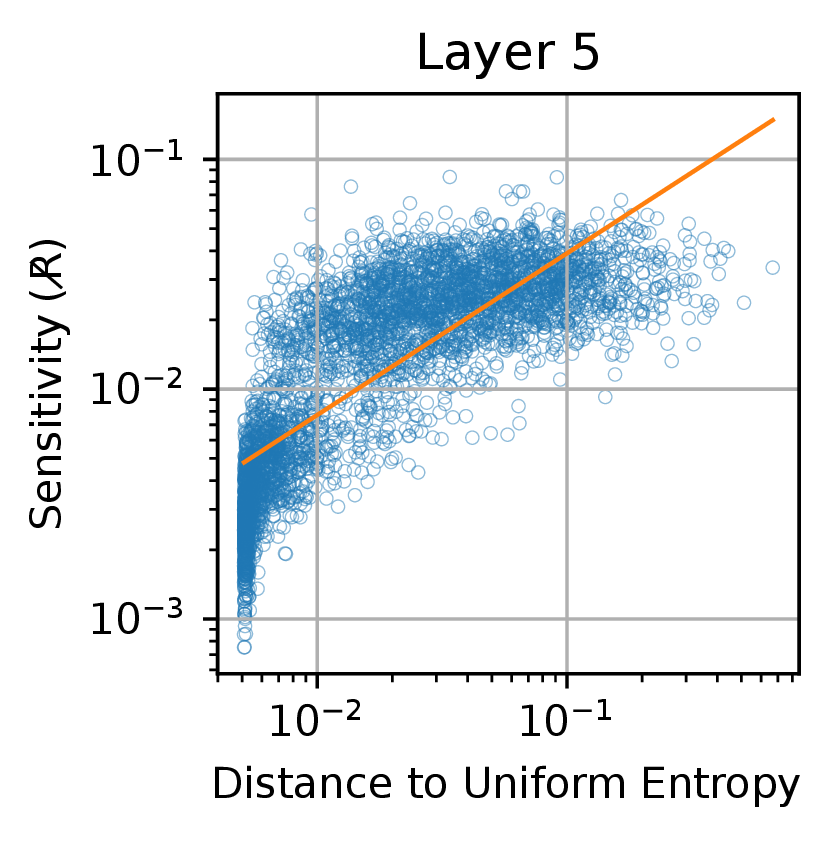}
    \caption{}
\end{subfigure}
\begin{subfigure}[b]{\bsize}
    \centering
    \includegraphics[width=\textwidth]{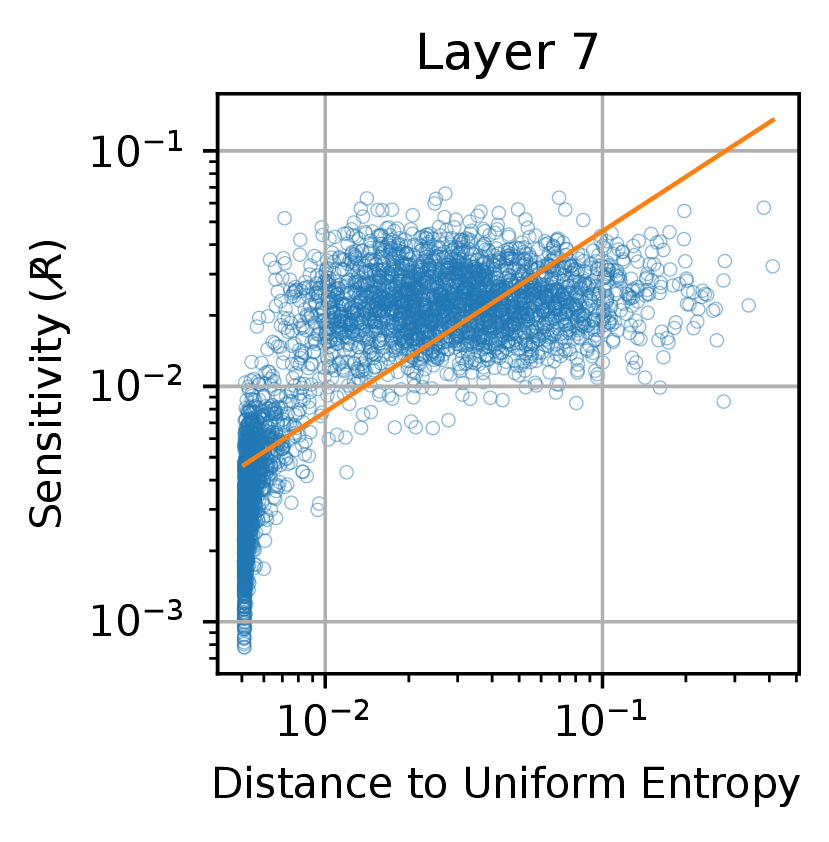}
    \caption{}
\end{subfigure}
\caption{
\textbf{Ablation; Layerwise Attention Entropy and Sensitivity in ViTs with Unnormalized \Kernelized-Attention (ELU+1 Activation).}
This figure shows the relationship between the deviation of attention entropy from that of the uniform distribution (Distance to Uniform Entropy), learning rate (top row), and robustness (bottom row) for a linear ViT-B/16 with ELU$(x)$+1 activation, trained on Imagenette. We are visualizing 500 samples per learning rate in this figure. Similar to~\cref{fig:att-ent-vs-lr-appendix}, higher learning rates yield larger distance to uniform entropy (top row). In the bottom row we observe that greater deviation from uniform attention corresponds to \textit{higher} sensitivity $\notR$. 
Comparing this with~\cref{fig:lin-att-gelu-ent-vs-lr-appendix}, we realize that the robustness of attention-based attribution for ViTs with unnormalized \kernelized attention modules \textit{is sensitive to the choice of activation function}.}
\label{fig:lin-att-elup1-ent-vs-lr-appendix}
\end{figure*}


\begin{figure*}[ht]
\centering
\begin{subfigure}[b]{\bsize}
    \centering
    \includegraphics[width=\textwidth]{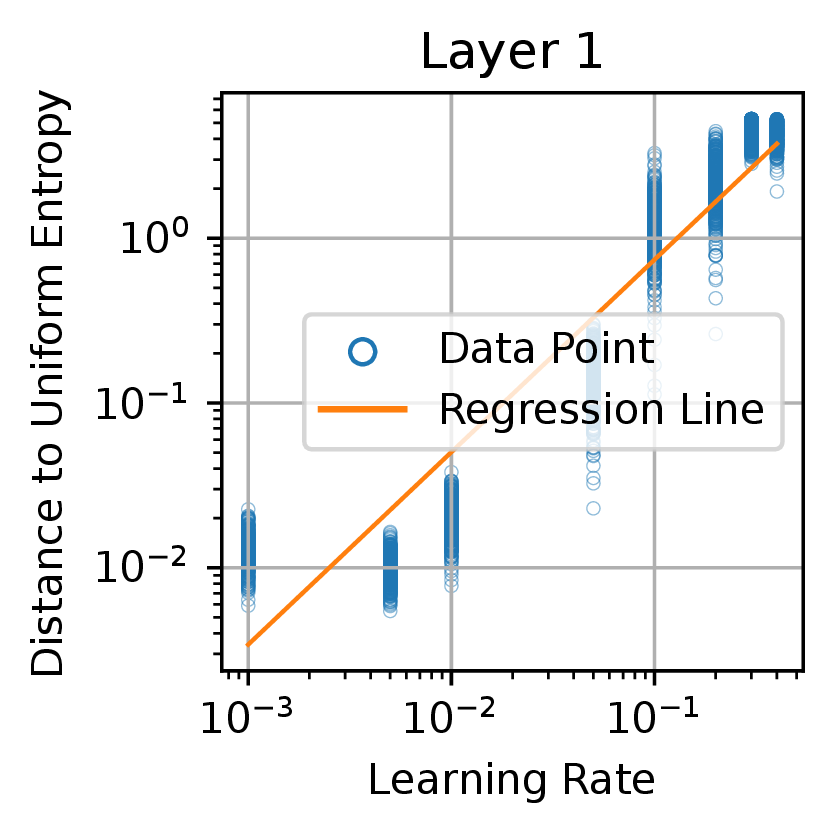}
    \caption{}
\end{subfigure}
\begin{subfigure}[b]{\bsize}
    \centering
    \includegraphics[width=\textwidth]{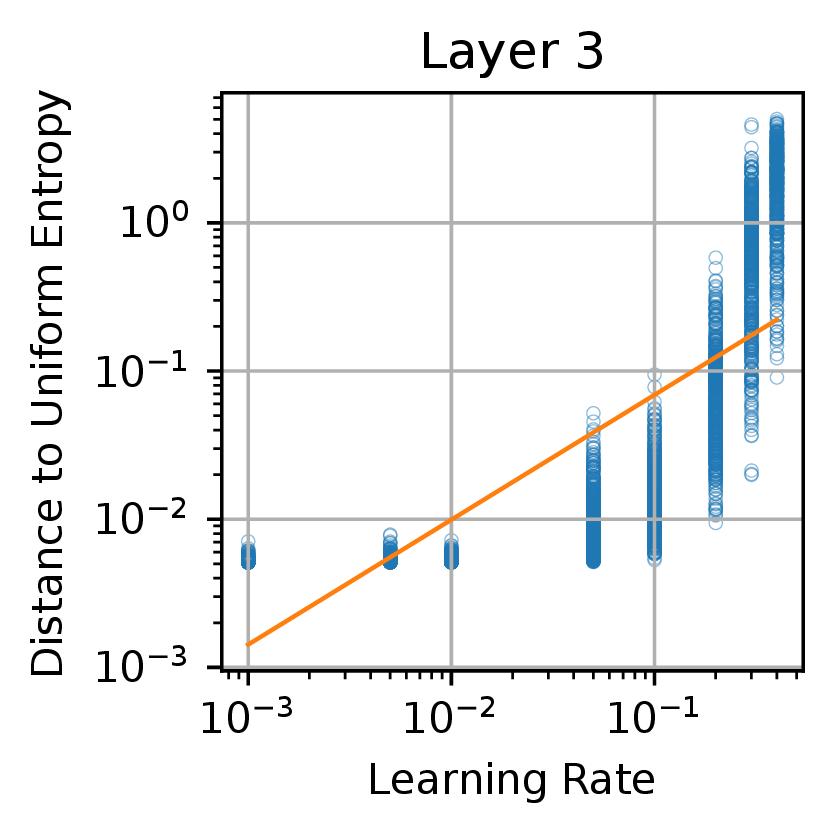}
    \caption{}
\end{subfigure}
\begin{subfigure}[b]{\bsize}
    \centering
    \includegraphics[width=\textwidth]{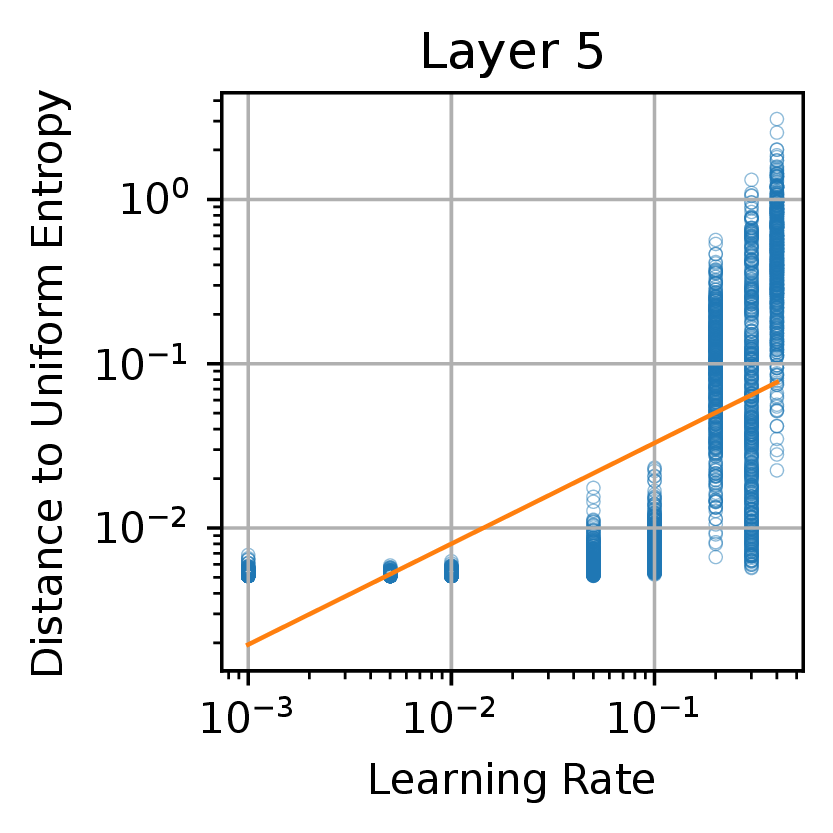}
    \caption{}
\end{subfigure}
\begin{subfigure}[b]{\bsize}
    \centering
    \includegraphics[width=\textwidth]{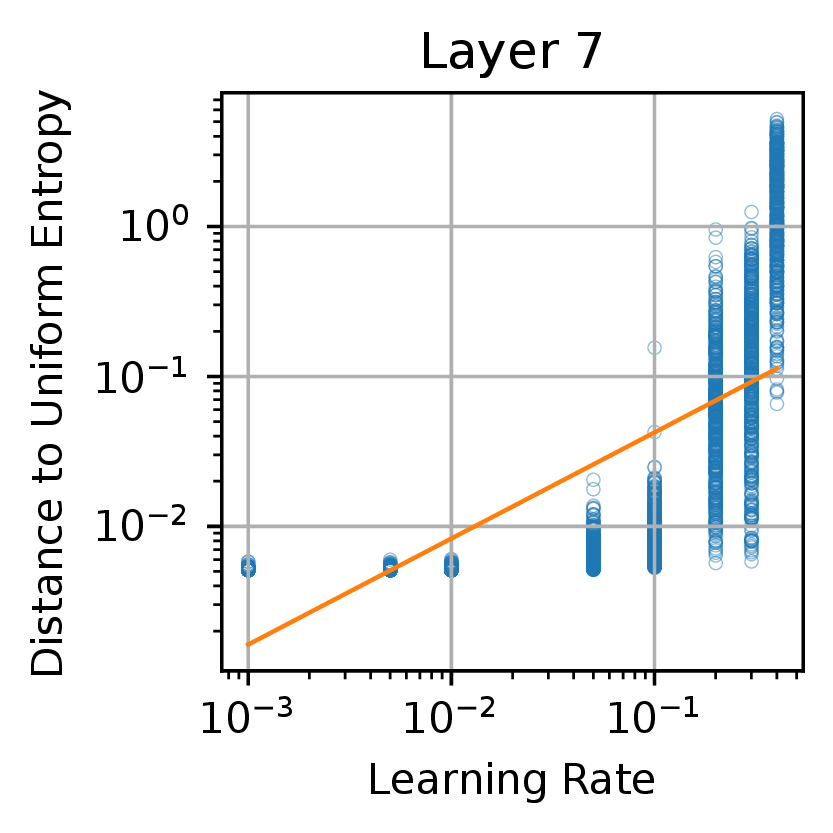}
    \caption{}
\end{subfigure}
\begin{subfigure}[b]{\bsize}
    \centering
    \includegraphics[width=\textwidth]{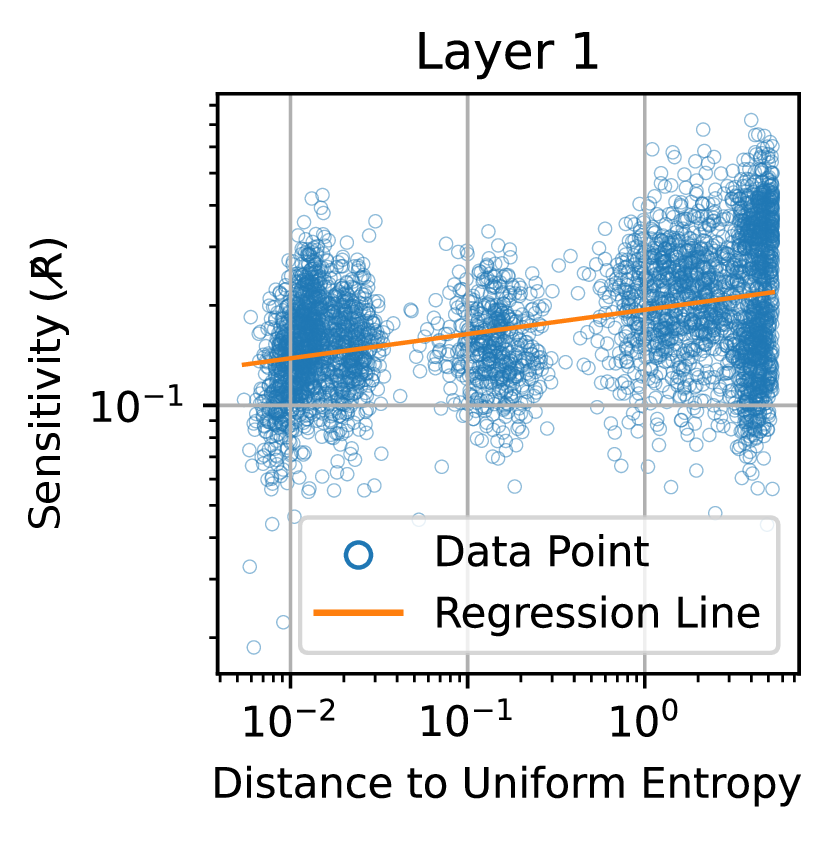}
    \caption{}
\end{subfigure}
\begin{subfigure}[b]{\bsize}
    \centering
    \includegraphics[width=\textwidth]{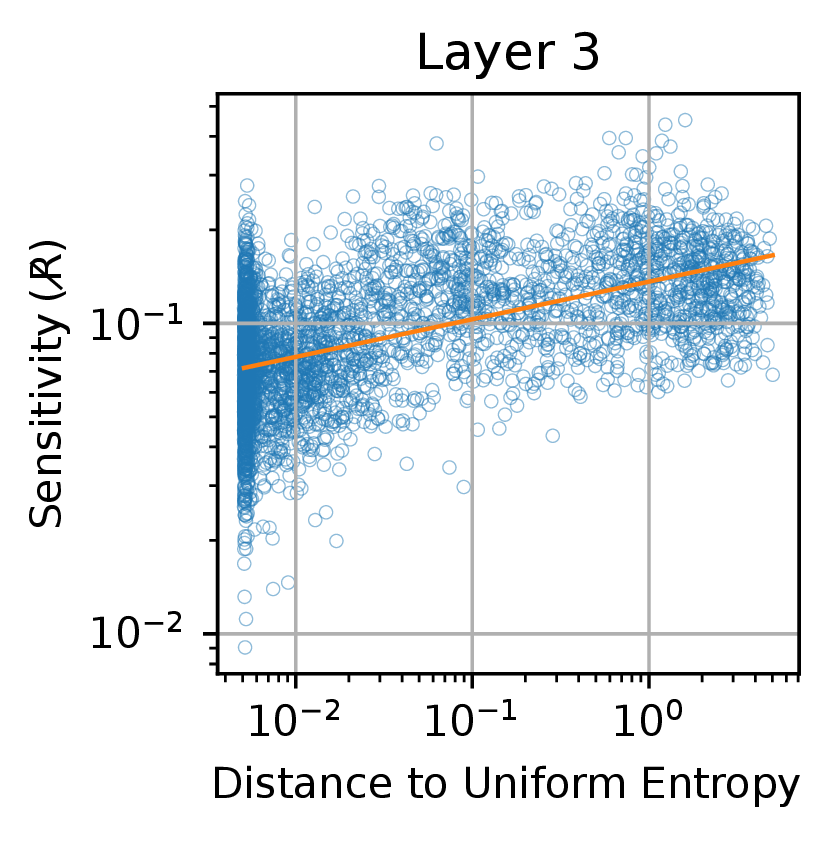}
    \caption{}
\end{subfigure}
\begin{subfigure}[b]{\bsize}
    \centering
    \includegraphics[width=\textwidth]{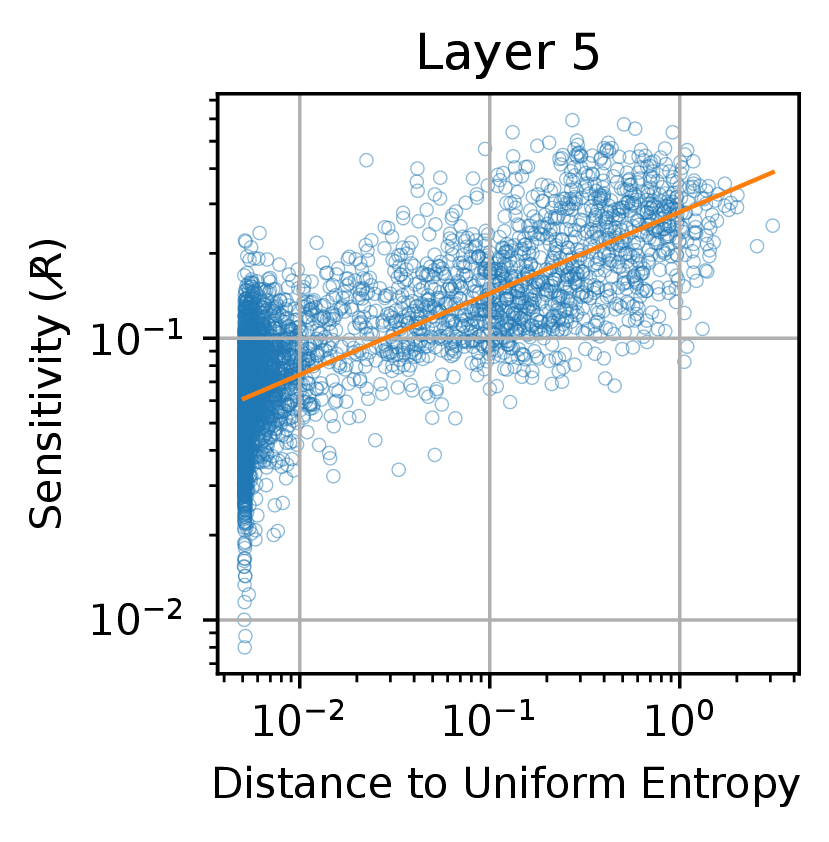}
    \caption{}
\end{subfigure}
\begin{subfigure}[b]{\bsize}
    \centering
    \includegraphics[width=\textwidth]{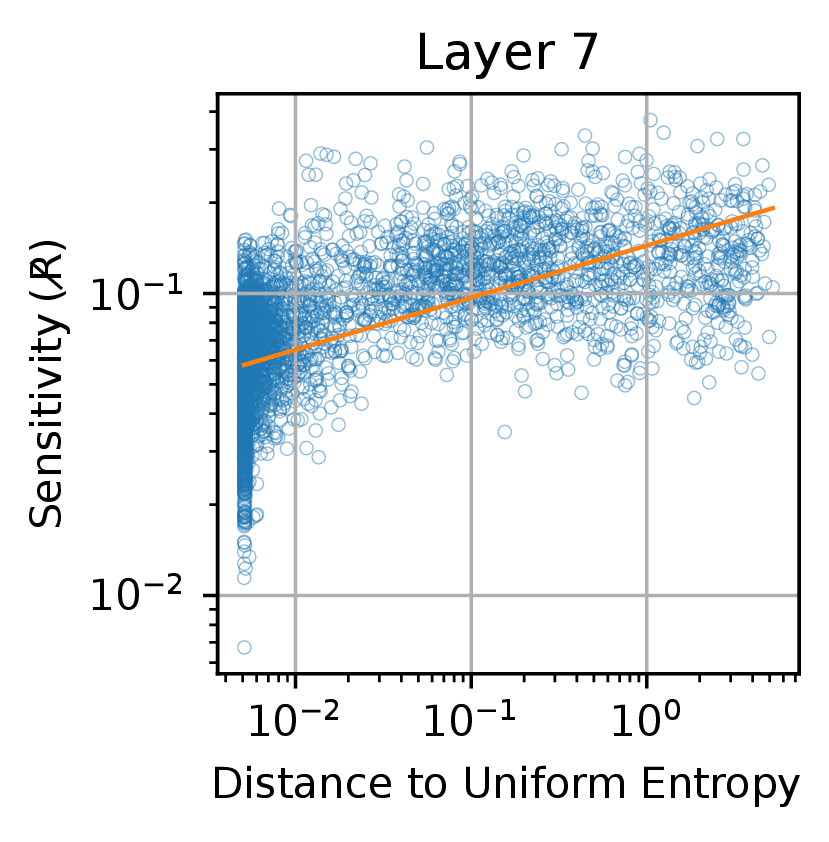}
    \caption{}
\end{subfigure}
\caption{
\textbf{Ablation; Layerwise Attention Entropy and Sensitivity in ViTs with Unnormalized \Kernelized-Attention (ReLU Activation).}
This figure shows the relationship between the deviation of attention entropy from that of the uniform distribution (Distance to Uniform Entropy), learning rate (top row), and robustness (bottom row) for a linear ViT-B/16 with RELU activation, trained on Imagenette. We are visualizing 500 samples per learning rate in this figure. Similar to~\cref{fig:att-ent-vs-lr-appendix}, higher learning rates yield larger distance to uniform entropy (top row). In the bottom row we observe that greater deviation from uniform attention corresponds to \textit{higher} sensitivity $\notR$. 
Comparing this with~\cref{fig:lin-att-gelu-ent-vs-lr-appendix}, we realize that the robustness of attention-based attribution for ViTs with \kernelized attention modules \textit{is sensitive to the choice of activation function}.}
\label{fig:lin-att-relu-ent-vs-lr-appendix}
\end{figure*}


\begin{figure*}[ht]
\centering
\begin{subfigure}[b]{\bsize}
    \centering
    \includegraphics[width=\textwidth]{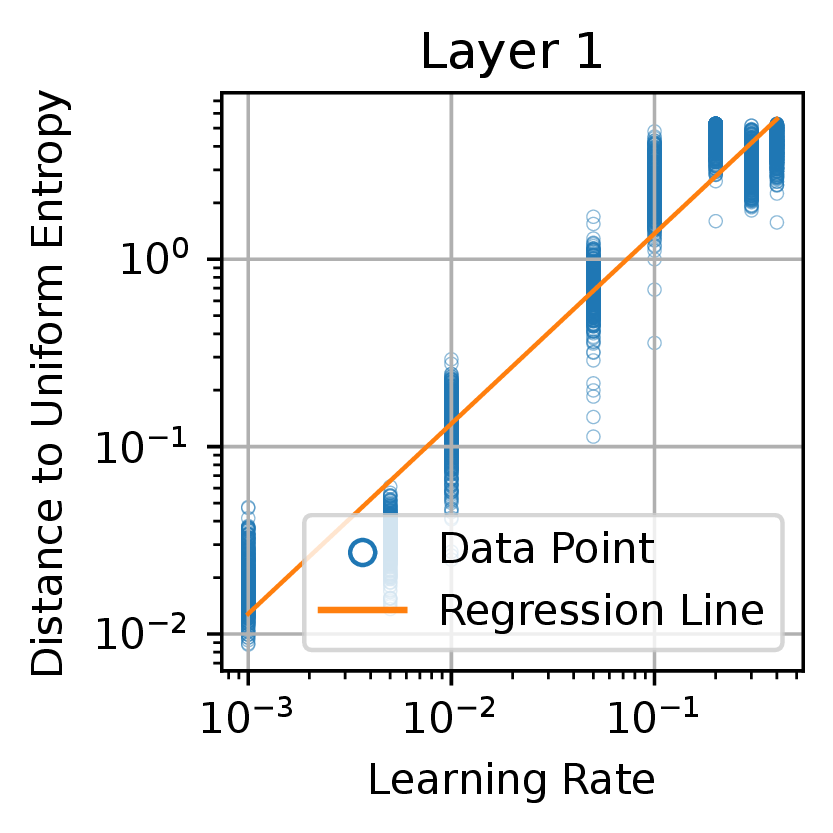}
    \caption{}
\end{subfigure}
\begin{subfigure}[b]{\bsize}
    \centering
    \includegraphics[width=\textwidth]{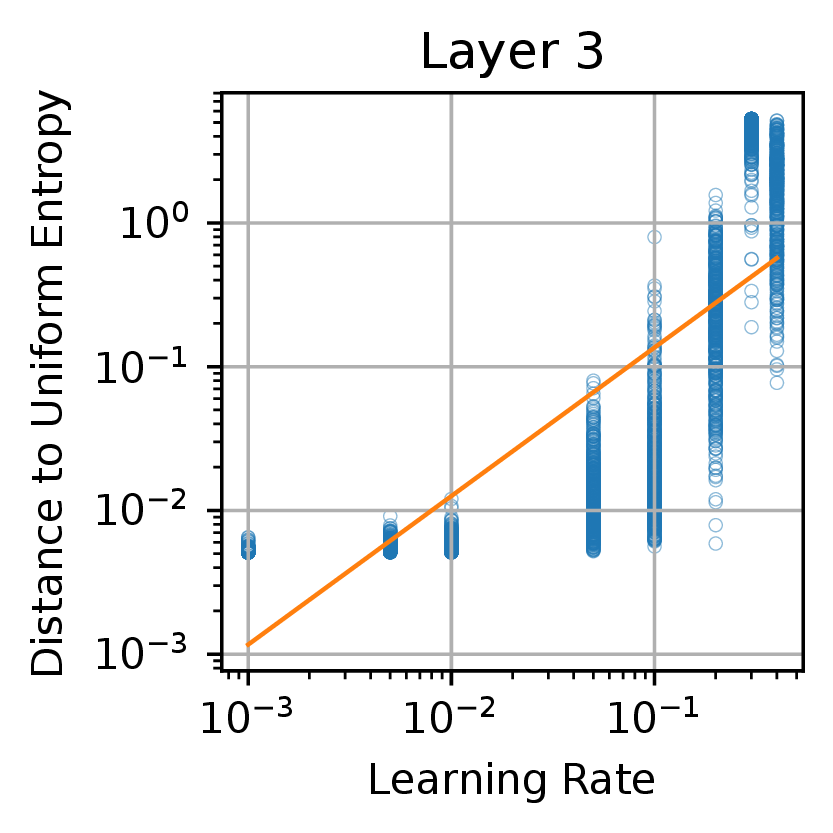}
    \caption{}
\end{subfigure}
\begin{subfigure}[b]{\bsize}
    \centering
    \includegraphics[width=\textwidth]{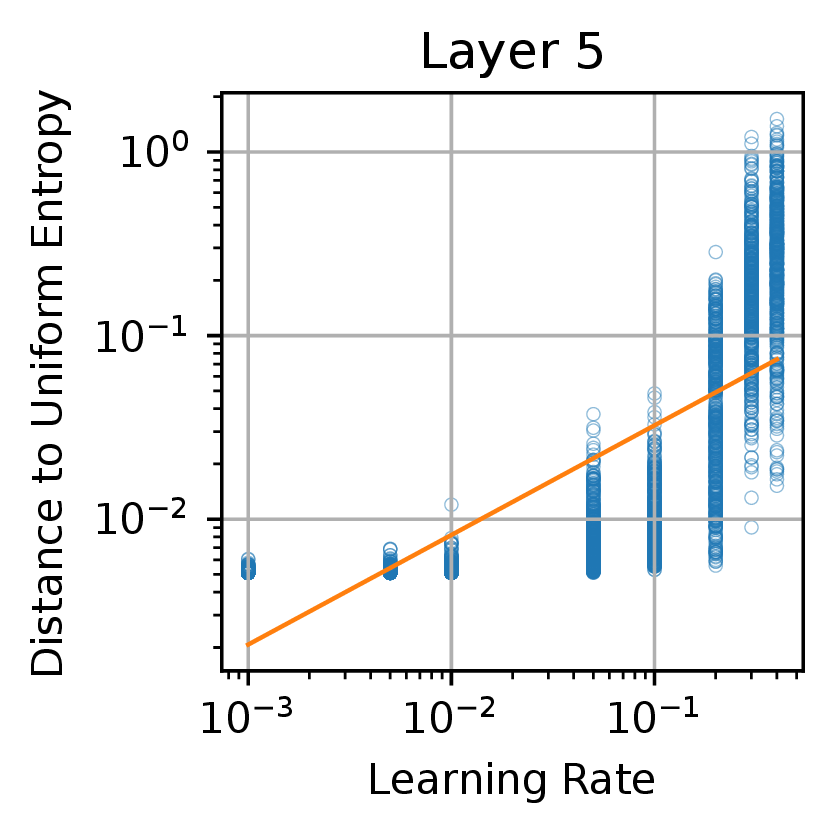}
    \caption{}
\end{subfigure}
\begin{subfigure}[b]{\bsize}
    \centering
    \includegraphics[width=\textwidth]{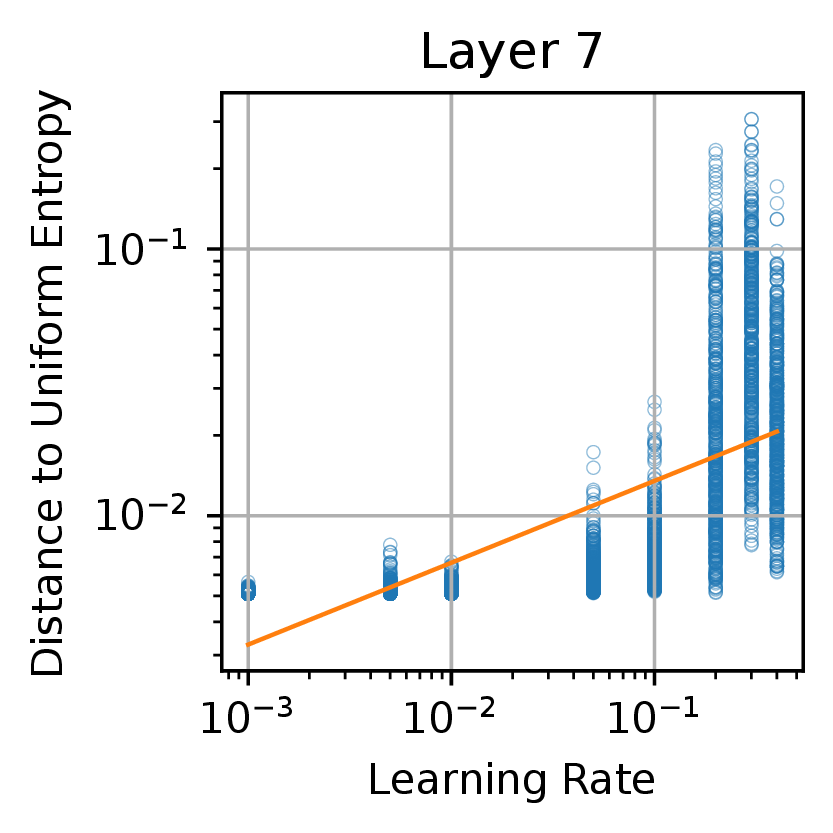}
    \caption{}
\end{subfigure}
\begin{subfigure}[b]{\bsize}
    \centering
    \includegraphics[width=\textwidth]{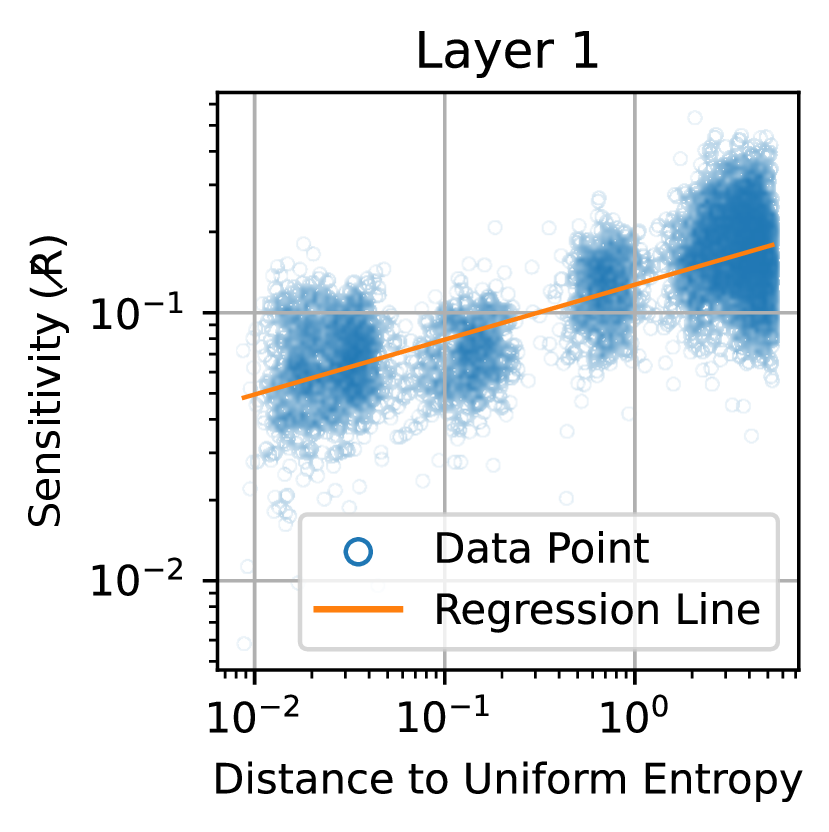}
    \caption{}
\end{subfigure}
\begin{subfigure}[b]{\bsize}
    \centering
    \includegraphics[width=\textwidth]{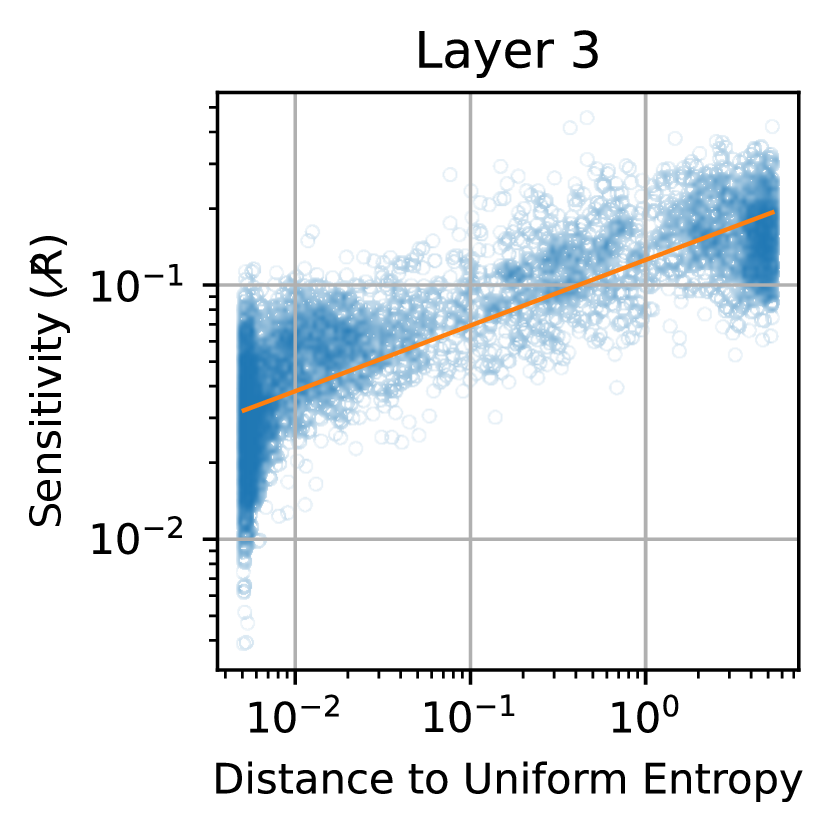}
    \caption{}
\end{subfigure}
\begin{subfigure}[b]{\bsize}
    \centering
    \includegraphics[width=\textwidth]{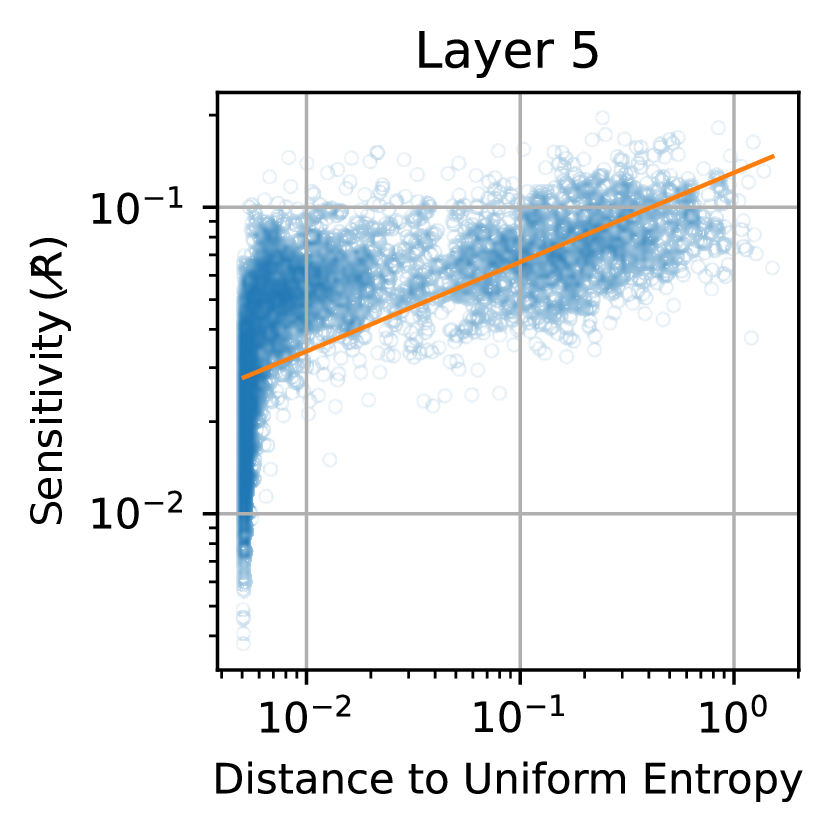}
    \caption{}
\end{subfigure}
\begin{subfigure}[b]{\bsize}
    \centering
    \includegraphics[width=\textwidth]{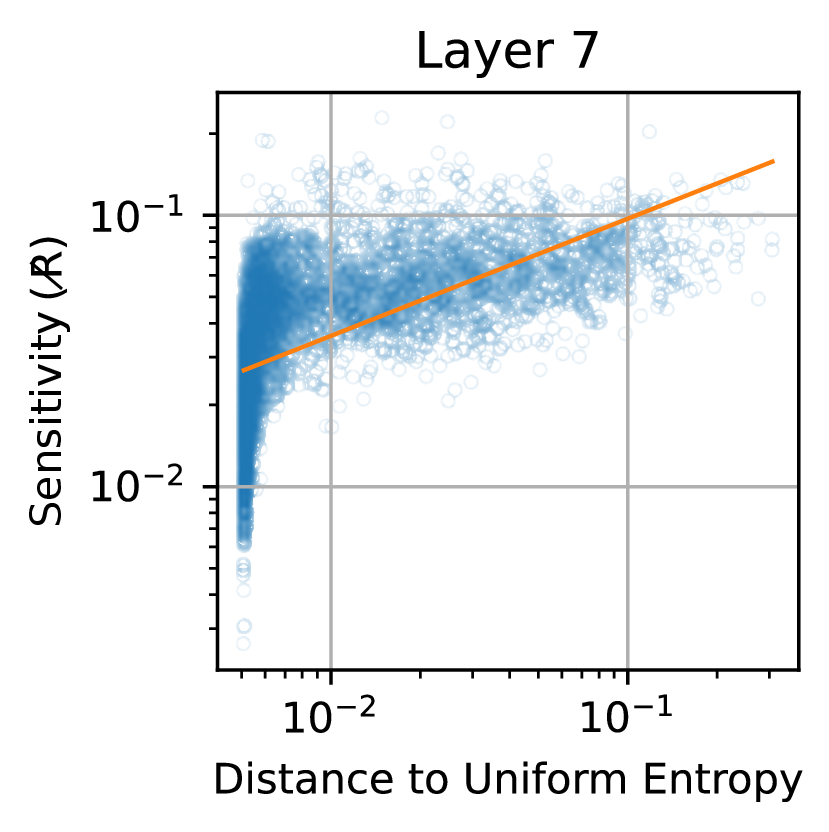}
    \caption{}
\end{subfigure}
\caption{
\textbf{Ablation; Layerwise Attention Entropy and Sensitivity in ViTs with Unnormalized \Kernelized-Attention (SoftPlus($\beta$=3.0) Activation).}
This figure shows the relationship between the deviation of attention entropy from that of the uniform distribution (Distance to Uniform Entropy), learning rate (top row), and robustness (bottom row) for a linear ViT-B/16 with SoftPlus$(\beta):=\frac{1}{\beta}\ln(\exp(\beta x)+1)$ activation, trained on Imagenette. We are visualizing 500 samples per learning rate in this figure. Similar to~\cref{fig:att-ent-vs-lr-appendix}, higher learning rates yield larger distance to the uniform entropy (top row). In the bottom row we observe that greater deviation from uniform attention corresponds to \textit{higher} sensitivity $\notR$. 
Comparing this with~\cref{fig:lin-att-gelu-ent-vs-lr-appendix}, we realize that the robustness of attention-based attribution for ViTs with \kernelized attention modules \textit{is sensitive to the choice of activation function}.}
\label{fig:lin-att-softplus-ent-vs-lr-appendix}
\end{figure*}

\end{document}